\def\eqref#1{equation~\ref{#1}}
\def\1{\bm{1}}
\DeclareMathAlphabet{\mathsfit}{\encodingdefault}{\sfdefault}{m}{sl}
\SetMathAlphabet{\mathsfit}{bold}{\encodingdefault}{\sfdefault}{bx}{n}
\definecolor{lightgraybg}{RGB}{245,245,245}
\newtheorem{theorem}{Theorem}
\definecolor{Gray}{gray}{0.9}
\newcommand{\Heaviside}[1]{\ensuremath{u\left(#1\right)}}  
\definecolor{battleshipgrey}{rgb}{0.52, 0.52, 0.51}
\newcommand{\method}{{\fontfamily{lmtt}\selectfont\emph{\textsc{URDP}}}\xspace}
\title{Uncertainty-aware Reward Design Process}
\author{\name Yang Yang \email yangyang2025@ia.ac.cn \\
      \addr $C^2DL$ , Institute of Automation, Chinese Academy of Science
      \AND
      \name Xiaolu Zhou \email 202321130108@mail.bnu.edu.cn \\
      \addr Beijing Normal University
      \AND
      \name Bosong Ding  \email B.Ding\_3@tilburguniversity.edu \\
      \addr Tilburg University 
      \AND
      \name Miao Xin\thanks{Corresponding authors.} \email miao.xin@ia.ac.cn\\
      \addr $C^2DL$ , Institute of Automation, Chinese Academy of Science }
\begin{document}

\maketitle

\begin{abstract}

Designing effective reward functions is a cornerstone of reinforcement learning (RL), yet it remains a challenging process due to the inefficiencies and inconsistencies inherent in conventional reward engineering methodologies.  
Recent advances have explored leveraging large language models (LLMs) to automate reward function design. However, their suboptimal performance in numerical optimization often yields unsatisfactory reward quality, while the evolutionary search paradigm demonstrates inefficient utilization of simulation resources, resulting in prohibitively lengthy design cycles with disproportionate computational overhead. 
To address these challenges, we propose the Uncertainty-aware Reward Design Process (\method), a novel framework that integrates large language models to streamline reward function design and evaluation in RL environments. 
\method quantifies candidate reward function uncertainty based on the self-consistency analysis, enabling simulation-free identification of ineffective reward components while discovering novel reward components. 
Furthermore, we introduce uncertainty-aware Bayesian optimization (UABO), which incorporates uncertainty estimation to significantly enhance hyperparameter configuration efficiency. 
Finally, we construct a bi-level optimization architecture by decoupling the reward component optimization and the hyperparameter tuning. \method orchestrates synergistic collaboration between the reward logic reasoning of the LLMs and the numerical optimization strengths of the Bayesian Optimization. 
We conduct a comprehensive evaluation of \method across 35 diverse tasks spanning three benchmark environments: IsaacGym, Bidexterous Manipulation, and ManiSkill2. Our experimental results demonstrate that \method not only generates higher-quality reward functions but also achieves significant improvements in the efficiency of automated reward design compared to existing approaches. 

\end{abstract}

\section{Introduction}
\label{sect:introduction}

\textcolor{black}{
In reinforcement learning (RL), the design of reward functions serves as a pivotal determinant for successfully training agents in sequential decision-making tasks. These rewards guide the learning process by shaping agent behaviors to accomplish complex objectives across diverse environments. While conventional approaches such as reward engineering and inverse reinforcement learning (IRL)~\cite{arora2021survey} established early research paradigms, they remain fundamentally constrained by their reliance on human expertise and the availability of high-quality demonstration data, particularly in domains like robotic skill acquisition~\cite{zitkovich2023rt}. Recent advancements in large language models (LLMs)~\cite{Radford2019language,brown2020language, liu2024deepseek} have demonstrated remarkable capabilities in natural language understanding, code generation, and contextual optimization, thereby introducing a novel paradigm for automated reward function design. }

However, current automated reward design methodologies based on LLMs present two fundamental challenges~\cite{cao2024survey}. 
First, the \textbf{efficiency} of reward function design remains suboptimal. Existing approaches rely heavily on simulation-based training processes~\cite{ma2024eureka} to evaluate reward function efficacy, which involves extensive and often redundant evaluations, leading to significant computational overhead without commensurate benefits.
Second, the \textbf{performance} of LLM-generated reward functions frequently falls short of expectations. During the optimization process, LLMs fail to fully leverage their reasoning capabilities, resulting in reward functions that inadequately capture the intended task objectives.
Given that the design efficiency and generation quality of reward functions are closely tied to the speed and effectiveness of policy learning, a key research question emerges: \textit{How can we enhance the efficiency of obtaining high-quality reward functions?} 

In this paper, we introduce the \textbf{Uncertainty-aware Reward Design Process} (\method), a novel framework for automated reward function generation. 
First, we propose a method to quantify the uncertainty of generated samples, which enables the selective elimination of redundant reward function sampling and simulation. This approach is grounded in a key observation regarding self-consistency~\cite{wang2022COT-SC}: LLMs exhibit higher output consistency when handling well-defined tasks, allowing for more efficient sampling strategies. 
Second, we identify a critical limitation in current LLM-based evolutionary search approaches, i.e., their suboptimal performance in numerical optimization. To address this, we decouple reward component formulation from reward intensity optimization, delegating the latter to a dedicated numerical optimization module. 
Specifically, we propose a novel Bayesian optimization~\cite{snoek2012practical} approach incorporating uncertainty distribution priors, which significantly accelerates convergence in this black-box optimization task. 
Experimental results demonstrate that \method surpasses state-of-the-art methods in both reward quality and computational efficiency, establishing a new benchmark for automated reward design. 

Overall, our contributions are summarized as follows: (1) We propose \method, a novel framework for automated reward function design in reinforcement learning. The framework employs an alternating bi-level optimization process that decouples reward component design from hyperparameter optimization, thereby significantly enhancing reward function performance. (2) We introduce a self-consistency-based reward uncertainty quantification method for the reward component design process. This approach not only dramatically improves the efficiency of reward component validation but also facilitates the discovery of novel reward logic. (3) We present an Uncertainty-aware Bayesian optimization algorithm that substantially increases the efficiency of hyperparameter search. (4) Our comprehensive evaluation across 35 tasks spanning 3 distinct benchmarks demonstrates that \method consistently outperforms existing methods in both reward function generation efficiency and final policy performance, as evidenced by rigorous quantitative analysis.

\section{Related work}
\label{related_work}

\textcolor{black}{
\textbf{Reward code generation.}
The automation of reward code generation has been a critical area of research, aiming to simplify and improve the process of defining task-specific reward functions for RL. 
As the dual formulation of RL problems, inverse reinforcement learning (IRL) methods~\cite{arora2021survey} have been extensively investigated for reward function acquisition. However, these approaches are fundamentally limited by their dependence on demonstration data, which severely constrains their scalability in practical applications. 
Recently, LLM-based reward design methodologies~\cite{kwon2023reward, yu2023language, ma2024eureka, xie2024text2reward} have demonstrated promising potential, offering a paradigm shift in automated reward function development. 
L2R~\cite{yu2023language} introduced a two-stage LLM-prompting framework to generate templated rewards, bridging high-level language instructions with low-level robot actions. 
Eureka~\cite{ma2024eureka} leveraged the zero-shot and in-context learning capabilities of advanced LLMs to perform evolutionary optimization over reward code. This method demonstrated the potential of LLMs to generate rewards without task-specific prompting or predefined templates, enabling agents to acquire complex skills via RL.
Text2Reward~\cite{xie2024text2reward} extended this line of work by generating shaped, dense reward functions as executable programs grounded in compact environment representations. Unlike sparse reward codes or constant reward functions, Text2Reward produces interpretable, dense reward codes capable of iterative refinement with human feedback. 
Despite their success, these methods overlook two key limitations: (1) insufficient reasoning capability for reward logic derivation, and (2) inadequate exploration of novel reward components. Our proposed framework decouples confounding factors in reward design to reduce cognitive load and incorporates uncertainty quantification to guide LLMs toward more focused analysis and refinement of reward logic relevance. }

\textbf{Hybrid optimization.} Recent advances in large language models (LLMs)~\cite{openai2023gpt4, liu2024deepseek} have demonstrated remarkable progress in text-based complex reasoning tasks~\cite{li2025system}. Through techniques such as self-improvement~\cite{song2023self}, multi-path reasoning~\cite{wan2024alphazero}, and reward modeling~\cite{zhong2025comprehensive}, LLMs exhibit substantial potential in contextual comprehension~\cite{openai2023gpt4}, code generation~\cite{yu2024outcome}, and task planning~\cite{hao2023reasoning}. However, their capabilities in deep logical reasoning~\cite{cheng2025empowering}, particularly in mathematical and numerical optimization domains~\cite{yan2025phd}, remain underexplored, with significant performance gaps persisting. Several studies attempt to employ LLMs as meta-optimizers for diverse optimization problems~\cite{yanglarge}. Yet, due to their inherently discrete representation nature, LLMs' effectiveness in high-dimensional, continuous numerical reasoning tasks requires further investigation~\cite{assran2025v}. In reinforcement learning, the reward design inherently involves multiple types of optimization problems. Unlike completely LLM-based evolutionary search approaches~\cite{ma2024dreureka, xie2024text2reward}, our work introduces black-box numerical optimization tools to compensate for LLMs' limitations in continuous numerical optimization.

\noindent\textbf{Uncertainty quantification.}
Uncertainty quantification (UQ) serves as a fundamental component for reliable automated decision-making and has been extensively studied in domains such as Bayesian inference~\cite{gal2015dropout,foong2020on}. Recent advances have investigated uncertainty quantification in black-box language models \cite{liu2025uncertainty, geng2024survey, kuhn2023semantic, lingenerating}, yielding various approaches including token-level entropy methods~\cite{kadavath2022language}, conformal prediction-based techniques~\cite{su2024api}, and consistency-based frameworks~\cite{lingenerating}. While existing methods primarily leverage uncertainty estimation to enhance LLM interpretability~\cite{ahdritz2024distinguishing} and mitigate hallucination risks \cite{shorinwa2024survey, mohri2024language}, our methodology not only actively quantifies prediction uncertainty in LLMs but also strategically leverages this uncertainty as the foundational mechanism for both simulation-free reward function design and efficient numerical optimization. Furthermore, we identify and characterize a significant correlation between reward component uncertainty and the discovery of novel reward formulations. This targeted uncertainty quantification framework represents a substantive methodological contribution that enables more effective and reliable reward design for reinforcement learning tasks.

\section{Preliminary: problem setup and notations}
\label{sect:preliminary}

\textcolor{black}{
\textbf{Reinforcement learning (RL)} tasks can be modeled as Markov Decision Processes (MDPs) defined by the tuple $\langle \mathcal{S}, \mathcal{A}, P, R, \gamma \rangle$, where $\mathcal{S}$ is the state space, $\mathcal{A}$ is the action space, $P(s'|s, a)$ is the transition probability, $R(s, a)$ is the reward function, and $\gamma$ is the discount factor. }  

\textcolor{black}{
\textbf{Reward design problem (RDP)}~\cite{singh2009rewards}. Despite its superior interpretability, designing rewards \( R(s, a) \) represented in code form is critical to aligning agent behavior with task objectives but involves challenges such as managing redundancies, inconsistencies, and uncertainties during reward function generation. 
Following the previous works~\cite{song2023self, ma2024eureka, xie2024text2reward}, the reward ($R$) is represented as a function of the reward components ($\mathbf{r}$) and the reward intensity ($\mathbf{\theta}$)
\begin{equation}
    R = f(\mathbf{r}, \mathbf{\theta}).  
\end{equation} 
Conventionally, evaluating the quality of a reward function candidate necessitates computationally expensive simulations (RL training). In this work, we aim to simultaneously maximize reward function performance while minimizing the associated simulation cost throughout the automatic reward design process. }

\textcolor{black}{
\textbf{Bayesian optimization (BO)} is a sequential design strategy for global optimization of black-box functions~\cite{shahriari2015taking, snoek2012practical}. 
Given a black-box function $f: \mathbb{X} \rightarrow \mathbb{R} $, Bayesian optimization aims to find an input $\mathbf{x}^* \in argmin_{x \in \mathbb{X}}f(\mathbf{x})$  that globally minimizes $f$.  
It places a prior $p(f)$ over the objective function $f$ to form a surrogate model (usually a Gaussian process~\cite{wang2024pre}). 
An \textit{acquisition function} (e.g., the Expected Improvement (EI)~\cite{ament2023unexpected}) $a_{p(f)} : \mathbb{X} \rightarrow \mathbb{R }$ strategically determines the direction of the search for sampling points. 
The algorithm iteration proceeds in the following three steps: (1) find the most promising $\mathbf{x}_{n+1} \sim argmax_{a_p(\mathbf{x})}$; (2) evaluate the function $y_{n+1} \sim f(\mathbf{x}_{n+1})+\mathcal{N}(0,\sigma^2)$ and update the set of historical observations $\mathcal{D}_n = (x_j , y_j)_{j=1 \dots n}$ by adding the point $(\mathbf{x}_{n+1}, y_{n+1})$, and (3)  update $p(f|\mathcal{D}_{n+1})$ and $a_{p(f|\mathcal{D}_{n+1})}$. } 

\section{Methods}
\label{methods}

\textcolor{black}{
The Uncertainty-aware Reward Design Process (\method) framework incorporates three fundamental elements: (1) Decoupling of the reward component and reward intensity design processes, (2) Reward component generation based on uncertainty quantification, and (3) Uncertainty-aware Bayesian optimization. 
Collectively, \method enhances reward design efficiency by minimizing redundant simulations while improving the quality of generated reward functions through the integration of numerical optimization techniques within the decoupled optimization framework. }

\begin{figure}[htb]
    \centering
    \includegraphics[width=1\linewidth]{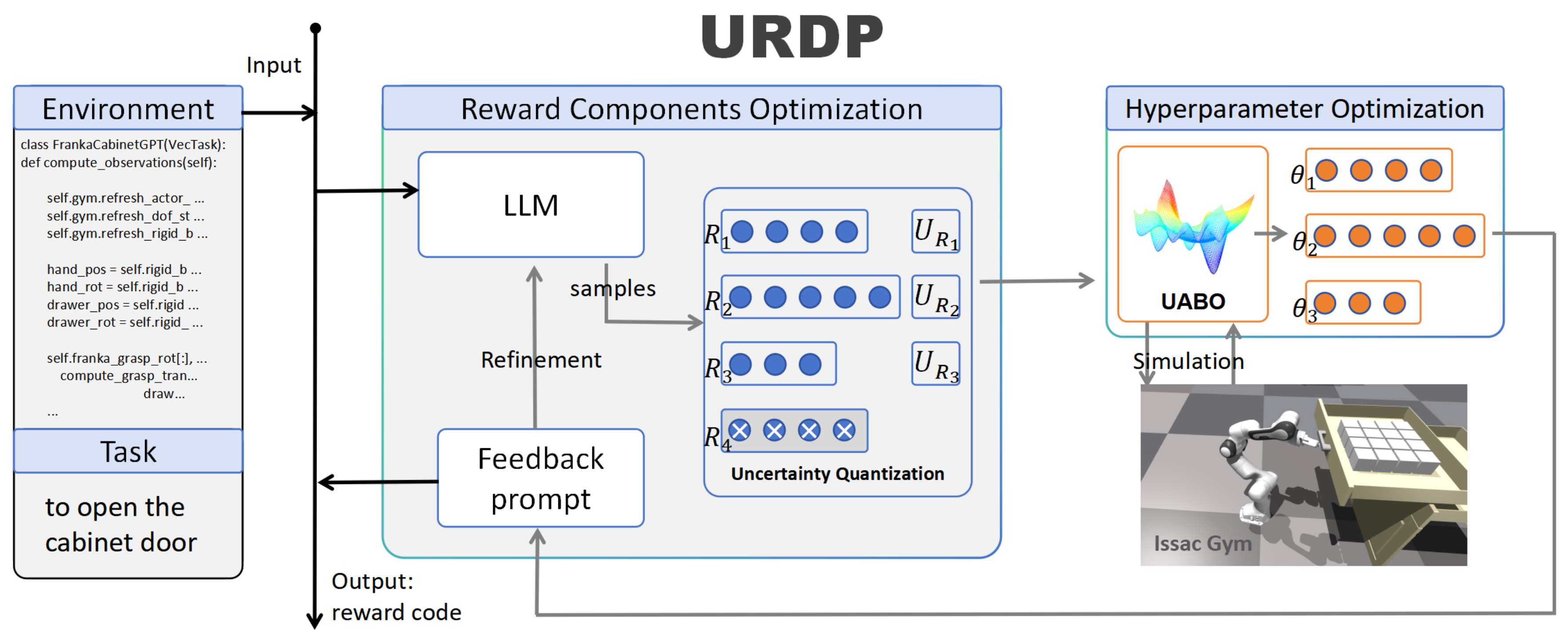}
    \caption{\textcolor{black}{\method implements an alternating bi-level iterative optimization framework for automated reward design problems (RDP). The outer-loop optimization employs LLMs to refine reward components, where uncertainty quantification significantly enhances sampling efficiency. Concurrently, the inner-loop optimization utilizes Uncertainty-Aware Bayesian Optimization (UABO) to determine optimal hyperparameter configurations for the reward components. The decoupled architecture strategically leverages the complementary strengths of LLMs in conceptual reward design and numerical optimization tools in precise parameter tuning, achieving synergistic improvements in both final policy performance and computational efficiency.}
    }
    \label{fig:method}
\end{figure}

\subsection{Decoupled Reward Generation and Hyperparameter Optimization}
\label{sect:decoupling}

Large language models (LLMs) possess extensive commonsense knowledge about task rewards, enabling them to surpass human-level performance in designing reward components~\cite{yu2023language, ma2024eureka, xie2024text2reward}. However, their capability in black-box numerical optimization remains inferior to specialized numerical optimization algorithms (see Section \ref{sect:ablation} Abl-3 for an ablation study). Consequently, existing methods \cite{ma2024eureka} that conflate the optimization of reward components with reward intensities not only yield suboptimal numerical optimization results but also lead to insufficient attention being paid to the optimization of reward components in agent learning. 

\textbf{\method framework}. Building upon these observations, we proposes a decoupled reward function design process. As illustrated in Figure~\ref{fig:method}, \method implements a bi-level iterative optimization procedure where, given environment specifications and task descriptions, the agent first samples multiple reward functions from the LLM in the outer loop, ranks them using uncertainty quantification metrics, and filters out redundant and potentially unreliable rewards through a process termed \textit{Reward Code Sampling with Uncertainty Screening}. Subsequently, the agent invokes numerical optimization tools in the inner loop to determine optimal reward intensity hyperparameters for the current reward configuration through black-box optimization and simulation-based evaluation. Finally, the agent evaluates the feasibility of current reward components and provides improvement feedback to refine the reward components. In essence, the outer loop optimizes the reward components and the reward logic while the inner loop tunes reward intensity hyperparameters, with their alternating optimization progressively converging toward optimal reward functions. See Alg.~\ref{alg: method} for pseudocode.

\begin{algorithm}[h]
\caption{Uncertainty-aware Reward Design Process}\label{alg: method}
\KwIn{Task description \( T \), Environment code \( E \), LLM \( \mathcal{L} \).}
\KwOut{Optimized reward function \( R^* \).}
\ForEach{iteration $n \in N_{outer}$ }{
    Generate $K$ reward component samples \( \{r_{i,1}, r_{i,2}, \dots, r_{i,m}\}_{i \in K} \) using \( \mathcal{L}(T, E, prompt) \) \\
    Uncertainty Quantization: \( \{U(r_{i,1}), \dots, U(r_{i,m}) \} \) and \( U(R_i) \) \\
    Filter out redundancies and reserve $R_{i \in K^*}$ \\
    \ForEach{\( R_i \), \( t=1,2, \dots N_{inner}/U(R_i) \) }{
        Fit probabilistic model for $f( \{\theta_{i}\})$ on data $D_{i,t-1}$ \\ 
        Choose $\{\theta_{i}\}_t$ by maximizing the acquisition function \( uEI(\{\theta_{i}\}, \{U(r_i)\} ) \) \\ 
        Evaluate by simulation training \( y_{i,t} = f(\{\theta_{i}\}_t) \)   \\
        Augment the data $D_{i,t} = D_{i,t-1} \cup (\{\theta_{i}\}_t, y_{i,t}) $ \\ 
        Choose incumbent \( \{\theta_i^*\} \gets  argmax\{ y_{i,1}, \dots, y_{i,t} \} \) and \( y_i^*  \gets max\{ y_{i,1}, \dots, y_{i,t} \} \)
    }
    Refine \( prompt \) for the reward components
}
Choose optimal \( \{(r^*, \theta^*)\}  \gets argmax\{ y_1^*, \dots,  y_{K^*}^*\} \) \\
Recombine \( \{(r^*, \theta^*)\} \) into \( R^*(s, a) \) \\
\Return \( R^* \).
\end{algorithm}

\subsection{Reward Code Sampling with Uncertainty Screening}
\label{sect:code_sampling}

In LLM-based RDP, automated reward function design can be achieved through iterative sampling and simulation. However, existing approaches indiscriminately conduct simulation-based evaluation on all LLM-generated reward function samples, despite the frequent presence of redundant or infeasible candidates within these samples. Our analysis identifies this as a critical factor contributing to computational inefficiency (see results in Section~\ref{sect:ablation} Abl-1). Consequently, a pivotal question emerges: \textit{how to effectively filter out potentially problematic reward function samples prior to simulations}, thereby avoiding computationally expensive yet futile simulation training.

\textbf{Uncertainty priority}. Our sampling approach is grounded in the principle of \textit{self-consistency}~\cite{wang2022COT-SC} in LLMs. When explicitly prompted to generate diverse outputs, LLMs that produce highly consistent responses demonstrate well-internalized, task-specific knowledge. Such outputs exhibit high reliability and typically require minimal refinement. Conversely, if the generated results show substantial diversity, this indicates uncertainty in the LLMs' understanding of the task context and underlying concepts. These divergent results exhibit lower reliability and consequently demand more refinement.

\textbf{Sampling and filtering}. 
Based on this principle, the agent prompts the LLM to generate diverse reward components for a given RL task. The uncertainty score of each reward component $r_i$, denoted as $U(r_i)$, is quantified by its occurrence frequency across all sampled candidates. 
To quantify the uncertainty of the reward component, \method identifies and resolves ambiguities in reward components using LLMs. It combines textual similarity and semantic similarity analyses to evaluate the relevance and clarity of reward components, assigning $U(r_i)$ to each reward component $r_i$ as
\begin{equation}
    U(r_i) = 1 - \sum\limits_{i\in [1,K]}(\Heaviside{ max(S_{\text{text}}(r_i), S_{\text{semantic}}(r_i)) - \omega} )/K,
\end{equation}
where $\Heaviside{\,\cdot\,}$ is a step function, $K$ denotes the quantity of the reward samples, $\omega$ is a decision parameter regarding the maximum similarity ($\omega = 0.95$), $S_{\text{text}} \in (0,1]$ and $ S_{\text{semantic}} \in  (0,1]$ are the textual and semantic similarity scores, respectively. 
Furthermore, the normalized sample uncertainty score $U(R_k)$ is computed to evaluate the overall uncertainty of each reward function sample $R_k$. 
Using $U(R_k)$, the agent filters out samples containing identical reward components, thereby eliminating redundant inner-loop optimization processes that would otherwise incur unnecessary computational overhead. 
Implementation details are elaborated in App.~\ref{app:impl_sampling}. 
Moreover, our analysis reveals that highly uncertain reward components may contain unexplored components capable of facilitating effective reward shaping (see Section~\ref{sect:discussion} Disc-2). 
Consequently, the agent implements an adaptive exploration-exploitation strategy. 
For high-uncertainty samples, it allocates additional inner-loop iterations to prioritize exploration of optimal hyperparameter configurations for potentially novel reward components.
For low-uncertainty samples, it emphasizes exploitation to minimize unnecessary simulations. This approach balances the trade-off between the exploration and utilization of uncertain reward components.  

\subsection{Uncertainty-aware Bayesian Optimization}
\label{sect:UABO}

\textcolor{black}{
While large language models demonstrate significant potential for reward component design, they exhibit suboptimal performance in numerical optimization tasks (see results in Section~\ref{sect:ablation} Abl-3). This limitation leads to non-optimal reward intensity configurations in LLM-generated reward functions, representing a key factor in the poor policy learning performance observed in prior approaches. In contrast to existing methods, our \method framework does not rely on LLM-based agents for direct numerical optimization. Instead, the agent serves as a controller that orchestrates specialized numerical optimization tools. Specifically, \method delegates the inner-loop optimization of reward intensity parameters to Bayesian optimization (BO) algorithms. Benefiting from BO's superiority in black-box global optimization, this approach achieves significantly better performance than LLM-based optimization.
}

\textcolor{black}{
Although the classical Bayesian optimization algorithm, \textit{i.e.}, Gaussian Process with Expected Improvement (EI)~\cite{ament2023unexpected, snoek2012practical}, demonstrates theoretical advantages, its practical efficiency remains unsatisfactory, particularly due to the substantial computational overhead incurred by acquisition functions during sampling (simulation training). This inefficiency frequently prevents convergence to globally optimal solutions within an acceptable number of samplings. Notably, the uncertainties $U(r_i)$ of individual reward components imply valuable prior knowledge for enhancing BO's sampling efficiency. Given a reward function comprising $m$ components, we model the $m$ reward intensities as a joint probability distribution. Crucially, higher uncertainty in $r_i$ corresponds to a more uniform marginal distribution along that dimension. This observation suggests that sampling should prioritize exploitation over exploration in high-uncertainty dimensions. Building upon this smoothness assumption, we propose \textbf{Uncertainty-aware Bayesian optimization (UABO)} to address these limitations. }

\textbf{UABO} incorporates reward component uncertainty scores, $U(r)$, into both the kernel function and acquisition function of the standard Bayesian optimization. 
The Matern kernel in Gaussian process has the form
\begin{equation}\label{matern2.5}
    k(p,p^{\prime})= f_{\nu}(d) = \sigma^2 \cdot \frac{2^{1-\nu}}{\Gamma(\nu)} \left( \frac{\sqrt{2\nu} d}{\ell} \right)^\nu K_\nu \left( \frac{\sqrt{2\nu} d}{\ell} \right), 
\end{equation}
where $d$ is the Euclidean distance between $p$ and $p^{\prime}$, $\sigma^2$ is the variance, $\nu$ is the smoothness parameter, $\ell$ is the length scale parameter and $K_\nu$ is the modified Bessel function of the second kind.
We note that the kernel is isotropic, which means that all dimensions (i.e., the intensity parameters of reward components) share the same length scale parameter. To accommodate heterogeneous smoothness (uncertainty score $U(r_i)$) across different dimensions, we propose an anisotropic kernel function that incorporates uncertainty values as length scales within the distance metric. The distance is formulated as follows, 
\begin{equation}\label{dist-new}
    d_{\text{u}}(p,p^{\prime}) =  \sqrt{\left(\frac{x_1-x_1^{\prime}}{U(r_{i, 1})}\right)^2 +\cdots+ \left( \frac{x_n-x_n^{\prime}}{U(r_{i, m})} \right)^2}.
\end{equation}
Then the new kernel function is defined as
\begin{equation}\label{kernel-new}
    \Tilde{k}(p,p^{\prime})= f_{\nu}(d_{\text{u}}) = \sigma^2 \cdot \frac{2^{1-\nu}}{\Gamma(\nu)} \left( \frac{\sqrt{2\nu} d_{\text{u}}}{U(R_i)} \right)^\nu K_\nu \left( \frac{\sqrt{2\nu} d_{\text{u}}}{U(R_i)} \right).
\end{equation}

\textcolor{black}{
Furthermore, we leverage $U(r)$ to enhance the acquisition function's performance. 
The standard form of Expected Improvement (EI) acquisition function~\cite{ament2023unexpected} in Bayesian Optimization is as follows,
\begin{equation}
    \mathrm{EI}_{y^\star}(x)=\mathbb{E}_{f(x)\sim\mathcal{N}(\mu(x),\sigma^2(x))}\left[ [f(x)-y^\star]_{+} \right] = \sigma(x)h\left( \frac{\mu(x)-y^\star}{\sigma(x)} \right), 
\end{equation}
where $[\,\cdot\,]_{+}=\max(0\,,\,\cdot)$, $y^\star = \max_i y_i$ is the best observed value, and $h(z) = \phi(z)+z\Phi(z)$, $\phi$ is the standard normal distribution density and $\Phi$ is the distribution function. } 

To reduce inefficient exploration along directions with potentially insignificant influence on the function value, we introduce a penalty term that constrains the weighted distance between the candidate point and the current optimum, yielding an uncertainty-accelerated EI acquisition function ($\mathrm{uEI}$) defined as 
\begin{align}
    &\mathrm{uEI}(\theta) = \mathrm{EI}(\theta) \cdot w(\theta), \\
    &w(\theta) = \exp\left( -\sum_{j=1}^d U(r_j)(\theta_j-\theta_j^\star)^2 \right), 
\end{align}
where $\theta$ denotes the reward intensity hyperparameter, $w(\theta)$ is a penalty term to constrain the weighted distance between $\theta$ and $\theta^\star$. 
Specifically, an uncertainty value approaching zero for a particular dimension indicates no restriction on variations along that dimension. 
Conversely, a large uncertainty weight in a certain direction implies that extensive exploration in that direction is discouraged. 
UABO demonstrates significantly improved convergence efficiency (see Section~\ref{sect:ablation} Alb-3), reaching optimal values within limited hyperparameter search steps, thereby substantially enhancing the performance and effectiveness of reward intensity configuration. 
A formal proof of its convergence lower bound is provided in App.~\ref{app:proofs}.

\section{Experiments}
\label{sect:experiments}

\textcolor{black}{In this section, we evaluate the proposed \method framework through extensive experiments on a diverse set of environments and tasks, comparing its performance against human and baseline approaches. All experiments and comparative analyses presented in this paper utilize DeepSeek-v3-241226~\cite{liu2024deepseek} as the foundational model unless explicitly stated otherwise. }

\subsection{Baselines}

\textcolor{black}{
\textbf{Eureka}~\cite{ma2024eureka} (Baseline) provides a systematic approach for generating reward functions utilizing LLMs. It incorporates feedback from various evaluation results to refine the generation of the reward function in evolutionary iterations. This iterative process continues until an optimal reward function is achieved. } 

\textcolor{black}{
\textbf{Text2reward}~\cite{xie2024text2reward} is a reinforcement learning method that automatically generates dense reward functions from natural language task descriptions using LLMs, without relying on expert data or demonstrations, and is able to express human goals in the form of procedural rewards, given to iterations using human feedback. } 

\textcolor{black}{
\textbf{Human}. To maintain a fair comparison, we adopted the same Human data as reported in Eureka~\cite{ma2024eureka}. The original shaped reward functions provided in the benchmark tasks are developed by active reinforcement learning researchers who designed the tasks. These reward functions embody the outcomes of expert human reward engineering. }

\textcolor{black}{
\textbf{Sparse}.These functions correspond to the fitness measures $F$ employed to assess the quality of the generated reward signals. Analogous to human feedback, they are also provided as part of the benchmark suite. 
The detailed configurations for Dexterity tasks and Isaac tasks are consistent with those used in Eureka~\cite{ma2024eureka}. 
The fitness functions of all tasks in ManiSkill2 are specified in App.~\ref{app:maniskill2_details}.} 

\subsection{Experimental Setup}

\textbf{Benchmarks}. Our environments consist of three benchmarks: Isaac, Dexterity and Maniskill2, and comprises 35 different tasks. Nine of these tasks are from the original Isaac Gym environment~\cite{647d5ee0d68f896efa59676c} (Issac), twenty are complex bi-manual tasks~\cite{62afe5495aee126c0f668bd5} (Dexterity) and the remaining six are from the Maniskill2 environment~\cite{63dcdb422c26941cf00b661c}.
See App. \ref{app:benchmark_details} for more details. 

\textbf{Metrics}. 
We examine fore metrics: i. Success Rate (\textbf{SR}). 
We report the success rates of different reward functions on Dexterity and ManiSkill2 tasks. To ensure fair comparison with the baseline methods, the success rates for ManiSkill2 tasks are calculated using the last 50\% of test results from each evaluation, while full test results are used for Dexterity tasks. 
ii.Human Normalized Score \textbf{(HNS)}. For Isaac tasks, following the evaluation setup in the Eureka, we employ the Human Normalized Score, $\frac{\text{Method-Sparse}}{|\text{Human-Sparse}|}$,  as the evaluation metric. 
iii. The Number of Evaluations (\textbf{NOE}), quantified as the total number of simulations conducted across all samples during the optimization process. 
iiii. The Number of LLM Callings (\textbf{NLC}), representing the cumulative number of LLM calls made during both reward function generation and refinement. 
These metrics collectively provide comprehensive assessment, with SR and HNS evaluating the performance of the generated reward functions, and NOE and NLC quantifying design process efficiency. 

\textbf{Policy Learning}. 
The performance of reward functions generated by \method and comparative methods was rigorously validated through RL training. For both Isaac and Dexterity environments, we employ the same high-efficiency PPO~\cite{schulman2017proximal} implementation as used in Eureka, using identical task-specific hyperparameters without modification. In the ManiSkill2 environment, we utilized both SAC~\cite{haarnoja2018soft} and PPO algorithms to ensure fair comparison between \method, Text2Reward, and Eureka, strictly maintaining the original hyperparameter configurations across all methods. 
See App.~\ref{app:Hyper-parameter_Settings} for detailed parameter configurations. 

\subsection{Results}
\label{sect:results}

\textbf{\method improves the efficiency of the reward function design}. 
Table~\ref{tab:sota_efficiency} presents a comprehensive comparison of computational efficiency between \method and Eureka across three benchmarks. When optimizing for peak reward performance, \method requires only $52.4\%$ of the simulation episodes (NOE) and $46.6\%$ of the evolutionary iterations (NLC) compared to Eureka. 
This significant acceleration demonstrates \method's superior optimization efficiency in automated reward function design. 
See App.\ref{app:detailed_desults} for a per-task breakdown. 

\begin{table}[ht]
  \small
  \caption{\textcolor{black}{\method demonstrates superior efficiency across all benchmarks.} }
  \centering
  \begin{tabular}{lcccccc}
    \toprule
    & \multicolumn{2}{c}{Isaac} & \multicolumn{2}{c}{Dexterity}  & \multicolumn{2}{c}{ManiSkill2} \\
    \cmidrule(lr){2-3} \cmidrule(lr){4-5} \cmidrule(lr){6-7}
    Methods & NOE$\downarrow$ & NLC$\downarrow$ & NOE$\downarrow$ & NLC$\downarrow$ & NOE$\downarrow$ & NLC$\downarrow$  \\
    \midrule
    Txet2Reward &72.889   &5 & 84.45 & 6.05 & 106.667 & 6.667 \\
    Eureka & 68.667 & 4.556 & 80.05 & 5.5  & 98.667 & 6.167 \\

    \method & \textbf{39.501} & \textbf{2.495} & \textbf{57.8} & \textbf{3.4}  & \textbf{32.33} & \textbf{1.667} \\
    \bottomrule
  \end{tabular}
  \label{tab:sota_efficiency}
\end{table}

\textbf{\method demonstrates superior reward function performance}. 
Table~\ref{tab:sota_quality} presents a systematic comparison of reward functions generated by different approaches across benchmark tasks. Under identical simulation budgets (NOE), reinforcement learning agents trained with \method-derived reward functions achieve significantly higher success rates than competing methods. Notably, \method demonstrates substantial performance gains of 132\%, 45\% and 76\% over Eureka across the three experimental environments, representing substantial quality enhancements.  Furthermore, \method-designed reward functions outperform manually engineered counterparts by a considerable margin, providing compelling evidence for the efficacy of automated reinforcement learning frameworks. 

\begin{table}[ht]
  \small
  \caption{\textcolor{black}{\method exhibits higher reward quality across all benchmarks.} }
  \centering
  \begin{tabular}{lccc}
    \toprule
    Methods & Isaac (HNS$\uparrow$) & Dexterity (SR$\uparrow$) & ManiSkill2 (SR$\uparrow$) \\
    \midrule
    Sparse        & 0     & 0.054 & 0.101    \\        
    Human         & 1.000 & 0.459 & 0.434 \\
    Text2Reward   & 1.553 & 0.452 & 0.554 \\
    Eureka        & 1.607 & 0.466 & 0.449 \\
    \method       & \textbf{3.424} & \textbf{0.675} & \textbf{0.792} \\
    \bottomrule
  \end{tabular}
  \label{tab:sota_quality}
\end{table}

\textbf{\method achieves synergistic progress in both generation quality and generation efficiency}. 
In Figure~\ref{fig:multi}, it can be intuitively seen that the superiority of \method over Eureka, text2reward and Human, both in terms of the success rate and the reduction in the number of simulations, has been well improved on these typical tasks. 
Each data point in the line plots represents the effectiveness of the reward function obtained after a single iteration. The results show that \method surpasses human-designed rewards after just one LLM refinement cycle and achieves optimal performance with significantly fewer iterations than both Eureka and Text2Reward, which means that it consumes fewer tokens. 
These results collectively demonstrate that \method achieves simultaneous improvements in both reward generation efficiency and design quality.

\begin{figure}[htbp]
  \centering
  \captionsetup[subfigure]{aboveskip=0pt, belowskip=-6pt}

  \includegraphics[width=0.7\linewidth]{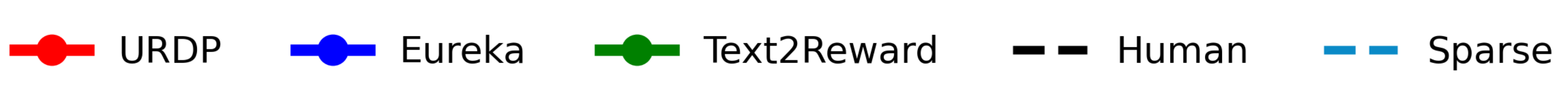}
  \vspace{1pt}  

  \begin{subfigure}[b]{0.31\textwidth}
    \includegraphics[width=\linewidth]{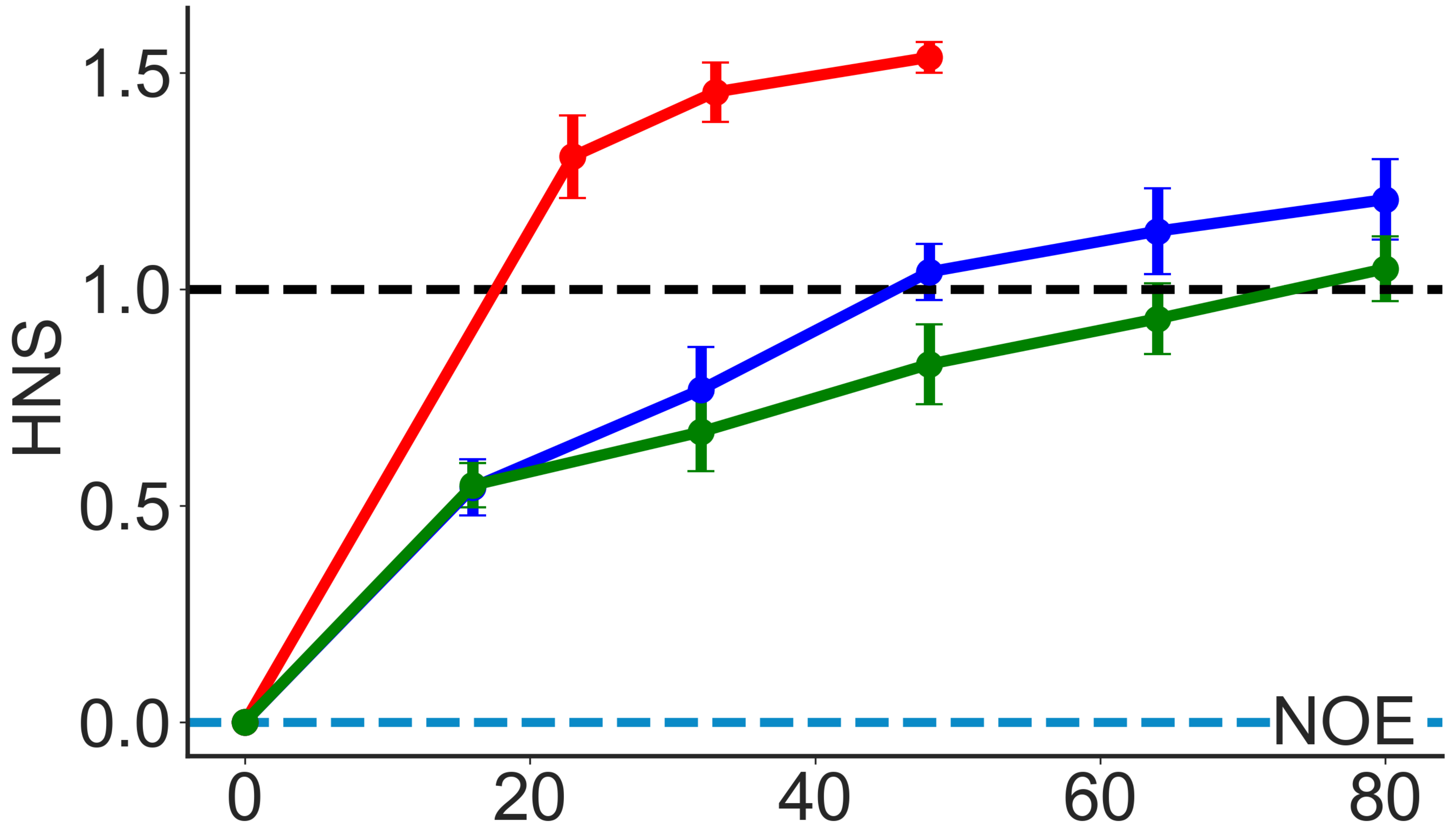}
    \caption{Ant}
    \label{fig:sub1}
  \end{subfigure}
  \hspace{0.01\textwidth}
  \begin{subfigure}[b]{0.31\textwidth}
    \includegraphics[width=\linewidth]{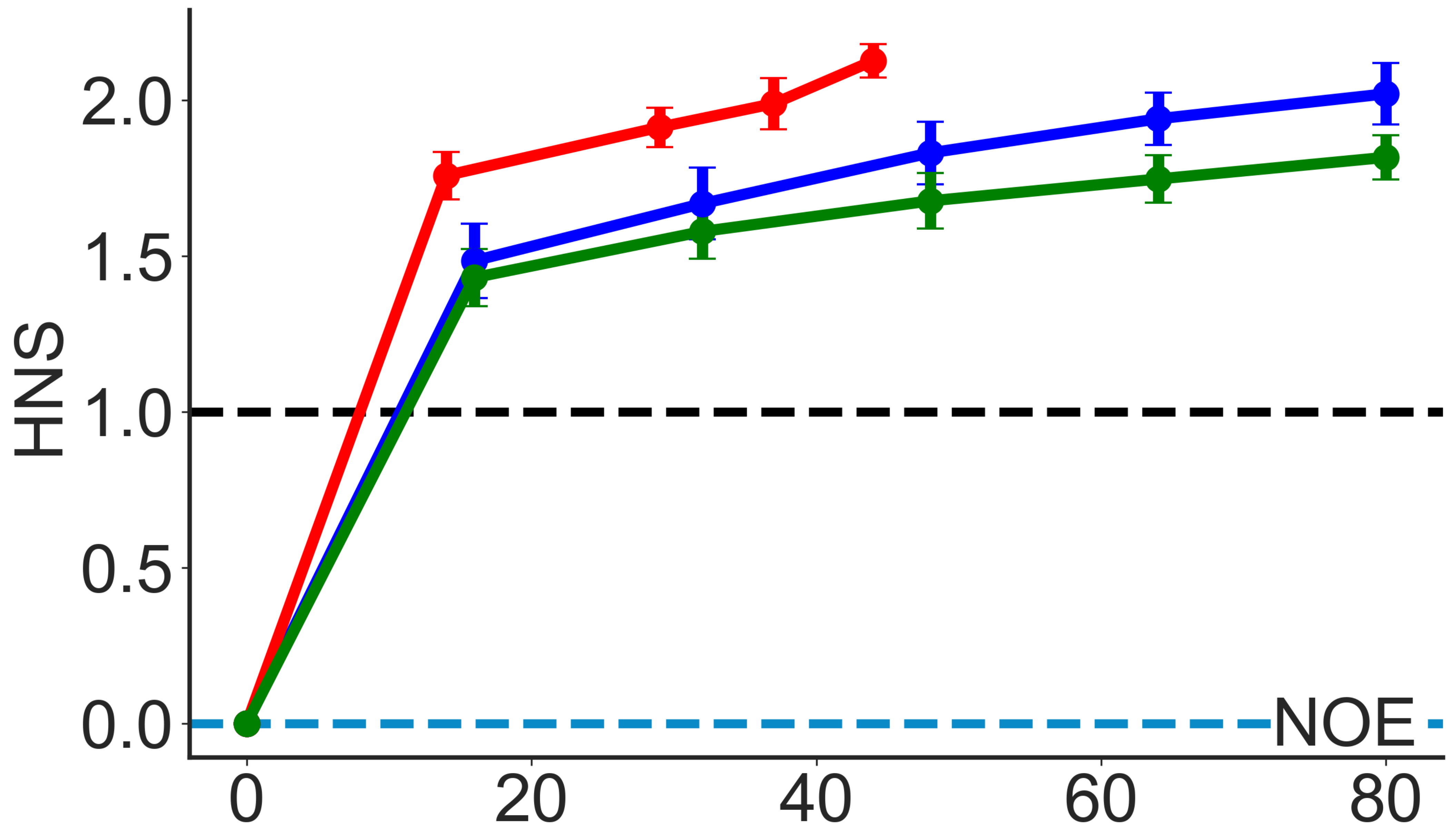}
    \caption{AllegroHand}
    \label{fig:sub2}
  \end{subfigure}
  \hspace{0.01\textwidth}
  \begin{subfigure}[b]{0.31\textwidth}
    \includegraphics[width=\linewidth]{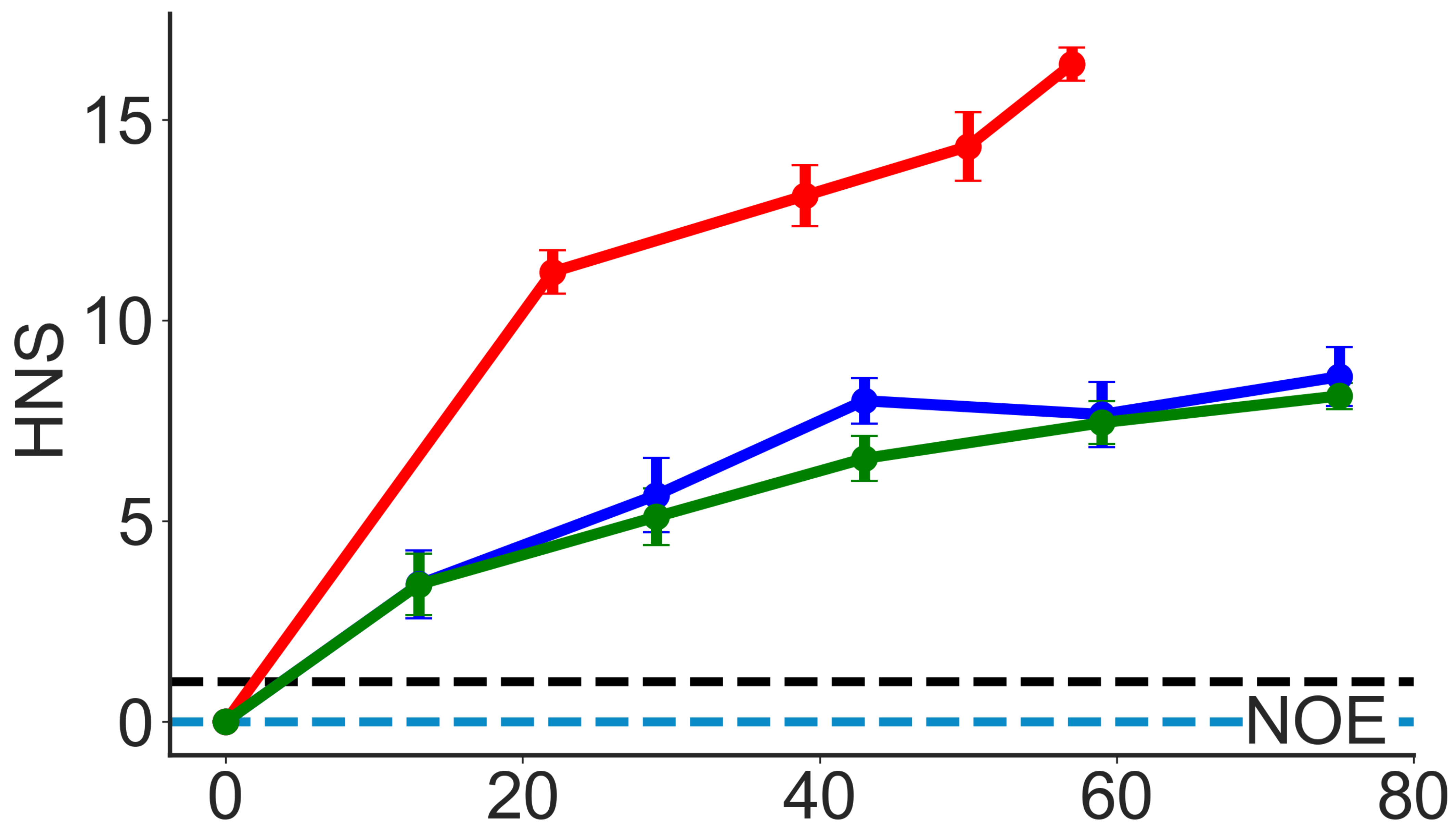}
    \caption{FrankaCabinet}
    \label{fig:sub3}
  \end{subfigure}

  \vspace{5pt}

  \begin{subfigure}[b]{0.31\textwidth}
    \includegraphics[width=\linewidth]{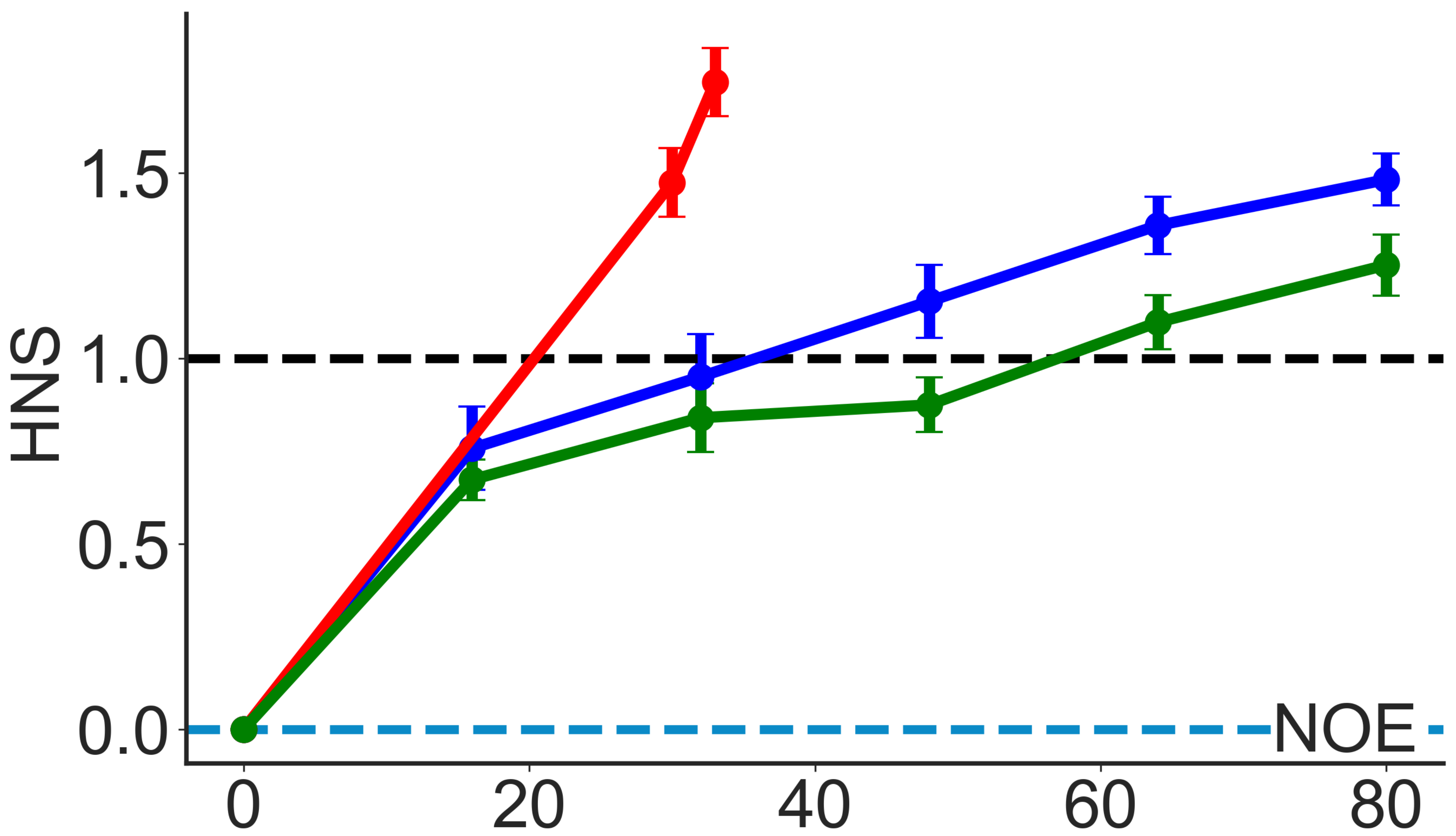}
    \caption{ShadowHand}
    \label{fig:sub4}
  \end{subfigure}
  \hspace{0.01\textwidth}
  \begin{subfigure}[b]{0.31\textwidth}
    \includegraphics[width=\linewidth]{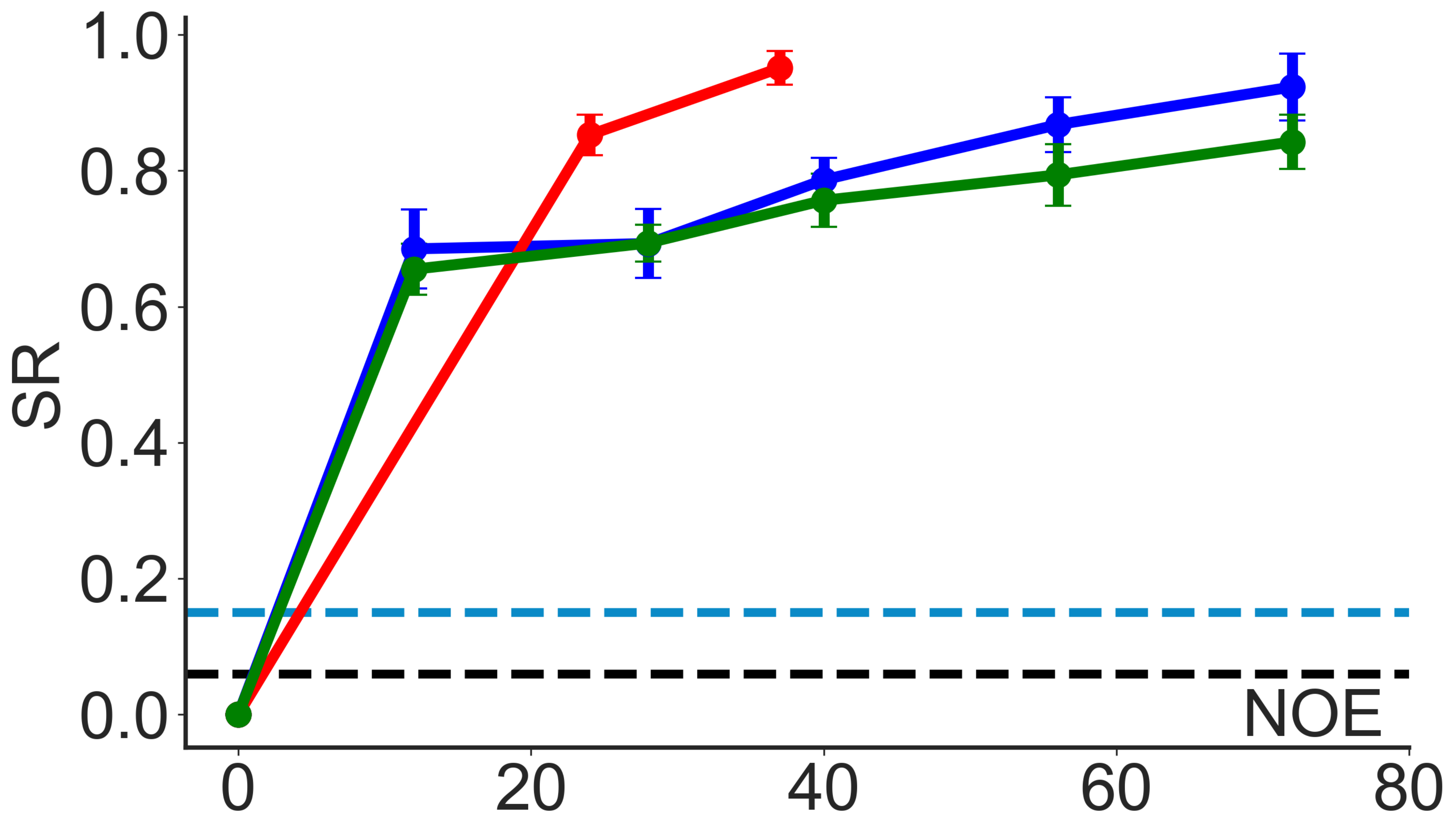}
    \caption{DoorCloseOutward}
    \label{fig:sub5}
  \end{subfigure}
  \hspace{0.01\textwidth}
  \begin{subfigure}[b]{0.31\textwidth}
    \includegraphics[width=\linewidth]{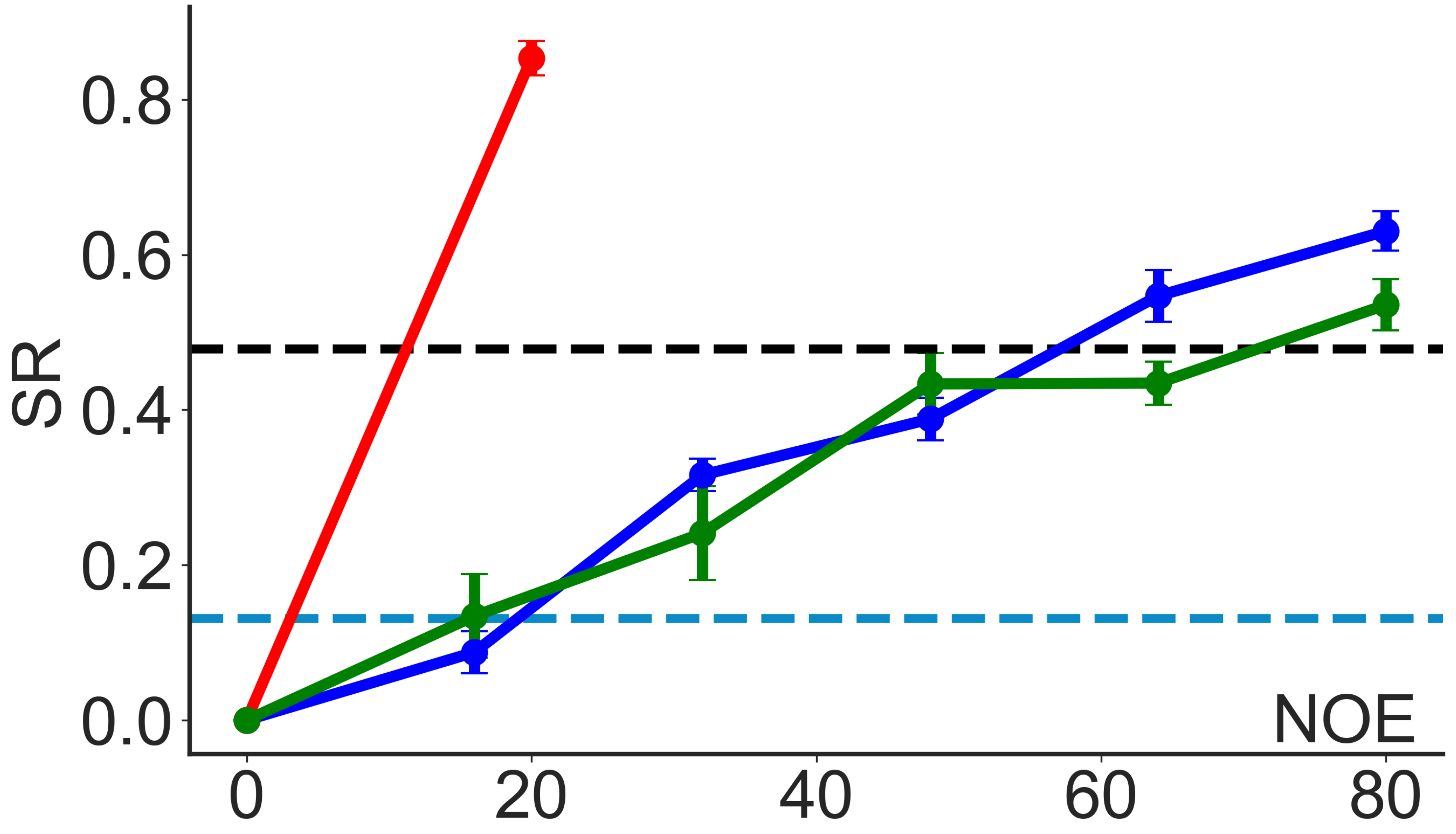}
    \caption{PickCube}
    \label{fig:sub6}
  \end{subfigure}

  \caption{\textcolor{black}{Comparisons of \method with other methods in Isaac (a-d), Dexterity (e), and ManiSkill2 (f).}  }
  \label{fig:multi}
\end{figure}

\subsection{Ablation Experiments}
\label{sect:ablation}

Furthermore, we explore the role of each core content in \method in achieving the above results.

\textcolor{black}{
\textbf{Abl-1: Uncertainty quantification improves the efficiency of reward design}. 
To evaluate the role of uncertainty sampling, we conduct ablation studies by removing the uncertainty sampling and filtering module from \method (denoted as \textbf{URDP w.o. Uncertainty}). Experimental results in Figure~\ref{fig:multi3} demonstrate that \method achieves comparable success rates while requiring significantly fewer optimization episodes (NOE) than URDP w.o. Uncertainty, quantitatively validating the efficiency improvement brought by uncertainty-aware sampling. Interestingly, our analysis also reveals the positive role of the uncertainty in obtaining novel reward functions, with detailed mechanistic explanations to be discussed in Section~\ref{sect:discussion} (Disc-2).}

\begin{figure}[htbp]
  \centering
  \captionsetup[subfigure]{aboveskip=0pt, belowskip=-6pt}

  \includegraphics[width=0.7\linewidth]{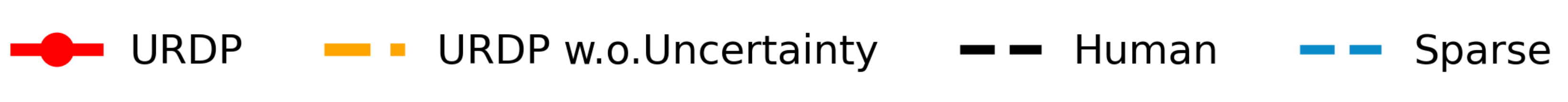}
  \vspace{1pt}  
  
  \begin{subfigure}[b]{0.31\textwidth}
    \includegraphics[width=\linewidth]{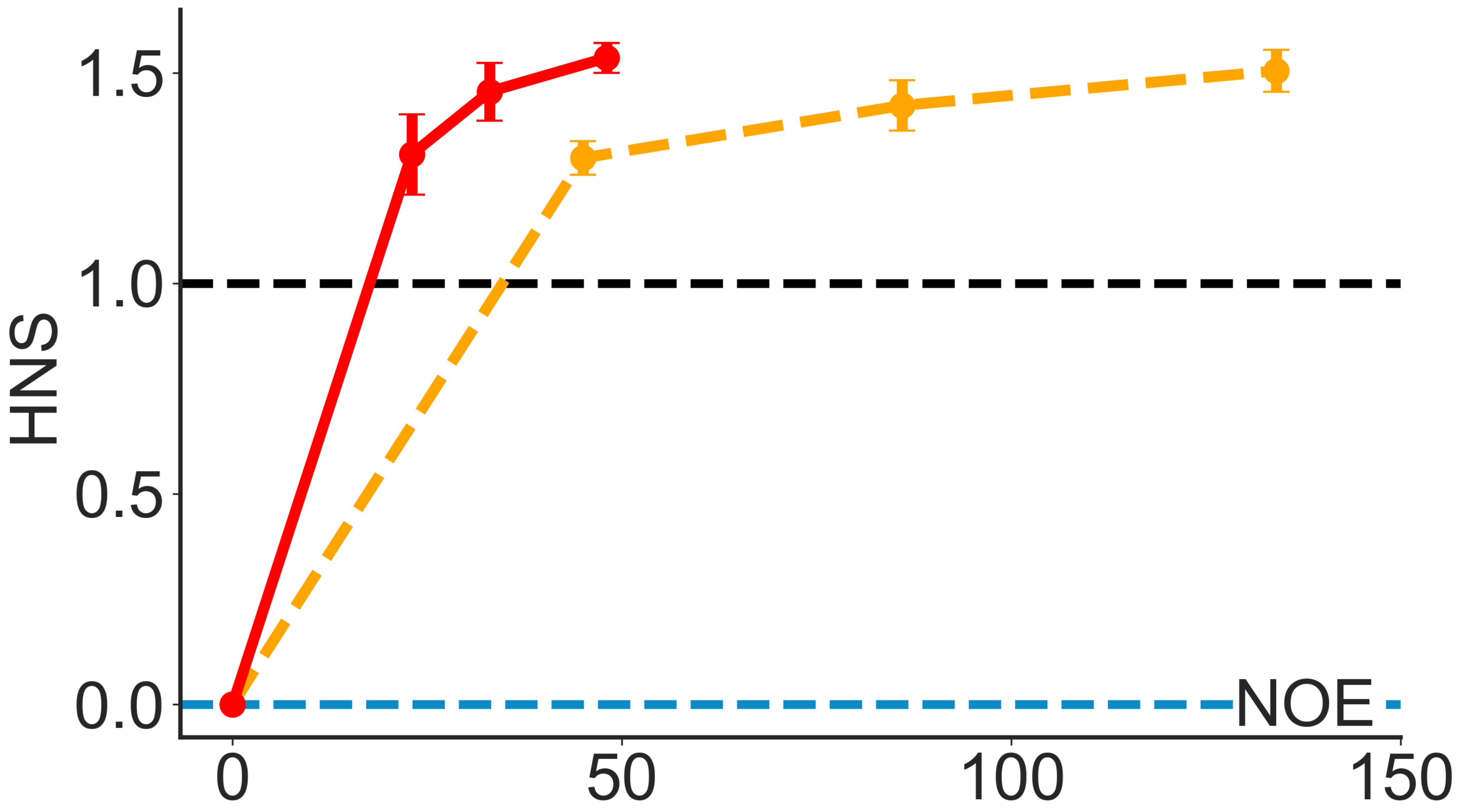}
    \caption{Ant}
    \label{fig:sub13}
  \end{subfigure}
  \hspace{0.01\textwidth}
  \begin{subfigure}[b]{0.31\textwidth}
    \includegraphics[width=\linewidth]{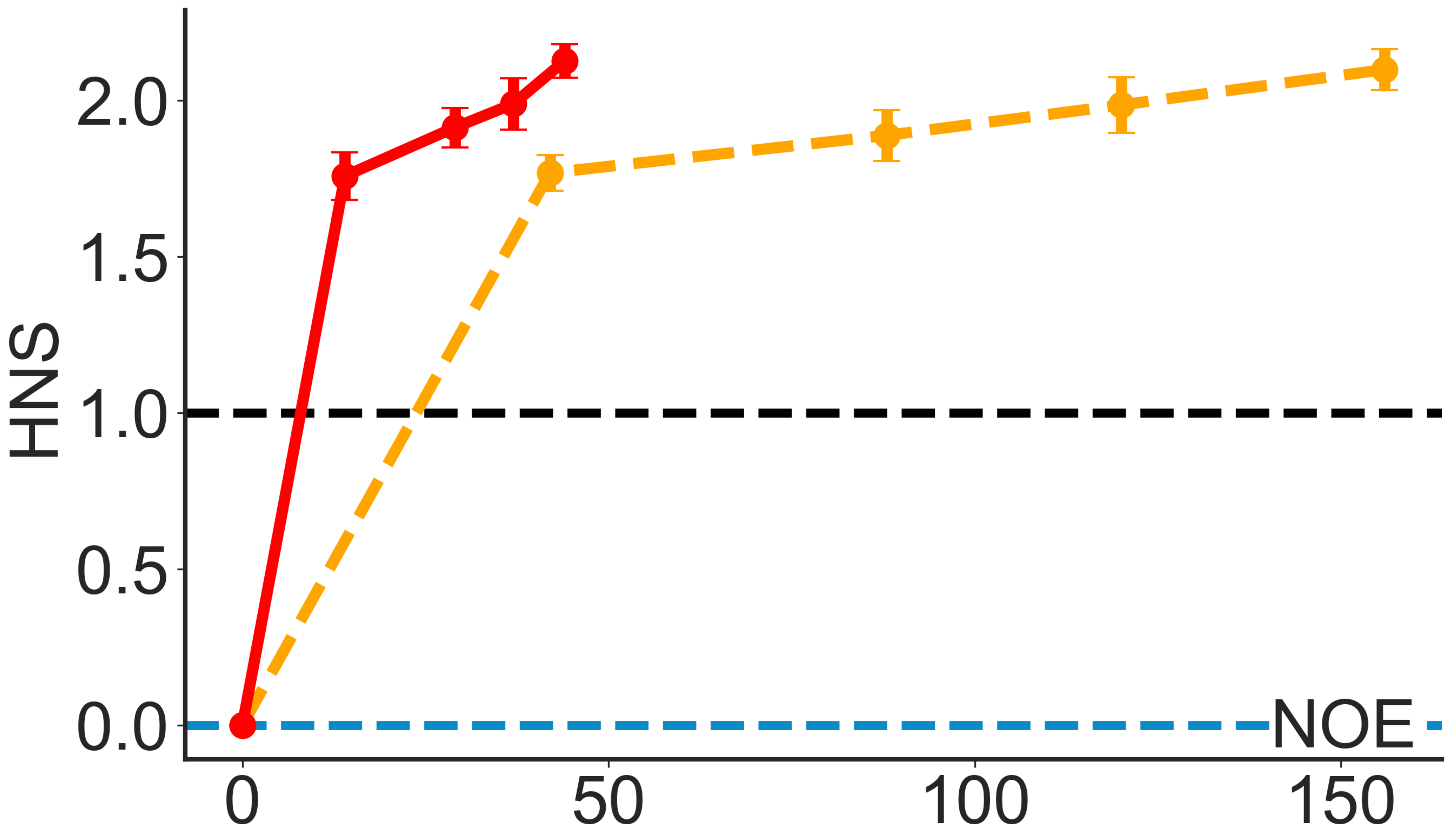}
    \caption{AllegroHand}
    \label{fig:sub14}
  \end{subfigure}
  \hspace{0.01\textwidth}
  \begin{subfigure}[b]{0.31\textwidth}
    \includegraphics[width=\linewidth]{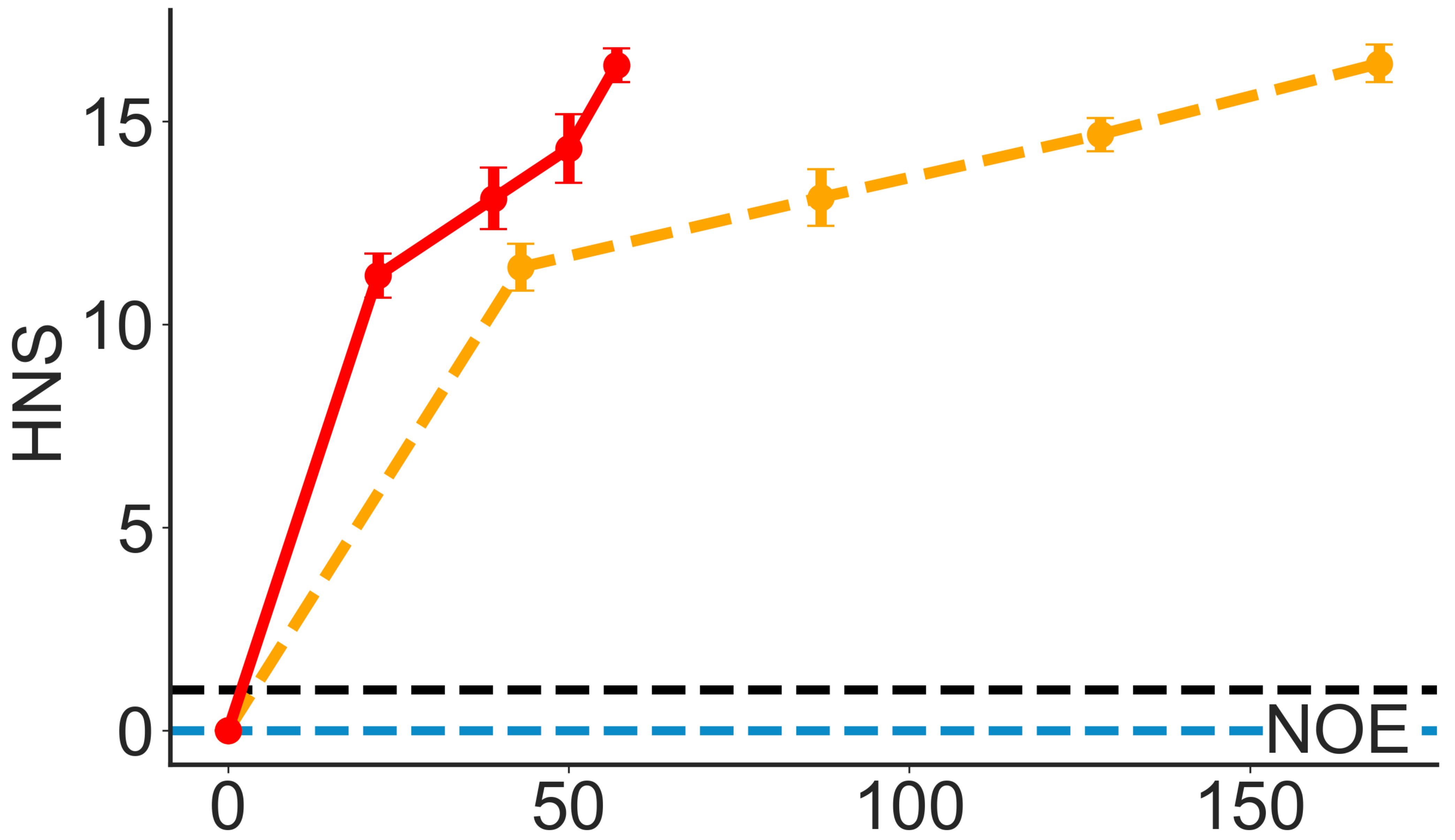}
    \caption{FrankaCabinet}
    \label{fig:sub15}
  \end{subfigure}

  \vspace{5pt}

  \begin{subfigure}[b]{0.31\textwidth}
    \includegraphics[width=\linewidth]{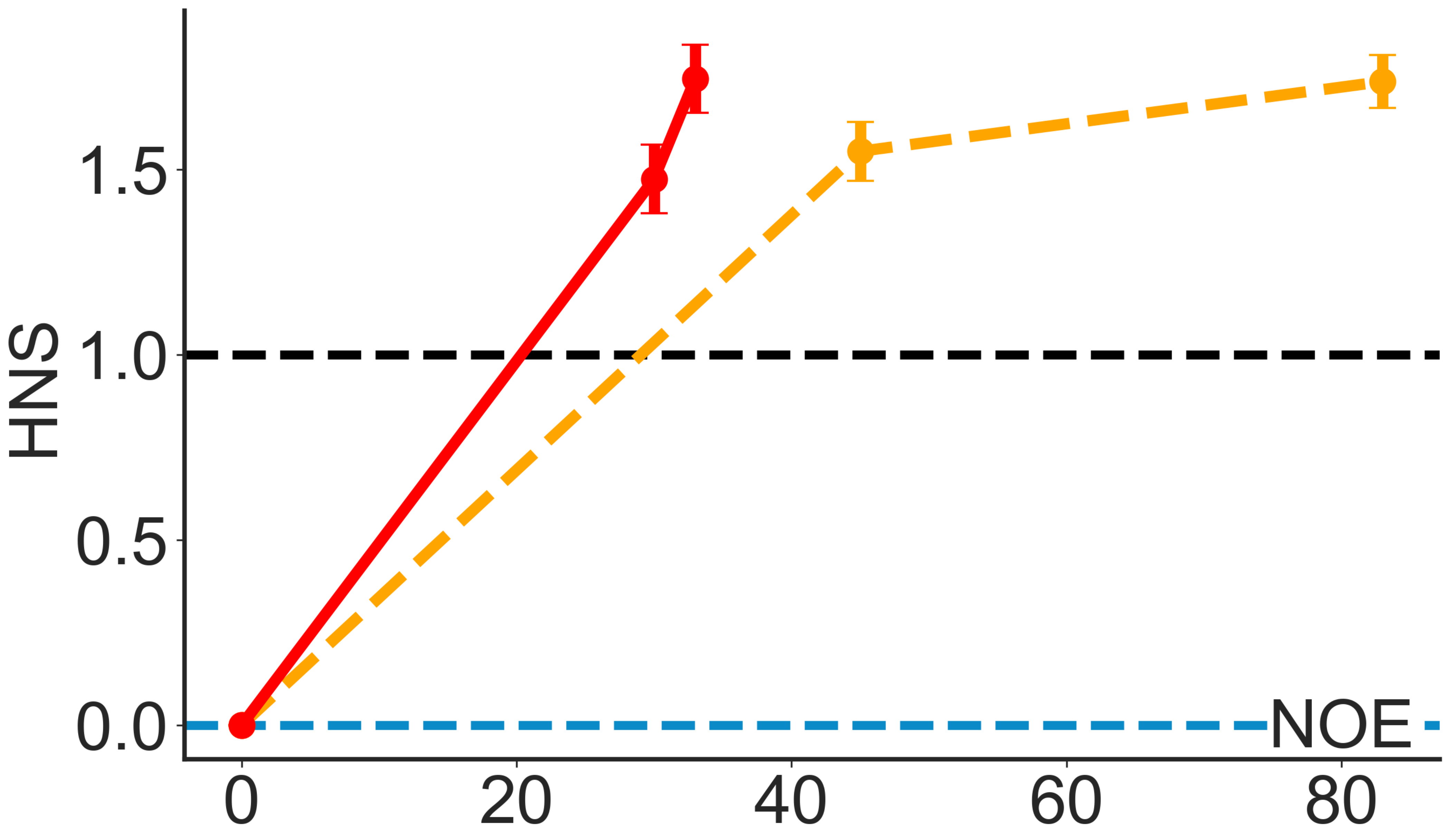}
    \caption{ShadowHand}
    \label{fig:sub16}
  \end{subfigure}
  \hspace{0.01\textwidth}
  \begin{subfigure}[b]{0.31\textwidth}
    \includegraphics[width=\linewidth]{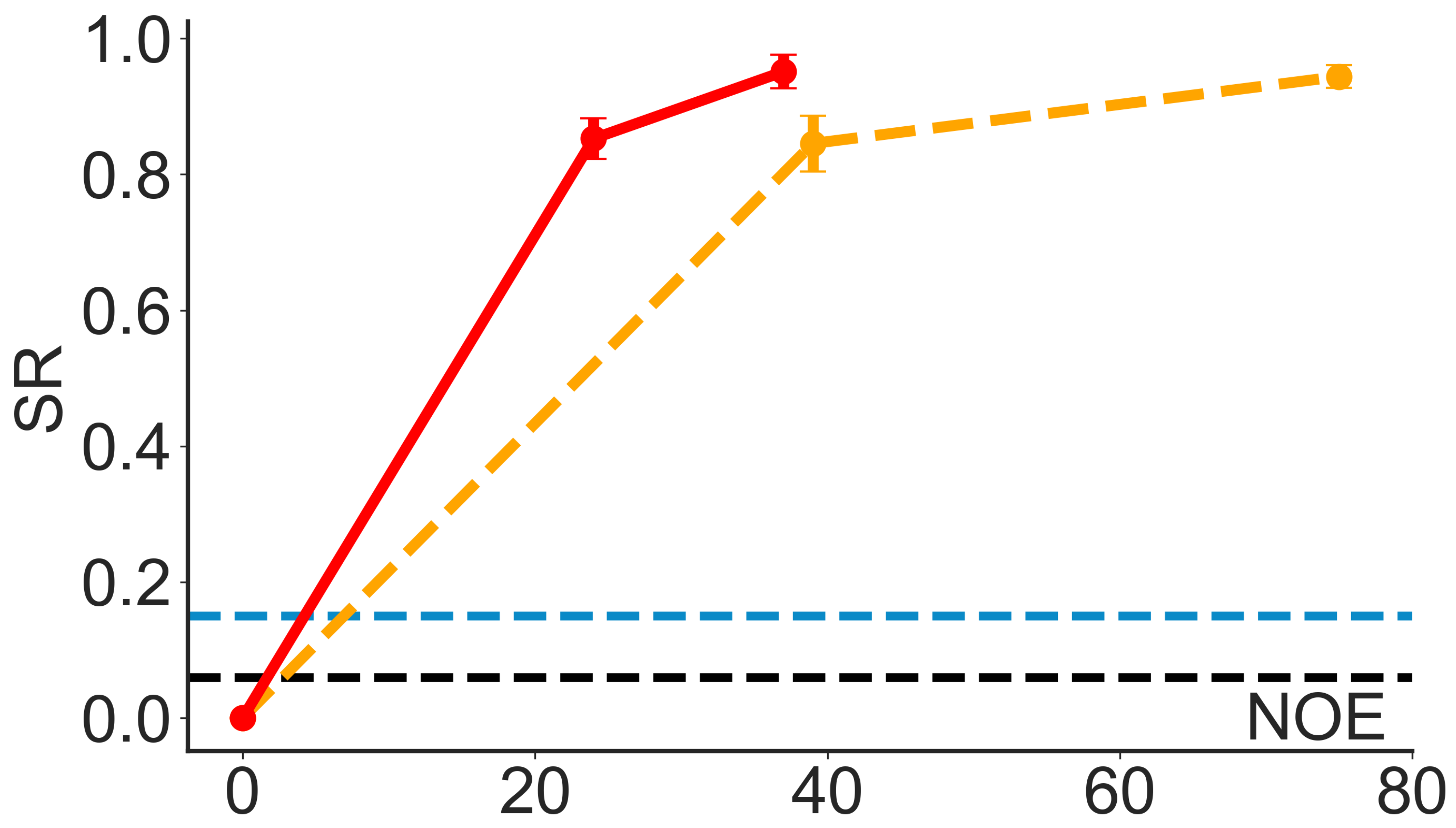}
    \caption{DoorCloseOutward}
    \label{fig:sub17}
  \end{subfigure}
  \hspace{0.01\textwidth}
  \begin{subfigure}[b]{0.31\textwidth}
    \includegraphics[width=\linewidth]{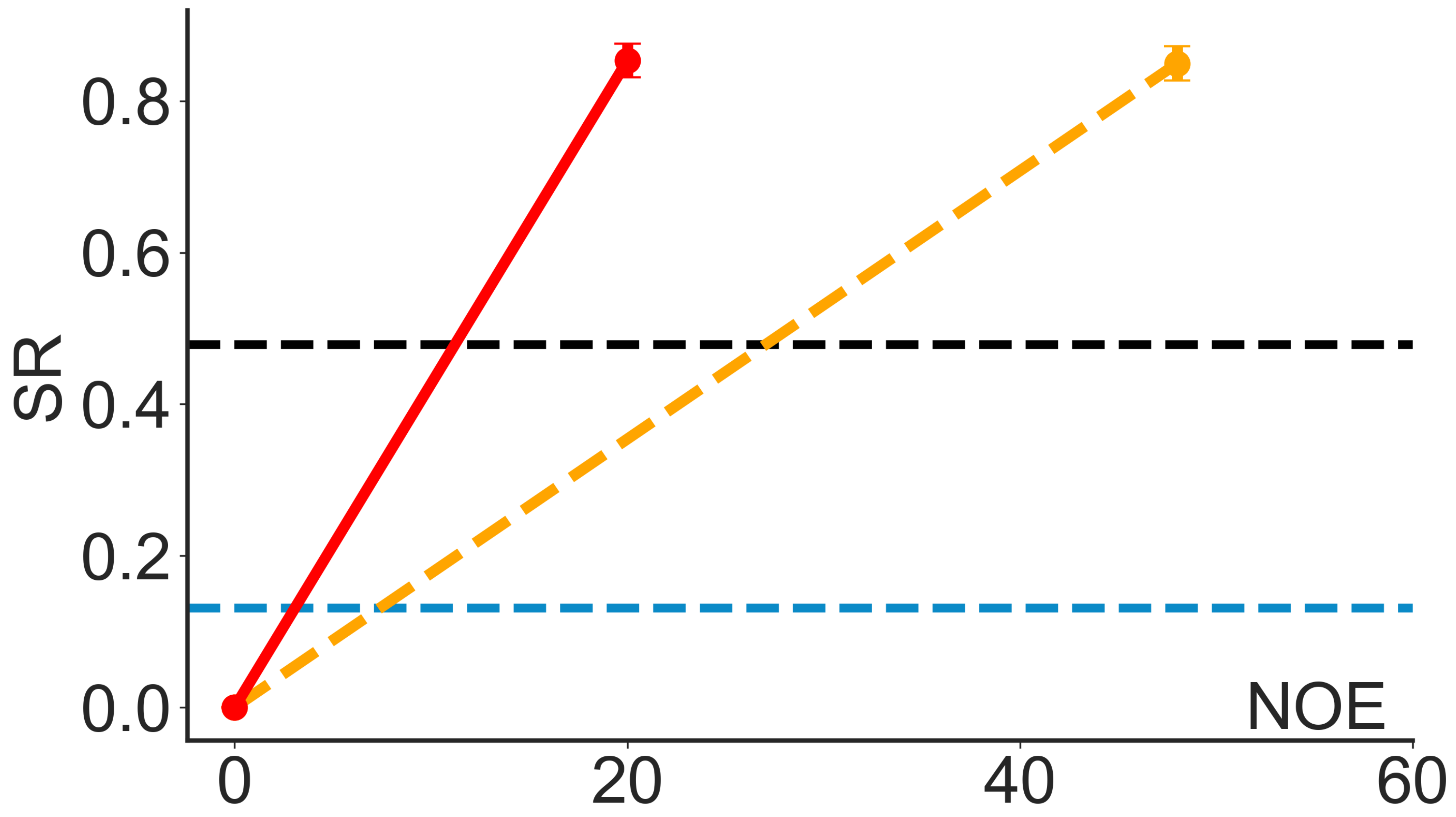}
    \caption{PickCube}
    \label{fig:sub18}
  \end{subfigure}
  \caption{\textcolor{black}{When generating reward functions of comparable quality, \method requires significantly fewer simulation training episodes, attributable to its effective uncertainty-based filtering mechanism.} }
  \label{fig:multi3}
\end{figure}

\textcolor{black}{
\textbf{Abl-2: Decoupled optimization is the cornerstone for the collaborative improvement of performance and efficiency}. 
This experimental investigation examines the role of decoupling in \method, where reward components and their associated intensities are optimized separately. To evaluate this mechanism, we ablate UABO from \method (denoted as \textbf{URDP w.o. UABO}) while maintaining identical configurations otherwise, resulting in a system where both reward components and intensities are jointly configured by the LLM without alternating optimization. Under unrestricted NLC, we compare the HNS or SR achieved by URDP w.o. UABO using equivalent NOE to the standard \method implementation. As shown in Figure~\ref{fig:multi2} (dashed lines), consistent reductions in both HNS and SR are observed across all three benchmark tasks, demonstrating the substantial impact of decoupled optimization on improving reward function design quality. Furthermore, \method exhibites faster convergence (requiring fewer NLC) to optimal solutions, suggesting that decoupling also enhances the efficiency of evolutionary search. A comprehensive discussion of this phenomena and the underlying mechanisms is presented in Section~\ref{sect:discussion} (Disc-1). }

\begin{figure}[h]
  \centering
  \captionsetup[subfigure]{aboveskip=0pt, belowskip=-6pt}

  \includegraphics[width=\linewidth]{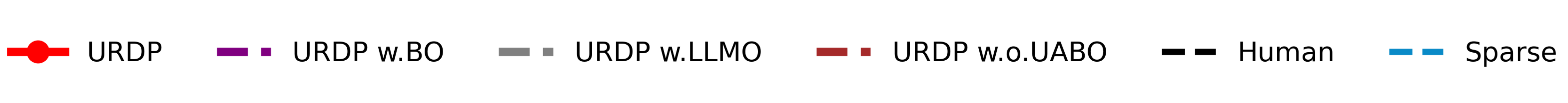}
  \vspace{-15pt}  
  
  \begin{subfigure}[b]{0.31\textwidth}
    \includegraphics[width=\linewidth]{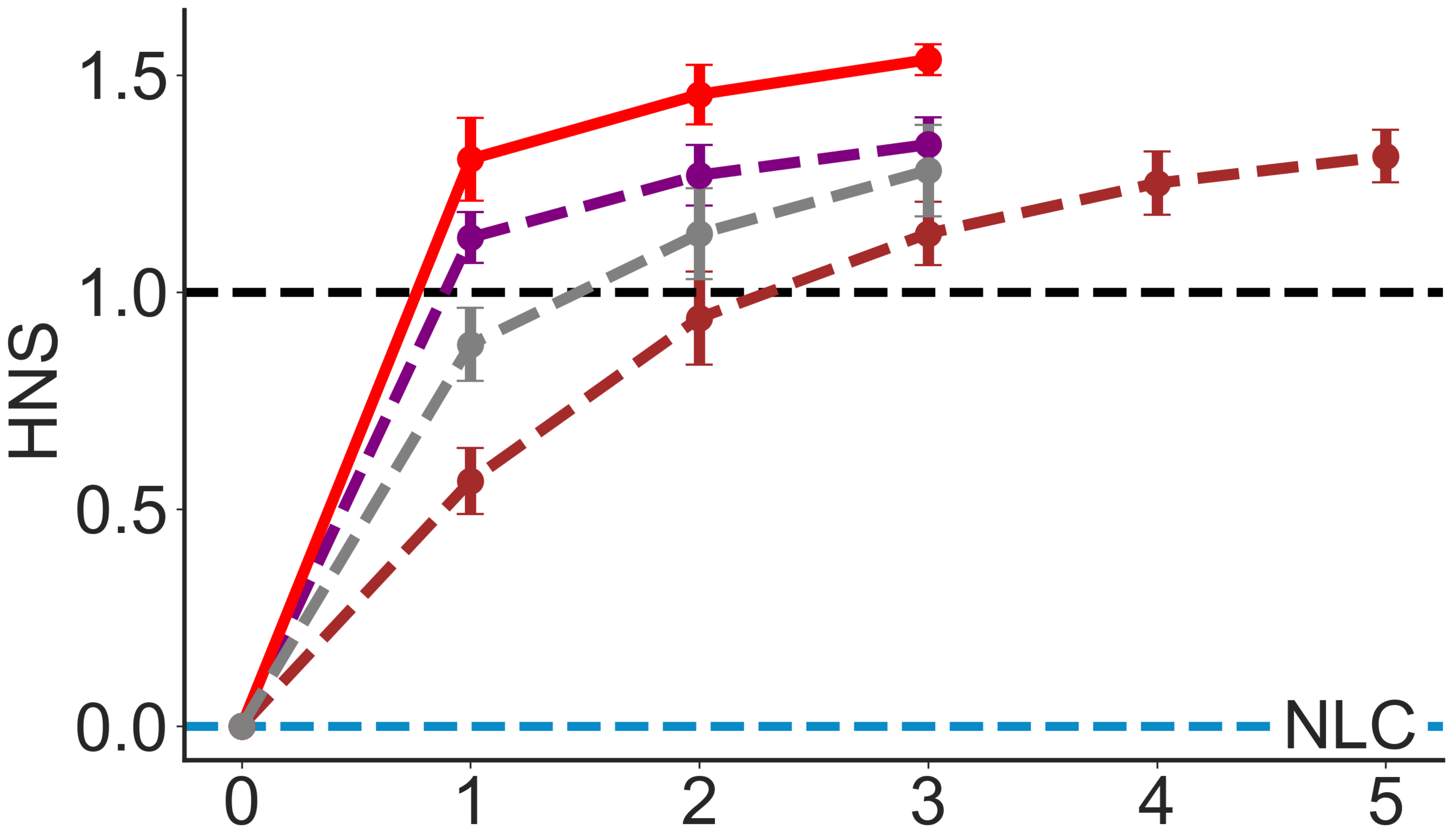}
    \caption{Ant}
    \label{fig:sub7}
  \end{subfigure}
  \hspace{0.01\textwidth}
  \begin{subfigure}[b]{0.31\textwidth}
    \includegraphics[width=\linewidth]{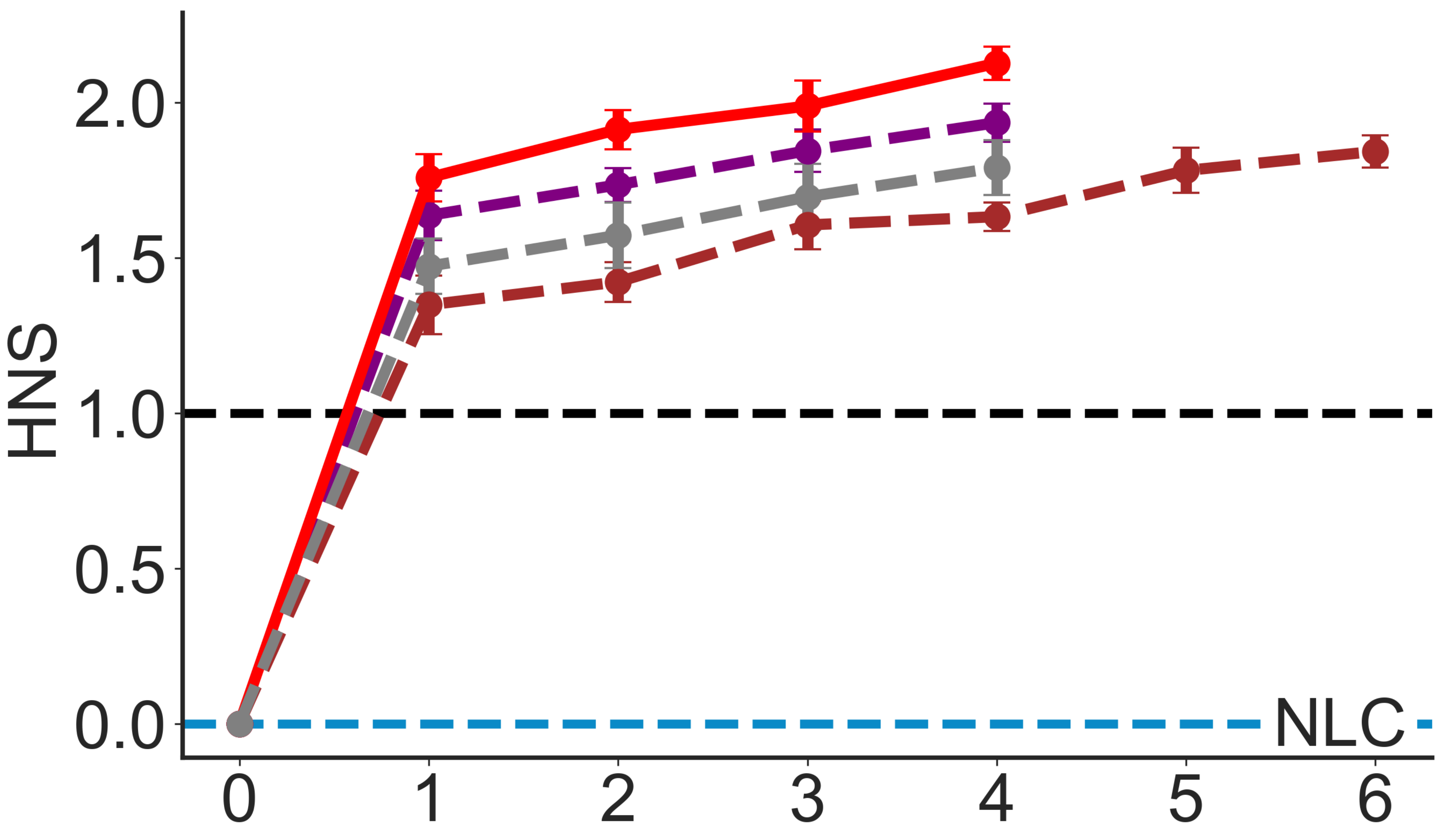}
    \caption{AllegroHand}
    \label{fig:sub8}
  \end{subfigure}
  \hspace{0.01\textwidth}
  \begin{subfigure}[b]{0.31\textwidth}
    \includegraphics[width=\linewidth]{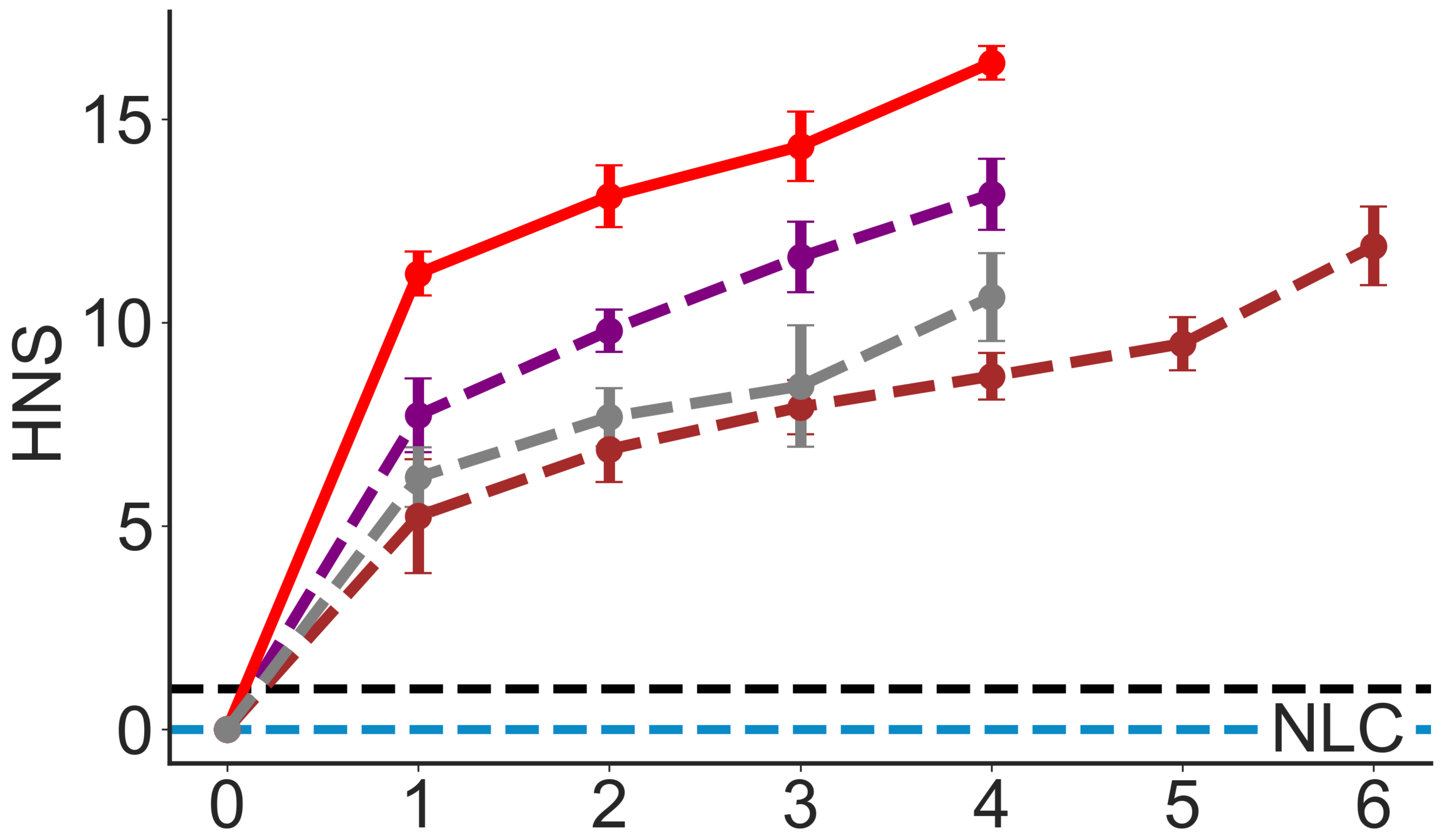}
    \caption{FrankaCabinet}
    \label{fig:sub9}
  \end{subfigure}

  \vspace{5pt}

  \begin{subfigure}[b]{0.31\textwidth}
    \includegraphics[width=\linewidth]{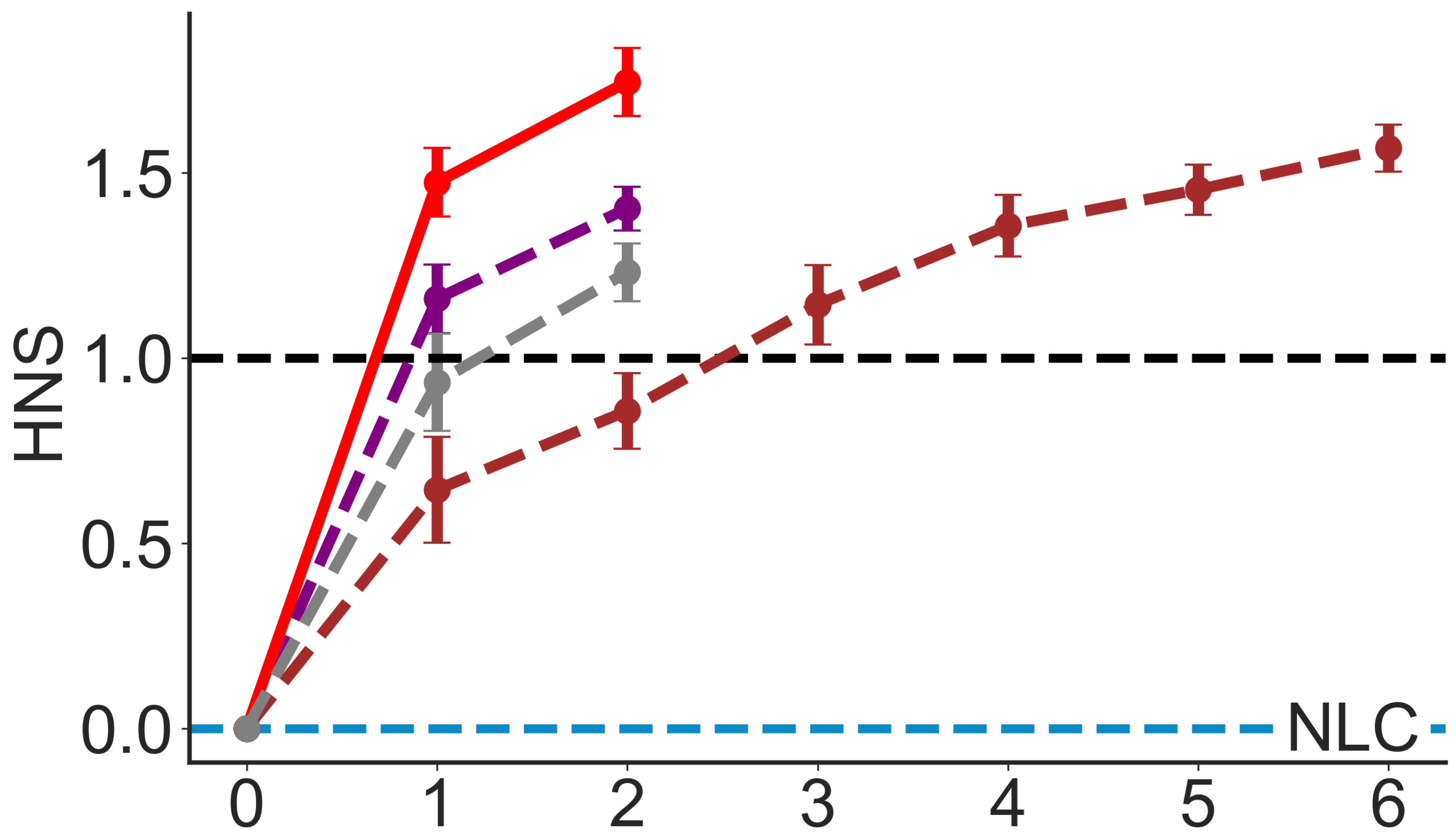}
    \caption{ShadowHand}
    \label{fig:sub10}
  \end{subfigure}
  \hspace{0.01\textwidth}
  \begin{subfigure}[b]{0.31\textwidth}
    \includegraphics[width=\linewidth]{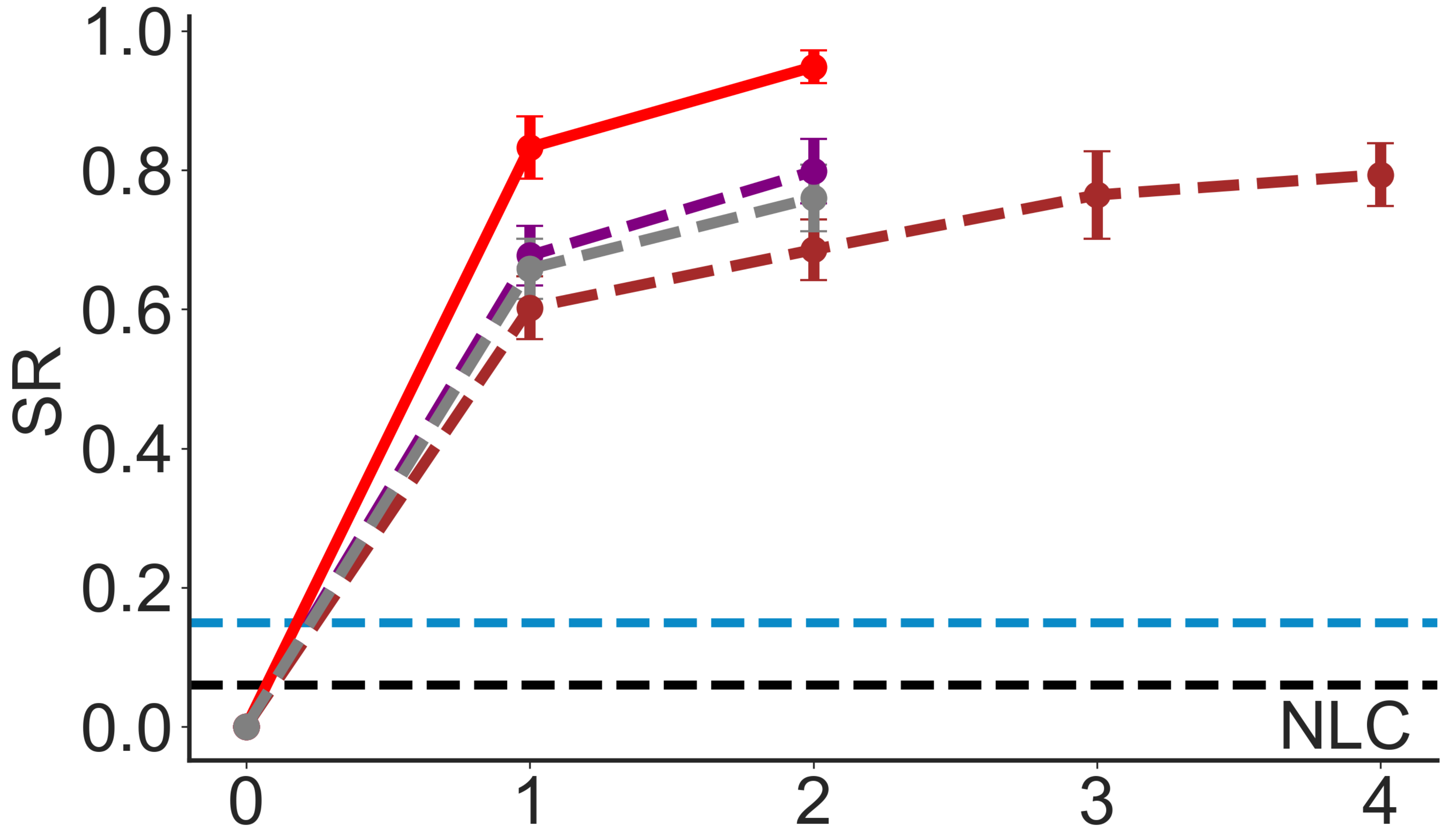}
    \caption{DoorCloseOutward}
    \label{fig:sub11}
  \end{subfigure}
  \hspace{0.01\textwidth}
  \begin{subfigure}[b]{0.31\textwidth}
    \includegraphics[width=\linewidth]{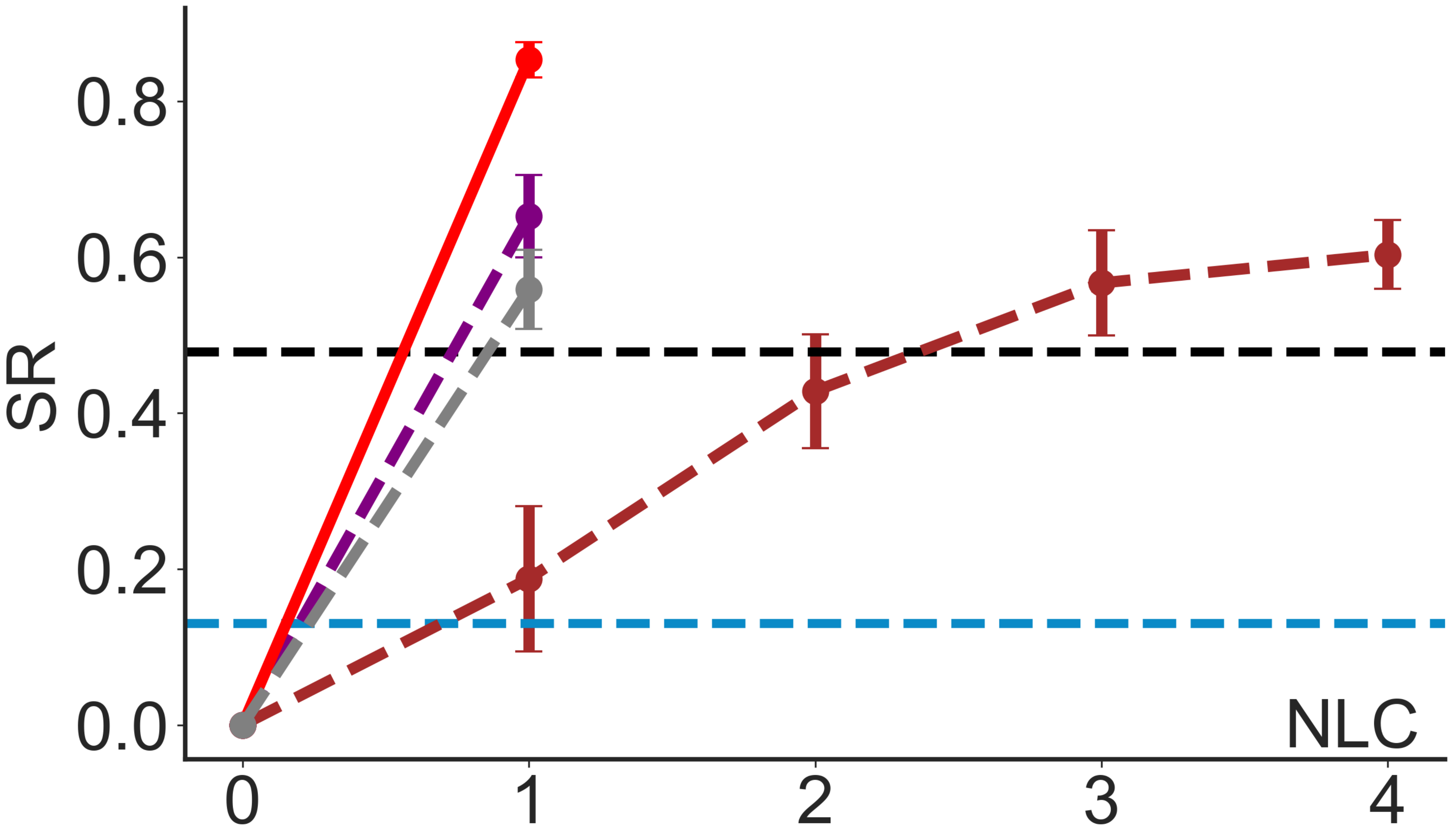}
    \caption{PickCube}
    \label{fig:sub12}
  \end{subfigure}
  \caption{\textcolor{black}{A comparison of the \method, URDP w. BO, URDP w. LLMO and URDP w.o. UABO when all methods utilize identical simulation budgets (NOE).} }
  \label{fig:multi2}
\end{figure}

\textbf{Abl-3: The UABO play a key role in performance improvement}. 
This experimental study systematically compares the numerical optimization capabilities between large language models (LLMs) and Bayesian optimization approaches within our framework. In the first ablation, we replaced \method's UABO module with LLM reflection (denoted as \textbf{URDP w. LLMO}), employing identical prompting strategies to Eureka, thereby configuring both outer-loop (reward components) and inner-loop (reward intensities) optimization entirely through LLMs. Figure~\ref{fig:multi2} demonstrates that even under decoupled optimization conditions, LLM-based numerical optimization underperforms Bayesian optimization, revealing fundamental limitations in LLMs' mathematical optimization capabilities while confirming the critical role of Bayesian methods in enhancing reward function quality. These results substantiate our core hypothesis that LLMs serve more effectively as controllers for numerical optimization tools rather than direct optimizers.  
Subsequent validation experiments replacing UABO with standard BO (\textbf{URDP w. BO}) reveal UABO's superior efficiency (see Figure~\ref{fig:bo_ibo}): \method (UABO) achieves comparable or better performance than URDP w. BO using only $80\%$ of sampling budget across all Isaac tasks, with consistently superior final reward function performance, demonstrating that uncertainty-aware priors accelerate optimal search in reward function design.

\begin{figure}
  \centering
    \includegraphics[width=0.55\linewidth]{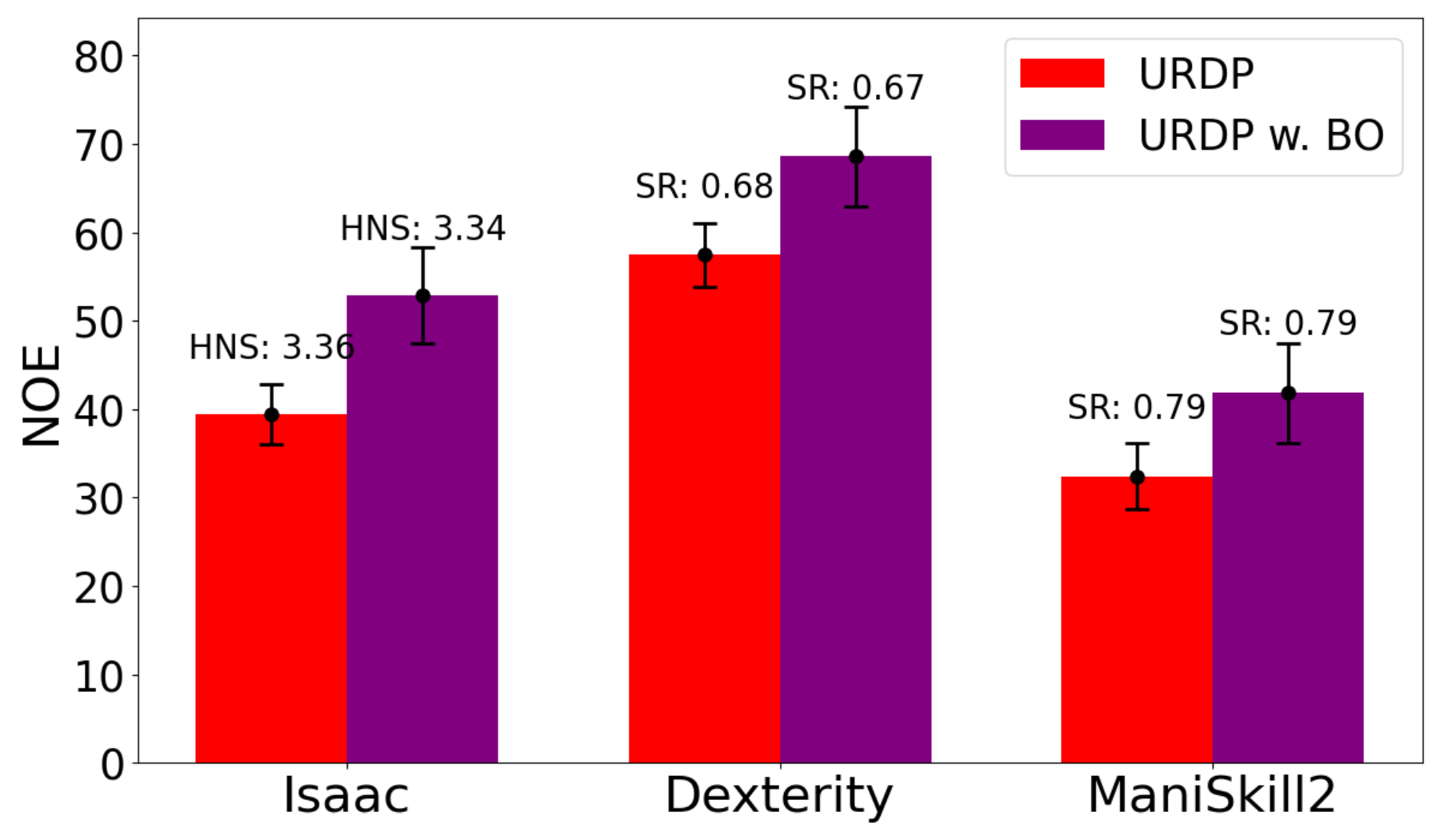}
  \caption{\textcolor{black}{\method achieves a significantly greater improvement in efficiency across all benchmarks compared to BO.} }
  \label{fig:bo_ibo}
\end{figure}

\subsection{Extended Discussion}
\label{sect:discussion}

\textbf{Disc-1: Decoupling and performance degradation in evolutionary search.} As illustrated in Figure~\ref{fig:degradation} (blue line), our experiments reveal performance degradation during evolutionary search in the baseline (Eureka) approach, with performance regression observed in $23\%$ of Isaac tasks. Analysis of the LLMs' decision-making in these cases demonstrates that in $75\%$ of instances, the models modified only the reward intensity hyperparameters while leaving the reward components unchanged, indicating a propensity for erroneous judgments in numerical optimization. 
More critically, we identify \textit{oscillatory phenomenon} in the baseline optimization process (see Figure~\ref{fig:sub23}), where LLMs entered persistent cycles of alternating between limited sets of hyperparameters during evolutionary search. This optimization instability resulted in complete convergence failure, with the baseline system trapped in ineffective, non-progressive iterations.

\begin{figure}[htbp]
  \centering
    \includegraphics[width=0.6\linewidth]{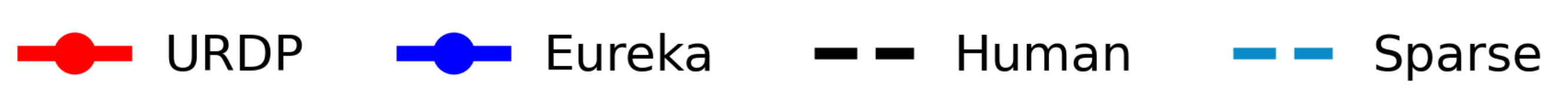}
    \vspace{1pt}  

    \begin{subfigure}[b]{0.31\textwidth}
    \includegraphics[width=\linewidth]{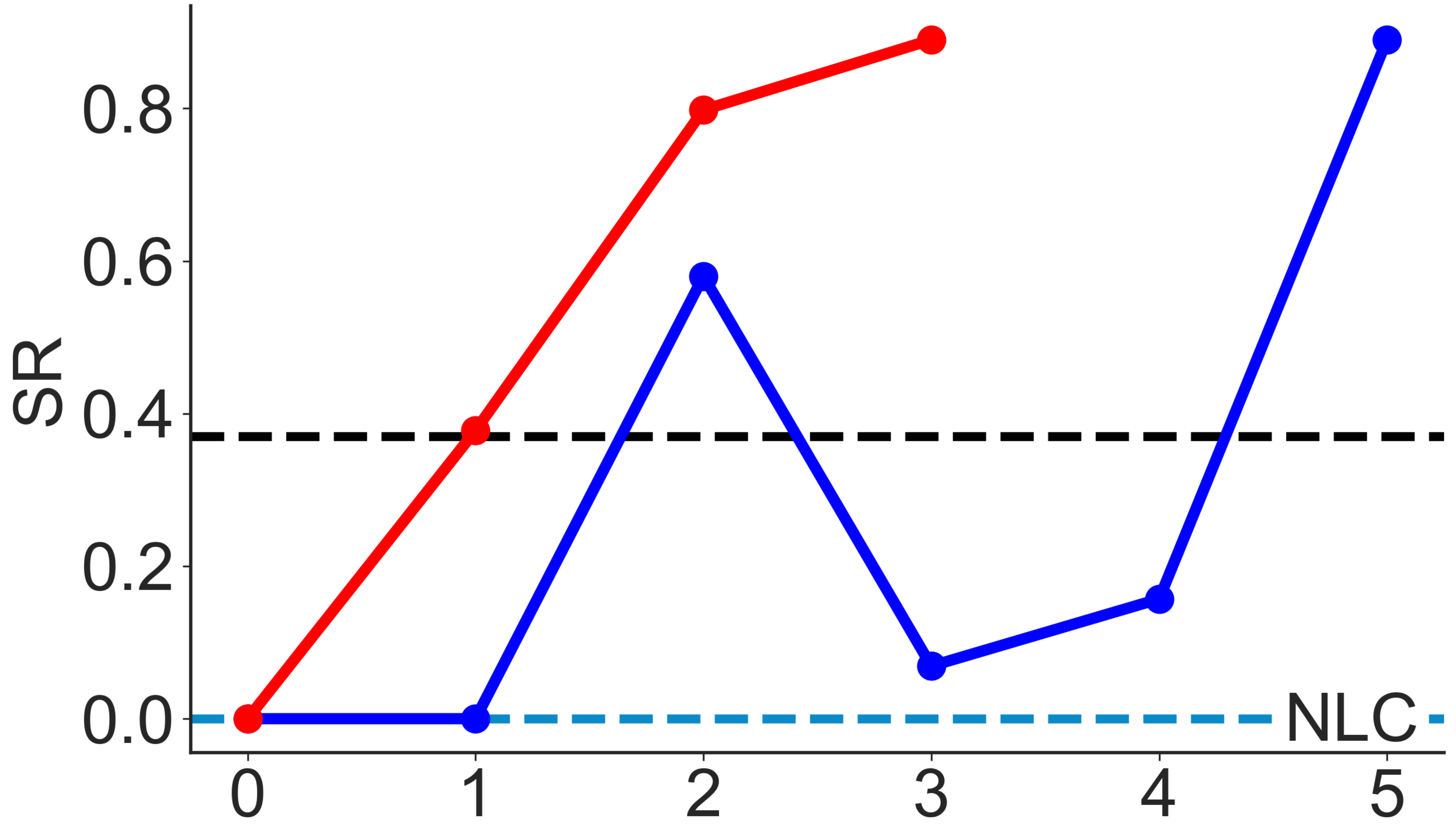}
    \caption{LiftUnderarm}
    \label{fig:sub21}
  \end{subfigure}
  \begin{subfigure}[b]{0.31\textwidth}
    \includegraphics[width=\linewidth]{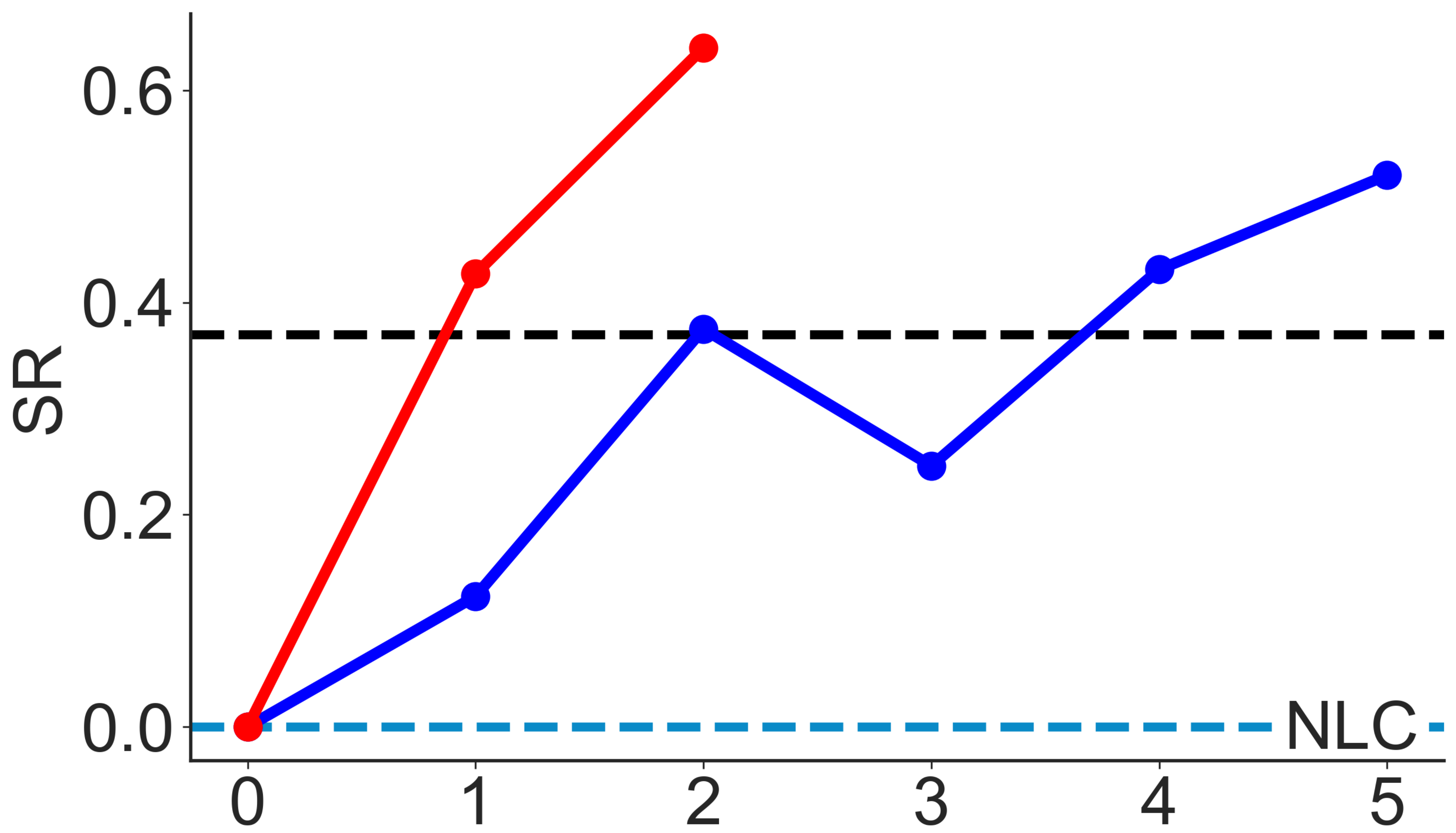}
    \caption{OpenCabinetDrawer}
    \label{fig:sub22}
  \end{subfigure}
  \begin{subfigure}[b]{0.31\textwidth}
    \includegraphics[width=\linewidth]{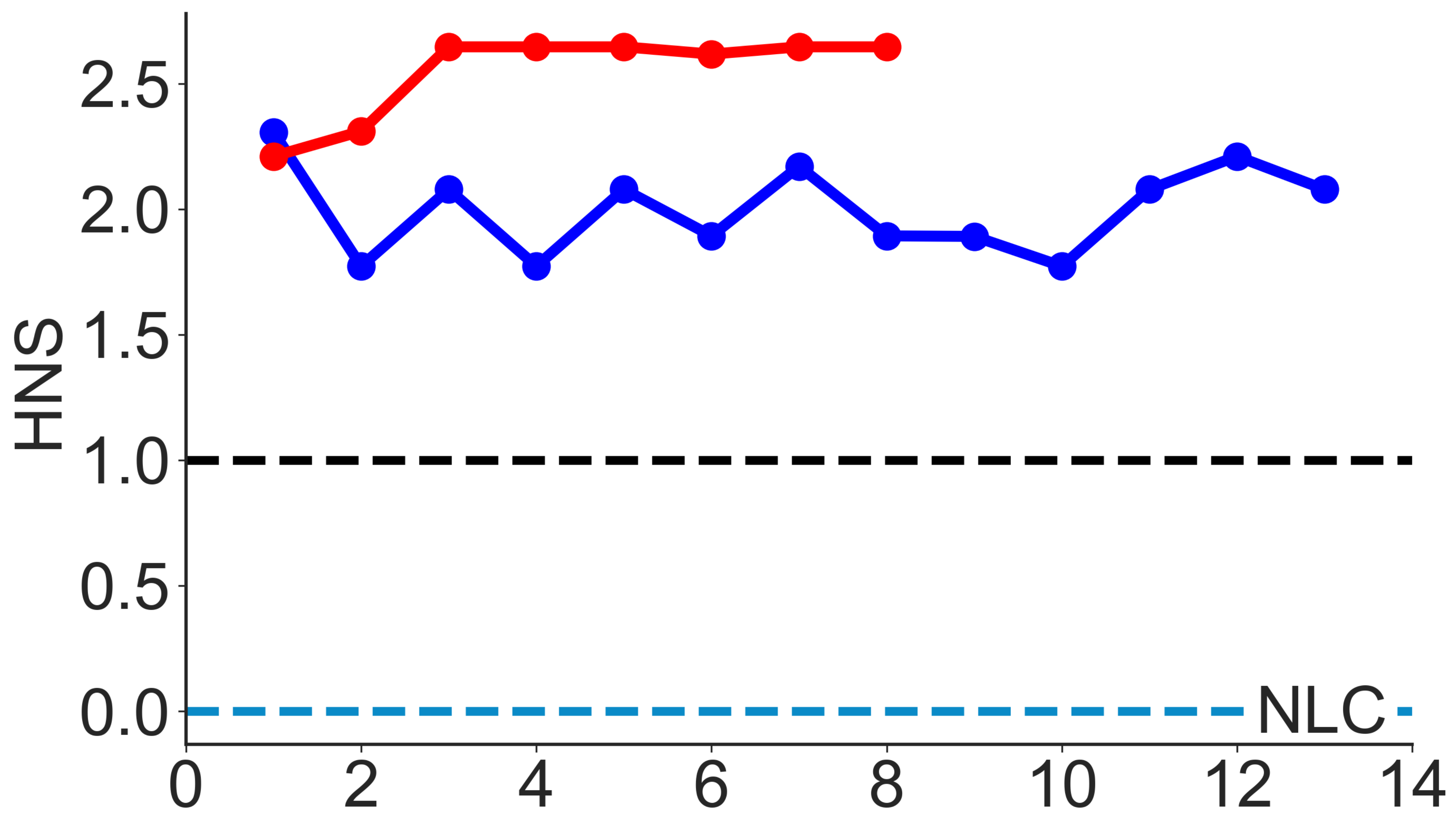}
    \caption{Humanoid}
    \label{fig:sub23}
  \end{subfigure}
  \caption{\textcolor{black}{The baseline (Eureka) exhibits undesirable performance degradation during evolutionary search on certain tasks (a-b). Notably, the oscillatory phenomenon is detected in the baseline method for Task (c), indicating substantial computational waste of the baseline method.} }
  \label{fig:degradation}
\end{figure}

In contrast, \method demonstrates superior optimization efficiency, requiring significantly fewer evolutionary iterations (NLC) while exhibiting markedly reduced instances of performance regression and oscillatory behavior during the optimization process. Our empirical results show performance regression in merely $1\%$ of Isaac tasks, with no observed cases of persistent optimization oscillations. These findings provide a strong empirical explanation for the effectiveness of the decoupled reward design approach implemented in \method. 

\textcolor{black}{
\textbf{Disc-2: The correlation between Reward Component Uncertainty and novel reward discovery}. 
Our investigation reveals a noteworthy phenomenon: the high-uncertainty reward components ($r_{u\uparrow}$) identified by \method frequently correspond to novel reward components not previously utilized in human-designed reward functions, with these components exhibiting significant reward shaping effects during RL training, thereby revealing uncertainty's dual role in enhancing both optimization efficiency and final policy performance. Here, we refer to a reward component as $r_{u\uparrow}$ when the uncertainty of the reward is greater than 0.9. Through systematic comparisons between reward components designed by the \method and conventional human-designed rewards, Figure~\ref{fig:novel_rewards} (a) illustrates a strong correlation between the component uncertainty level and the novelty, suggesting that higher uncertainty components represent more innovative reward formulations. Ablation studies conducted by removing $r_{u\uparrow}$ components from the reward function ($R$ w.o. $r_{u\uparrow}$) and retraining PPO agents reveal statistically significant performance degradation and poorer convergence characteristics in the resulting policies as shown in Figure~\ref{fig:novel_rewards} (b), whereas the complete reward function maintains substantially better optimization stability, collectively providing conclusive evidence that $r_{u\uparrow}$ components play an essential role in effective reward shaping and policy learning regularization. See more examples in App.~\ref{app:reward_shaping}. }

\begin{figure}
  \centering
    \begin{subfigure}[b]{0.45\textwidth}
    \includegraphics[width=\linewidth]{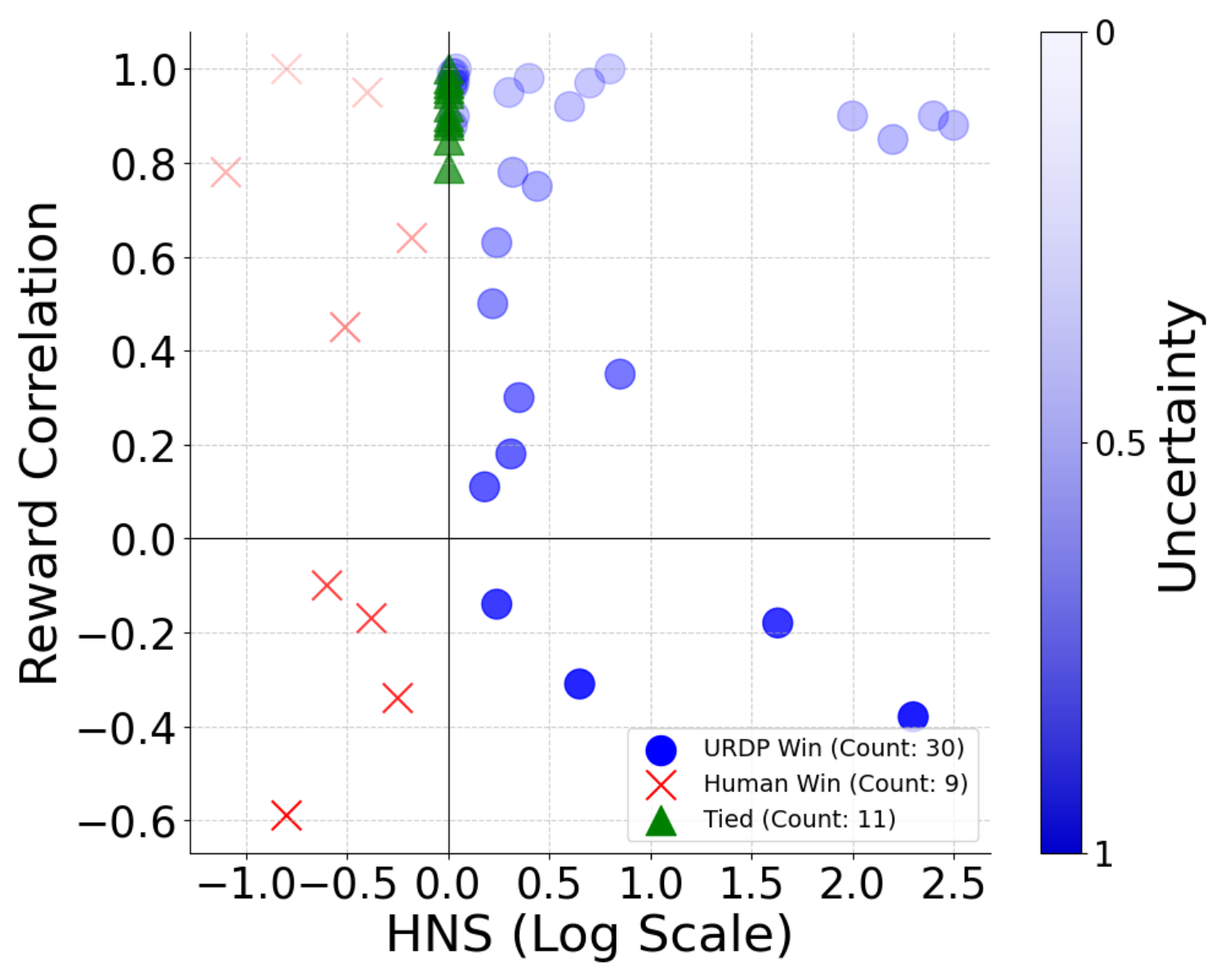}
    \caption{\textcolor{black}{\method Rewards vs. Human Rewards on IsaacGym.} }
    \end{subfigure}
    \begin{subfigure}[b]{0.47\textwidth}
    \includegraphics[width=\linewidth]{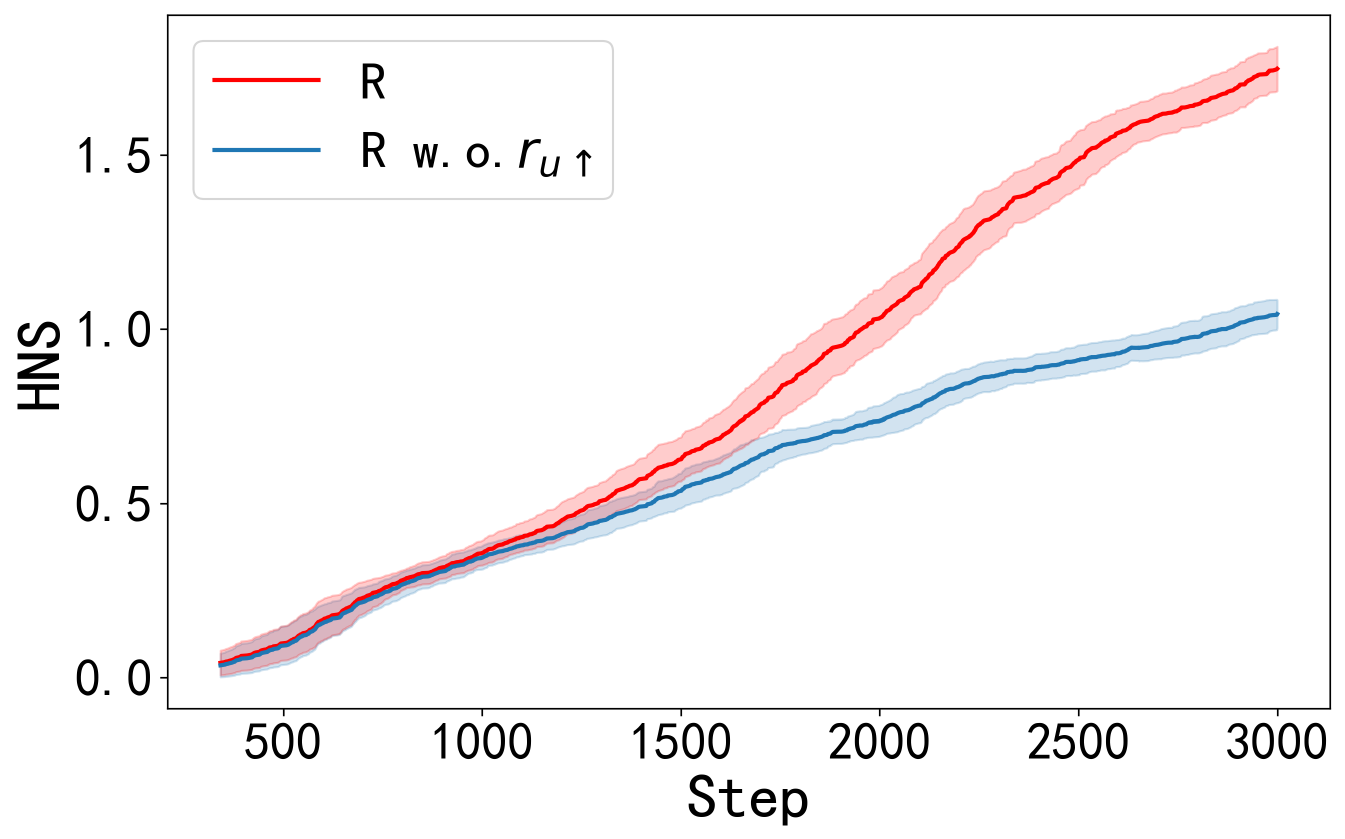}
    \caption{\textcolor{black}{RL training curves of reward functions R vs. R w.o. $r_{u\uparrow}$ on the ShadowHand.} }
    \end{subfigure}
  \caption{\textcolor{black}{(a) High-uncertainty reward components are likely novel reward components that humans have never explored before. (b) $r_{u\uparrow}$ is conducive to achieving higher returns.} }
  \label{fig:novel_rewards}
\end{figure}

\section{Conclusion}

In this work, we present a novel decoupled architecture for automated high-quality reward function design in reinforcement learning. Our framework introduces uncertainty quantification into reward component design, significantly improving the sampling efficiency of LLMs. Furthermore, we propose Uncertainty-Aware Bayesian Optimization to enable efficient hyperparameter search. Extensive experimental results demonstrate that our approach outperforms existing methods in both reward function quality and automated design efficiency.

\clearpage
\section*{Acknowledgments} 

This work has been supported by the program of National Natural Science Foundation of China (No. 61906195).

\bibliography{main}
\bibliographystyle{tmlr}

\appendix
\clearpage
\appendix
\usemintedstyle{colorful}

\section{Full Prompts}
\label{app:Full_Prompts}

In this section, we provide all prompts in the \method framework.

\begin{center}
Prompt 1: Initial system prompt
\end{center}
\vspace{-1.2em} 
{\color{gray!60!black}
\begin{minted}[
  fontsize=\scriptsize,
  bgcolor=lightgraybg,
  escapeinside=||,                     
  frame=single,           
  breaklines=true,
  breaksymbolleft={},
  tabsize=2
]{text}
You are a reward engineer trying to write reward functions to solve reinforcement learning tasks as effective as possible.
Your goal is to write a reward function for the environment that will help the agent learn the task described in text. 
Your reward function should use useful variables from the environment as inputs. As an example,
the reward function signature can be: 
@torch.jit.script
def compute_reward(object_pos: torch.Tensor, goal_pos: torch.Tensor) -> Tuple[torch.Tensor, Dict[str, torch.Tensor]]:
    ...
    return reward, {}
Since the reward function will be decorated with @torch.jit.script,
please make sure that the code is compatible with TorchScript (e.g., use torch tensor instead of numpy array). 
Make sure any new tensor or variable you introduce is on the same device as the input tensors. 
\end{minted}
}

\begin{center}
Prompt 2: Reward reflection and feedback
\end{center}
\vspace{-1.2em}

{\color{gray!60!black}
\begin{minted}[
  fontsize=\scriptsize,
  bgcolor=lightgraybg,
  escapeinside=||,                     
  frame=single,           
  breaklines=true,
  breaksymbolleft={},
  tabsize=2
]{text}
We trained a RL policy using the provided reward function code and tracked the values of the individual components in the reward function as well as global policy metrics such as success rates and episode lengths after every {epoch_freq} epochs and the maximum, mean, minimum values encountered:
<REWARD REFLECTION HERE1>

|\textbf{\color{black}We calculated a score for each sample based on the uncertainty of the reward term. We then calculated the standard and extreme deviations of all the sample scores in this iteration, which were as follows:}|
|\textbf{\color{black}<REWARD REFLECTION HERE2>}|

|{\color{black}\bfseries Please adopt the following recommendations for the next iteration of reward function generation:}|
    |\textcolor{black}{\textbf{(1) If the standard deviation is less than 0.05 and the extreme deviation is less than 0.1, it is recommended to stop the iteration and instead encourage exploration of new combinations of reward terms.}}|
    |\textcolor{black}{\textbf{(2) For all reward items with higher scores it is recommended to keep them and for those with lower scores it is recommended to remove them.}}|
    |\textcolor{black}{\textbf{(3) Only the combination of reward items and the content of the reward function need to be optimised, not the numerical optimisation.}}|

Please carefully analyze the policy feedback and provide a new, improved reward function that can better solve the task. Some helpful tips for analyzing the policy feedback:
    (1) If the success rates are always near zero, then you must rewrite the entire reward function
    (2) If the values for a certain reward component are near identical throughout, then this means RL is not able to optimize this component as it is written. You may consider
        (a) Changing its scale or the value of its temperature parameter
        (b) Re-writing the reward component 
        (c) Discarding the reward component
    (3) If some reward components' magnitude is significantly larger, then you must re-scale its value to a proper range
Please analyze each existing reward component in the suggested manner above first, and then write the reward function code. 
\end{minted}
}
\begin{center}
Prompt 3: Code formatting tip
\end{center}
\vspace{-1.2em} 
{\color{gray!60!black}
\begin{minted}[
  fontsize=\scriptsize,
  bgcolor=lightgraybg,
  escapeinside=||,                  
  frame=single,           
  breaklines=true,
  breaksymbolleft={},
  tabsize=2
]{text}
The output of the reward function should consist of two items:
    (1) the total reward,
    (2) a dictionary of each individual reward component.
The code output should be formatted as a python code string: "```python ... ```".

Some helpful tips for writing the reward function code:
    (1) You may find it helpful to normalize the reward to a fixed range by applying transformations like torch.exp to the overall reward or its components
    (2) If you choose to transform a reward component, then you must also introduce a temperature parameter inside the transformation function; this parameter must be a named variable in the reward function and it must not be an input variable. Each transformed reward component should have its own temperature variable
    (3) Make sure the type of each input variable is correctly specified; a float input variable should not be specified as torch.Tensor
    (4) Most importantly, the reward code's input variables must contain only attributes of the provided environment class definition (namely, variables that have prefix self.). Under no circumstance can you introduce new input variables. 
\end{minted}
}

\section{Benchmark Details}
\label{app:benchmark_details}

\subsection{An Introduction to the Benchmarks}

\textbf{Isaac}.The Isaac Gym benchmark includes a broad set of continuous control tasks, covering locomotion, balancing, aerial control, and dexterous manipulation. Robots range from low-DoF systems (e.g., Cartpole, Ball Balance) to complex agents such as Humanoid, Anymal, AllegroHand, and ShadowHand. Each task presents different control challenges, requiring precise joint coordination, stable gait generation, or fine-grained object interaction.
All tasks provide observations including joint positions, velocities, root orientation, and task-specific data such as object pose or goal location. The control mode varies by task—torque, velocity, or end-effector control—depending on the robot type. Randomization of initial states and physical parameters (e.g., mass, friction) is applied during training to improve robustness and generalization. The benchmark emphasizes both low-level motor control and high-level strategy in physics-rich environments.

\textbf{Dexterity}.The Dexterous benchmark focuses on dexterous manipulation using the 24-DoF ShadowHand across a wide range of object-centric tasks. These include stacking blocks, turning faucets, opening doors, rotating bottles, catching objects, and tool use. The tasks require precise finger control, contact-rich interactions, and adaptability to diverse object geometries and behaviors.
Each environment provides proprioceptive input (joint states, fingertip positions), as well as object-related observations (pose, velocity, goal state). Control is applied via joint position or velocity commands. The tasks involve significant variability in object placement, orientation, and physical properties, encouraging the development of general and robust manipulation policies. The benchmark highlights the challenge of high-dimensional motor coordination in real-world-like, unstructured settings. 

\textbf{Maniskill2}. In the ManiSkill2 environment, a 7-DoF Franka Panda robotic arm is used by default. For tasks focused on stationary manipulation—such as Lift Cube, Pick Cube, Turn Faucet, and Stack Cube—a fixed-base arm configuration is employed. In contrast, tasks involving mobility, such as Open Cabinet Door and Open Cabinet Drawer, utilize a single-arm robot mounted on a Sciurus17 mobile base. The Push Chair task is handled by a dual-arm system, also equipped with the Sciurus17 base.The observation space includes robot-centric data like joint angles, joint velocities, and the base's pose (position and orientation in the world frame), along with task-specific inputs such as goal coordinates and end-effector locations. Control is performed in end-effector delta pose mode, which directly manages changes in 3D translation and orientation, the latter expressed in axis-angle form relative to the end-effector's frame.Each task features variability in key parameters, including the initial and goal states of the object being manipulated, the robot’s starting joint configuration, and physical dynamics like friction and damping.

\subsection{ManiSkill2 Environment Details}
\label{app:maniskill2_details}
\textcolor{black}{
We provide detailed information about the ManiSkill2 environment in this section.  
Detailed information about the Isaac and Dexterity environments is the same as in the Eureka (see the content in the appendix of the paper~\cite{ma2024eureka}). 
For each environment, we list its observation and action dimensions, the original description of the task, and the task fitness function $F$.}

\begin{table}[h]
\centering
\begin{tabular}{@{}p{0.95\textwidth}@{}}
\toprule
\multicolumn{1}{c}{ManiSkill2 Environments} \\
\midrule

Environment (obs dim, action dim) \\
Task description  \\
Task fitness function $F$ \\
\midrule

\textcolor{black!70}{\texttt{PickCube-v0}} (51,7) \\
This class corresponds to the PickCube task in ManiSkill. This environment consists of a robot arm and a cube placed on the table. At the beginning, the cube appears at a random location and orientation. The agent must control the gripper to approach, grasp, and lift the cube above a threshold height. The challenge lies in object localization, precise control, and stable grasping. \\
\textcolor{black!70}{1[dist\_cube\_goal < 0.05]} \\
\midrule

\textcolor{black!70}{\texttt{LiftCube-v0}} (42,7) \\
This environment corresponds to the LiftCube task. The agent is required to grasp a cube and lift it vertically above a specific height threshold. The task emphasizes accurate vertical movement and stable grasping without disturbing the cube's pose.  \\
\textcolor{black!70}{1[cube\_height > 0.2]} \\
\midrule

\textcolor{black!70}{\texttt{TurnFaucet-v0}} (40, 7) \\
This class corresponds to the TurnFaucet task. A faucet handle is mounted on a wall, and the agent must rotate it clockwise or counterclockwise to a target angle. The challenge lies in establishing proper contact, applying sufficient torque, and maintaining stability during the turning motion.  \\
\textcolor{black!70}{1[rotation\_reward < 0.1]} \\
\midrule

\textcolor{black!70}{\texttt{OpenCabinetDoor-v1}} (75, 11) \\
This environment corresponds to the OpenCabinetDoor task. A cabinet with a side-hinged door is presented. The agent must locate and pull the door handle to open it. The task involves estimating the door's hinge axis, approaching from an appropriate angle, and applying a pulling force that aligns with the door's rotation. \\
\textcolor{black!70}{1[goal\_diff < 0.1 and is\_static]} \\
\midrule

\textcolor{black!70}{\texttt{OpenCabinetDrawer-v1}} (75, 11) \\
This class corresponds to the OpenCabinetDrawer task. The robot must open a drawer embedded in a cabinet by locating the handle and pulling it outward. The task requires both accurate handle grasping and force application along a linear trajectory, while avoiding excessive torque that could misalign the drawer. \\
\textcolor{black!70}{1[goal\_diff < 0.05 and is\_static]} \\
\midrule

\textcolor{black!70}{\texttt{PushChair-v1}} (131, 18) \\
This environment corresponds to the PushChair task. The robot must push a movable chair from its initial location to a designated target region. The chair is free to rotate and slide. The agent needs to make strategic contact with the chair body and adjust its pushing direction dynamically to avoid misalignment and ensure accurate placement. \\
\textcolor{black!70}{1[chair\_to\_target\_dist < 0.3 and chair\_tilt < 0.2]} \\

\bottomrule
\end{tabular}
\end{table}

\newpage
\section{Implementation Details}
\label{app:implementation_details}

\subsection{Implementation Details of Sampling and Uncertainty}
\label{app:impl_sampling}

\textcolor{black}{
When explicitly prompted to generate diverse outputs, LLMs inevitably produce varying textual expressions for semantically equivalent content - a phenomenon particularly evident in reward function generation where code implementations may differ lexically while encoding identical reward semantics. For instance, as demonstrated in Section~\ref{app:important_for_uncertainty}, two LLM-generated reward function samples might both incorporate velocity-based rewards while exhibiting completely different textual formulations. Failure to detect and eliminate such semantic redundancies leads to computationally expensive duplicate evaluations that cannot be effectively identified through surface-level text matching, necessitating deeper semantic analysis for accurate deduplication.}

\textcolor{black}{
Therefore, the \method utilizes the BGE-M3 model~\cite{xiao2024c} for the purpose of semantic similarity assessment, whereas Python's built-in SequenceMatcher is employed for text similarity assessment. 
The uncertainty quantification for both reward components and reward functions is implemented through similarity comparison. Specifically, the component uncertainty score ($U(r_{i,:})$) is computed by comparing a given reward component against all components generated within the same iteration. The reward function uncertainty ($U(R_i)$) score is derived through comparison with all functionally similar reward functions from the same iteration (referred to as a similarity group). From each similarity group, only one reward function is randomly selected for training, while the remaining ones are discarded, thereby filtering out redundant reward functions. See Alg.~\ref{alg: Ur_UR_B} for the pseudocode. We employ a similarity threshold of 0.95, where the final similarity metric is determined as the maximum value between semantic similarity and textual similarity.}

\begin{algorithm}[h]
\caption{Uncertainty Quantification in the \method}\label{alg: Ur_UR_B}
\KwIn{$K$ reward component samples \( \{r_{i,1}, r_{i,2}, \dots, r_{i,m}\}_{i \in K} \), text models \(S_{text}\) and semantic models \(S_{semantic}\).}
\KwOut{Reward components uncertainty \( \{U(r_{i,1}), \dots, U(r_{i,m}) \} \), reward functions uncertainty \( U(R_i) \) and similarity sample group \( B_i \).}
\ForEach{\(\ R_{i \in K} \) }{
    \ForEach{\(\ r_{i,1}, r_{i,2}, \dots, r_{i,m}\)}{
        \ForEach{\(\ r_{j,1}, r_{j,2}, \dots, r_{j\in k,m}, j>i\)}{
            \If{max(\(S_{text}(r_{i,m}, r_{j,m})\), \(S_{semantic}(r_{i,m},r_{j,m})\))>0.95}{
            count\(_m\)+1
            }
        }
        \(U(r_{i,m})\)=1-count\(_m\)/\(K\)
    }
    \(U(R_i)=U(r_i)/(U(r_i)+\dots+U(r_k))\) \\
    \ForEach{\(R_{j>i}\)}{
    \If{max(\(S_{text}(R_i, R_j)\), \(S_{semantic}(R_i, R_j)\))>0.95 and \(R_i\) not in other B}{
    Add \(R_j\) to \(B_i\)
    }
    }
}
\Return \( \{U(r_{i,1}), \dots, U(r_{i,m}) \} \), \( U(R_i) \),  \( B_{i,n=1} \).
\end{algorithm}

\subsection{Hyper-parameter Settings}
\label{app:Hyper-parameter_Settings}

\textcolor{black}{
All hyperparameters in \method are listed in Table~\ref{tab:hyper-parameter_urdp}. 
The reinforcement learning algorithms employed for validation maintain the default configurations specified for each respective environment, with all hyperparameters comprehensively documented in Tables \ref{tab:hyper-parameter_sac} and \ref{tab:hyper-parameter_ppo}. }

\begin{table}[htb]
    \centering
    \caption{Hyperparameters of \method. }
    \begin{tabular}{ll}
    \toprule
    \textbf{Hyper-parameter} & \textbf{Value} \\
    \midrule
    Quantity of the reward samples   $K$   & 16 \\
    Maximum \# of iterations $N_{outer}$   & 10 \\
    Baseline \# of iterations $N_{inner}$  & 10 \\
    maximum similarity $\omega$            & 0.95 \\
    \bottomrule
    \end{tabular}
    \label{tab:hyper-parameter_urdp}
\end{table}
    
\begin{table}[htb]
    \centering
    \caption{Hyperparameters of SAC algorithm applied to Maniskill2.}
    \begin{tabular}{ll}
    \toprule
    \textbf{Hyper-parameter} & \textbf{Value} \\
    \midrule
    Discount factor $\gamma$ & 0.95 \\
    Target update frequency & 1 \\
    Learning rate & $3 \times 10^{-4}$ \\
    Train frequency & 8  \\
    Soft update $\tau$ & $5 \times 10^{-3}$ \\
    Gradient steps & 4  \\
    Learning starts & 4000 \\
    Hidden units per layer & 256 \\
    Batch Size & 1024  \\
    \# of layers & 2  \\
    Initial temperature &  0.2  \\
    Rollout steps per episode &  200 \\
    \bottomrule
    \end{tabular}
    \label{tab:hyper-parameter_sac}
\end{table}
    
\vspace{1em}
    
\begin{table}[htb]
    \centering
    \caption{Hyperparameters of PPO algorithm applied to each task. }
    \begin{tabular}{ll}
    \toprule
    \textbf{Hyper-parameter} & \textbf{Value} \\
    \midrule
    Discount factor $\gamma$ & 0.99 (Isaac), 0.96 (Dexterity), 0.85 (ManiSkill2) \\
    \# of epochs per update & 8 (Isaac), 5 (Dexterity), 15 (ManiSkill2) \\
    Learning rate & $5 \times 10^{-4}$ (Isaac), $3 \times 10^{-4}$ (Dexterity, ManiSkill2) \\
    Batch size & 32768, 16384, 8192 (Isaac), 16384 (Dexterity), 400 (ManiSkill2) \\
    Target KL divergence & 0.008 (Isaac), 0.016 (Dexterity), 0.05 (ManiSkill2) \\
    \# of layers & 3 (Isaac, Dexterity), 2 (ManiSkill2) \\
    \# of steps per update & 16 (Isaac), 8 (Dexterity), 3200 (ManiSkill2) \\
    \bottomrule
    \end{tabular}
    \label{tab:hyper-parameter_ppo}
\end{table}

\section{Case Studies}
\label{app:examples}

\subsection{Case Study 1: LLMs in Numerical Optimization}
\label{app:example_llm_optimization}

This study employs two comparative examples to visualize the sdifferences between \method and Eureka in optimization processes. Example 1(a) and 1(b) present the respective design trajectories of \method and Eureka for the ShadowHand task, where red annotations denote reward components and \textcolor{black}{blue} text indicates reward intensity hyperparameters. 

Analysis of Example 1(a) demonstrates that during evolutionary search iterations, Eureka exclusively modifies reward intensity hyperparameters while failing to improve reward components. Despite multiple optimization attempts, this approach yields degraded performance. This finding reveals a critical limitation: when simultaneously optimizing both reward components and their strengths, LLMs cannot effectively utilize their inherent advantages in semantic correlation and autoregressive modeling, while their deficiencies in numerical optimization become particularly pronounced. 

In contrast, \method's decoupled alternating optimization demonstrates systematic effectiveness. As shown in Example 1(b), \method proactively explores novel reward components (e.g., replacing ``distance reward'' with ``velocity reward''), achieving a 70.3\% performance improvement (from 0.929 to 1.583) before hyperparameter optimization. Subsequent Bayesian optimization further enhances performance with an average 22.1\% gain, culminating in a 102.5\% total improvement over the initial configuration. Example 2 confirms this consistent advantage, providing compelling visual evidence of \method's superior optimization characteristics.

\begin{center}
\textcolor{black}{Example 1(a): Eureka reward functions on ShadowHand (Iteration 2, HNS: 1.045; Iteration 3, HNS: 0.898; Iteration 4, HNS: 1.019; Iteration 5 HNS: 0.955).}
\end{center}
\vspace{-0.5em} 
\begin{minted}[
  fontsize=\scriptsize,
  bgcolor=lightgraybg,
  escapeinside=@@,         
  frame=single,
  breaklines=true,
  breaksymbolleft={},
  linenos
]{python}
Iteration @\textcolor{black}{2}@:
def @\textcolor{black}{compute\_reward}@(object_rot: torch.Tensor, goal_rot: torch.Tensor, object_angvel: torch.Tensor) -> Tuple[torch.Tensor, Dict[str, torch.Tensor]]:
    # Temperature parameters for reward components
    @\textcolor{blue}{orientation\_temp: float = 5.0}@  # Increased sensitivity for orientation alignment
    @\textcolor{blue}{angular\_velocity\_temp: float = 0.5}@  # Adjusted sensitivity for angular velocity
    
    # Compute the difference in orientation between the object and the goal
    orientation_diff = @\textcolor{black}{1.0}@ - torch.abs(torch.sum(object_rot * goal_rot, dim=@\textcolor{black}{-1}@))  # Quaternion dot product
    
    # Reward for minimizing the orientation difference
    orientation_reward = torch.exp(-orientation_temp * orientation_diff)
    
    # Reward for spinning the object (encouraging non-zero angular velocity)
    angular_velocity_magnitude = torch.norm(object_angvel, dim=@\textcolor{black}{-1}@)
    angular_velocity_reward = torch.tanh(angular_velocity_temp * angular_velocity_magnitude)  # Rescaled to [-1, 1]
    
    # Combine the rewards with a focus on orientation alignment
    total_reward = orientation_reward * (@\textcolor{black}{1.0}@ + angular_velocity_reward)  # Weighted combination
    
    # Return the total reward and individual components
    reward_components = {
        "orientation_reward": @\textcolor{red}{orientation\_reward}@,
        "angular_velocity_reward": @\textcolor{red}{angular\_velocity\_reward}@
    }
    return total_reward, reward_components

########################################################################################################

Iteration @\textcolor{black}{3}@:
def @\textcolor{black}{compute\_reward}@(object_rot: torch.Tensor, goal_rot: torch.Tensor, object_angvel: torch.Tensor) -> Tuple[torch.Tensor, Dict[str, torch.Tensor]]:
    # Temperature parameters for reward components
    @\textcolor{blue}{orientation\_temp: float = 10.0}@  # Increased sensitivity for orientation alignment
    @\textcolor{blue}{angular\_velocity\_temp: float = 1.0}@  # Adjusted sensitivity for angular velocity
    
    # Compute the difference in orientation between the object and the goal
    orientation_diff = @\textcolor{black}{1.0}@ - torch.abs(torch.sum(object_rot * goal_rot, dim=@\textcolor{black}{-1}@))  # Quaternion dot product
    
    # Reward for minimizing the orientation difference
    orientation_reward = torch.exp(-orientation_temp * orientation_diff)
    
    # Reward for spinning the object (encouraging non-zero angular velocity)
    angular_velocity_magnitude = torch.norm(object_angvel, dim=@\textcolor{black}{-1}@)
    angular_velocity_reward = torch.tanh(angular_velocity_temp * angular_velocity_magnitude)  # Rescaled to [-1, 1]
    
    # Combine the rewards with a focus on orientation alignment
    total_reward = orientation_reward * (@\textcolor{black}{1.0}@ + angular_velocity_reward)  # Weighted combination
    
    # Return the total reward and individual components
    reward_components = {
        "orientation_reward": @\textcolor{red}{orientation\_reward}@,
        "angular_velocity_reward": @\textcolor{red}{angular\_velocity\_reward}@
    }
    return total_reward, reward_components

########################################################################################################

Iteration @\textcolor{black}{4}@:
def @\textcolor{black}{compute\_reward}@(object_rot: torch.Tensor, goal_rot: torch.Tensor, object_angvel: torch.Tensor) -> Tuple[torch.Tensor, Dict[str, torch.Tensor]]:
    # Temperature parameters for reward components
    @\textcolor{blue}{orientation\_temp: float = 10.0}@  # Increased sensitivity for orientation alignment
    @\textcolor{blue}{angular\_velocity\_temp: float = 2.0}@  # Adjusted sensitivity for angular velocity
    
    # Compute the difference in orientation between the object and the goal
    orientation_diff = @\textcolor{black}{1.0}@ - torch.abs(torch.sum(object_rot * goal_rot, dim=@\textcolor{black}{-1}@))  # Quaternion dot product
    
    # Reward for minimizing the orientation difference
    orientation_reward = torch.exp(-orientation_temp * orientation_diff)
    
    # Reward for spinning the object (encouraging non-zero angular velocity)
    angular_velocity_magnitude = torch.norm(object_angvel, dim=@\textcolor{black}{-1}@)
    angular_velocity_reward = torch.tanh(angular_velocity_temp * angular_velocity_magnitude)  # Rescaled to [-1, 1]
    
    # Combine the rewards with a focus on orientation alignment
    total_reward = orientation_reward * (@\textcolor{black}{1.0}@ + angular_velocity_reward)  # Weighted combination
    
    # Return the total reward and individual components
    reward_components = {
        "orientation_reward": @\textcolor{red}{orientation\_reward}@,
        "angular_velocity_reward": @\textcolor{red}{angular\_velocity\_reward}@
    }
    return total_reward, reward_components

########################################################################################################

Iteration @\textcolor{black}{5}@:
def @\textcolor{black}{compute\_reward}@(object_rot: torch.Tensor, goal_rot: torch.Tensor, object_angvel: torch.Tensor) -> Tuple[torch.Tensor, Dict[str, torch.Tensor]]:
    # Temperature parameters for reward components
    @\textcolor{blue}{orientation\_temp: float = 50.0}@  # Increased sensitivity for orientation alignment
    @\textcolor{blue}{angular\_velocity\_temp: float = 2.0}@  # Adjusted sensitivity for angular velocity
    
    # Compute the difference in orientation between the object and the goal
    orientation_diff = @\textcolor{black}{1.0}@ - torch.abs(torch.sum(object_rot * goal_rot, dim=@\textcolor{black}{-1}@))  # Quaternion dot product
    
    # Reward for minimizing the orientation difference
    orientation_reward = torch.exp(-orientation_temp * orientation_diff)
    
    # Reward for spinning the object (encouraging non-zero angular velocity)
    angular_velocity_magnitude = torch.norm(object_angvel, dim=@\textcolor{black}{-1}@)
    angular_velocity_reward = torch.tanh(angular_velocity_temp * angular_velocity_magnitude)  # Rescaled to [-1, 1]
    
    # Combine the rewards with a focus on orientation alignment
    total_reward = orientation_reward * (@\textcolor{black}{1.0}@ + angular_velocity_reward)  # Weighted combination
    
    # Return the total reward and individual components
    reward_components = {
        "orientation_reward": @\textcolor{red}{orientation\_reward}@,
        "angular_velocity_reward": @\textcolor{red}{angular\_velocity\_reward}@
    }
    return total_reward, reward_components
\end{minted}

\begin{center}
\textcolor{black}{Example 1(b): \method reward functions on ShadowHand. Iteration 1 (before UABO), HNS: 0.929; Iteration 1 (after UABO), HNS: 1.165; Iteration 2 (before UABO), HNS: 1.583; Iteration 2 (after UABO), HNS: 1.882. }
\end{center}
\vspace{-0.84em} 
\begin{minted}[
  fontsize=\scriptsize,
  bgcolor=lightgraybg,
  escapeinside=@@,       
  frame=single,
  breaklines=true,
  breaksymbolleft={},
  linenos
]{python}
Iteration @\textcolor{black}{1}@(before UABO):
def @\textcolor{black}{compute\_reward}@(object_rot: torch.Tensor, goal_rot: torch.Tensor) -> Tuple[torch.Tensor, Dict[str, torch.Tensor]]:
    # Temperature parameters
    @\textcolor{blue}{orientation\_temp = 1.0}@  # Reduced temperature for better sensitivity
    @\textcolor{blue}{distance\_temp = 1.0}@     # Temperature for distance-based reward

    # Compute the dot product between the object and goal quaternions
    dot_product = torch.sum(object_rot * goal_rot, dim=@\textcolor{black}{1}@)

    # Ensure the dot product is within the valid range [-1, 1]
    dot_product = torch.clamp(dot_product, @\textcolor{black}{-1.0, 1.0}@)

    # Compute the angle difference between the quaternions
    angle_diff = torch.acos(@\textcolor{black}{2.0}@ * dot_product**@\textcolor{black}{2}@ - @\textcolor{black}{1.0}@)

    # Orientation reward: exponential transformation of the angle difference
    orientation_reward = torch.exp(-orientation_temp * angle_diff)

    # Distance-based reward: encourages reducing the angle difference
    distance_reward = -angle_diff  # Negative because we want to minimize the difference

    # Success bonus: reward for achieving the target orientation
    success_threshold: float = @\textcolor{black}{0.05}@  # Easier threshold for success
    success_bonus = torch.where(angle_diff < success_threshold, @\textcolor{black}{100.0, 0.0}@)  # Larger bonus

    # Total reward: weighted sum of orientation reward, distance reward, and success bonus
    total_reward = orientation_reward + distance_reward + success_bonus

    # Dictionary of individual reward components
    reward_components = {
        "orientation_reward": @\textcolor{red}{orientation\_reward}@,
        "distance_reward": @\textcolor{red}{distance\_reward}@,
    }

    return total_reward, reward_components

########################################################################################################

Iteration @\textcolor{black}{1}@(after UABO):
def @\textcolor{black}{compute\_reward}@(object_rot: torch.Tensor, goal_rot: torch.Tensor) -> Tuple[torch.Tensor, Dict[str, torch.Tensor]]:
    # Temperature parameters
    @\textcolor{blue}{orientation\_temp = 1.1134}@  # Reduced temperature for better sensitivity
    @\textcolor{blue}{distance\_temp = 1.1134}@     # Temperature for distance-based reward

    # Compute the dot product between the object and goal quaternions
    dot_product = torch.sum(object_rot * goal_rot, dim=@\textcolor{black}{1}@)

    # Ensure the dot product is within the valid range [-1, 1]
    dot_product = torch.clamp(dot_product, @\textcolor{black}{-1.0, 1.0}@)

    # Compute the angle difference between the quaternions
    angle_diff = torch.acos(@\textcolor{black}{2.0}@ * dot_product**@\textcolor{black}{2}@ - @\textcolor{black}{1.0}@)

    # Orientation reward: exponential transformation of the angle difference
    orientation_reward = torch.exp(-orientation_temp * angle_diff)

    # Distance-based reward: encourages reducing the angle difference
    distance_reward = -angle_diff  # Negative because we want to minimize the difference

    # Success bonus: reward for achieving the target orientation
    success_threshold: float = @\textcolor{black}{0.05}@  # Easier threshold for success
    success_bonus = torch.where(angle_diff < success_threshold, @\textcolor{black}{100.0, 0.0}@)  # Larger bonus

    # Total reward: weighted sum of orientation reward, distance reward, and success bonus
    total_reward = orientation_reward + distance_reward + success_bonus

    # Dictionary of individual reward components
    reward_components = {
        "orientation_reward": @\textcolor{red}{orientation\_reward}@,
        "distance_reward": @\textcolor{red}{distance\_reward}@
    }

    return total_reward, reward_components

########################################################################################################

Iteration @\textcolor{black}{2}@(before UABO):
def @\textcolor{black}{compute\_reward}@(object_rot: torch.Tensor, goal_rot: torch.Tensor, object_angvel: torch.Tensor) -> Tuple[torch.Tensor, Dict[str, torch.Tensor]]:
    # Temperature parameters for reward components
    @\textcolor{blue}{orientation\_temp = 1}@
    @\textcolor{blue}{velocity\_temp = 0.1}@
    
    # Compute the difference in orientation between the object and the goal
    orientation_diff = torch.norm(object_rot - goal_rot, dim=@\textcolor{black}{-1}@)
    
    # Compute the angular velocity magnitude of the object
    angvel_magnitude = torch.norm(object_angvel, dim=@\textcolor{black}{-1}@)
    
    # Reward for minimizing the orientation difference
    orientation_reward = torch.exp(-orientation_temp * orientation_diff)
    
    # Reward for maintaining a high angular velocity (encourages spinning)
    velocity_reward = torch.exp(-velocity_temp * (@\textcolor{black}{1.0}@ / (angvel_magnitude + @\textcolor{black}{1e-6}@)))
    
    # Combine the rewards with appropriate weights
    total_reward = @\textcolor{black}{0.7}@ * orientation_reward + @\textcolor{black}{0.3}@ * velocity_reward
    
    # Dictionary of individual reward components for logging
    reward_dict = {
        "orientation_reward": @\textcolor{red}{orientation\_reward}@,
        "velocity_reward": @\textcolor{red}{velocity\_reward}@
    }
    
    return total_reward, reward_dict

########################################################################################################

Iteration @\textcolor{black}{2}@(after UABO):
def @\textcolor{black}{compute\_reward}@(object_rot: torch.Tensor, goal_rot: torch.Tensor, object_angvel: torch.Tensor) -> Tuple[torch.Tensor, Dict[str, torch.Tensor]]:
    # Temperature parameters for reward components
    @\textcolor{blue}{orientation\_temp = 1.2642}@
    @\textcolor{blue}{velocity\_temp = 0.3145}@
    
    # Compute the difference in orientation between the object and the goal
    orientation_diff = torch.norm(object_rot - goal_rot, dim=@\textcolor{black}{-1}@)
    
    # Compute the angular velocity magnitude of the object
    angvel_magnitude = torch.norm(object_angvel, dim=@\textcolor{black}{-1}@)
    
    # Reward for minimizing the orientation difference
    orientation_reward = torch.exp(-orientation_temp * orientation_diff)
    
    # Reward for maintaining a high angular velocity (encourages spinning)
    velocity_reward = torch.exp(-velocity_temp * (@\textcolor{black}{1.0}@ / (angvel_magnitude + @\textcolor{black}{1e-6}@)))
    
    # Combine the rewards with appropriate weights
    total_reward = @\textcolor{black}{0.7}@ * orientation_reward + @\textcolor{black}{0.3}@ * velocity_reward
    
    # Dictionary of individual reward components for logging
    reward_dict = {
        "orientation_reward": @\textcolor{red}{orientation\_reward}@,
        "velocity_reward": @\textcolor{red}{velocity\_reward}@
    }
    
    return total_reward, reward_dict
\end{minted}

\begin{center}
\textcolor{black}{Example 2 (a): Eureka reward functions on Humanoid (Iteration 1, HNS: 2.273; Iteration 2, score: 0.412; Iteration 3, HNS: 0.032; Iteration 4, HNS: 0.127).}
\end{center}
\vspace{-0.84em} 
\begin{minted}[
  fontsize=\scriptsize,
  bgcolor=lightgraybg,
  escapeinside=@@,          
  frame=single,
  breaklines=true,
  breaksymbolleft={},
  linenos
]{python}
Iteration @\textcolor{black}{1}@:
def @\textcolor{black}{compute\_reward}@(root_states: torch.Tensor, dt: float) -> Tuple[torch.Tensor, Dict[str, torch.Tensor]]:
    # Extract the velocity of the humanoid's torso (root_states[:, 7:10] contains the linear velocity)
    velocity = root_states[:, @\textcolor{black}{7:10}@]
    
    # Compute the forward speed (we assume the humanoid is moving along the x-axis)
    forward_speed = velocity[:, @\textcolor{black}{0}@]
    
    # Reward for forward speed (scaled to a smaller range)
    @\textcolor{blue}{speed\_temp = 1.0}@  # Reduced temperature for better scaling
    speed_reward = forward_speed * speed_temp
    
    # Reward for consistency (encourage maintaining high speed)
    @\textcolor{blue}{consistency\_temp = 1.0}@
    consistency_reward = torch.exp(-consistency_temp * torch.abs(forward_speed - torch.mean(forward_speed)))
    
    # Total reward combines speed reward and consistency reward
    reward = speed_reward + consistency_reward
    
    # Dictionary of individual reward components
    reward_dict = {
        "speed_reward": @\textcolor{red}{speed\_reward}@,
        "consistency_reward": @\textcolor{red}{consistency\_reward}@,
    }
    
    return reward, reward_dict

########################################################################################################

Iteration @\textcolor{black}{2}@:
def @\textcolor{black}{compute\_reward}@(root_states: torch.Tensor, dt: float) -> Tuple[torch.Tensor, Dict[str, torch.Tensor]]:
    # Extract the velocity of the humanoid's torso (root_states[:, 7:10] contains the linear velocity)
    velocity = root_states[:, @\textcolor{black}{7:10}@]
    
    # Compute the forward speed (we assume the humanoid is moving along the x-axis)
    forward_speed = velocity[:, @\textcolor{black}{0}@]
    
    # Reward for forward speed (scaled to a smaller range)
    @\textcolor{blue}{speed\_temp = 0.2}@  # Further reduced temperature for better scaling
    speed_reward = forward_speed * speed_temp
    
    # Reward for consistency (encourage maintaining high speed over time)
    @\textcolor{blue}{consistency\_temp = 0.5}@  # Increased temperature for better scaling
    consistency_reward = torch.exp(-consistency_temp * torch.abs(forward_speed - torch.mean(forward_speed)))
    
    # Total reward combines speed reward and consistency reward
    reward = speed_reward + consistency_reward
    
    # Dictionary of individual reward components
    reward_dict = {
        "speed_reward": @\textcolor{red}{speed\_reward}@,
        "consistency_reward": @\textcolor{red}{consistency\_reward}@,
    }
    
    return reward, reward_dict

########################################################################################################

Iteration @\textcolor{black}{3}@:
def @\textcolor{black}{compute\_reward}@(root_states: torch.Tensor, dt: float) -> Tuple[torch.Tensor, Dict[str, torch.Tensor]]:
    # Extract the velocity of the humanoid's torso (root_states[:, 7:10] contains the linear velocity)
    velocity = root_states[:, @\textcolor{black}{7:10}@]
    
    # Compute the forward speed (we assume the humanoid is moving along the x-axis)
    forward_speed = velocity[:, @\textcolor{black}{0}@]
    
    # Reward for forward speed (scaled to a smaller range)
    @\textcolor{blue}{speed\_temp = 0.1}@  # Reduced temperature for better scaling
    speed_reward = forward_speed * speed_temp
    
    # Reward for consistency (encourage maintaining high speed)
    @\textcolor{blue}{consistency\_temp = 0.5}@  # Increased temperature for better scaling
    consistency_reward = torch.exp(-consistency_temp * torch.abs(forward_speed - torch.mean(forward_speed)))
    
    # Total reward combines speed reward and consistency reward
    reward = speed_reward + consistency_reward
    
    # Dictionary of individual reward components
    reward_dict = {
        "speed_reward": @\textcolor{red}{speed\_reward}@,
        "consistency_reward": @\textcolor{red}{consistency\_reward}@,
    }
    
    return reward, reward_dict

########################################################################################################

Iteration @\textcolor{black}{4}@:
def @\textcolor{black}{compute\_reward}@(root_states: torch.Tensor, dt: float) -> Tuple[torch.Tensor, Dict[str, torch.Tensor]]:
    # Extract the velocity of the humanoid's torso (root_states[:, 7:10] contains the linear velocity)
    velocity = root_states[:, @\textcolor{black}{7:10}@]
    
    # Compute the forward speed (we assume the humanoid is moving along the x-axis)
    forward_speed = velocity[:, @\textcolor{black}{0}@]
    
    # Reward for forward speed (scaled to a smaller range)
    @\textcolor{blue}{speed\_temp = 0.1}@  # Further reduced temperature for better scaling
    speed_reward = forward_speed * speed_temp
    
    # Reward for consistency (encourage maintaining high speed over time)
    @\textcolor{blue}{consistency\_temp = 10}@  # Increased temperature for better scaling
    consistency_reward = torch.exp(-consistency_temp * torch.abs(forward_speed - torch.mean(forward_speed)))
    
    # Total reward combines speed reward and consistency reward
    reward = speed_reward + consistency_reward
    
    # Dictionary of individual reward components
    reward_dict = {
        "speed_reward": @\textcolor{red}{speed\_reward}@,
        "consistency_reward": @\textcolor{red}{consistency\_reward}@,
    }
    
    return reward, reward_dict
\end{minted}

\begin{center}
Example 2 (b):  \method reward functions on Humanoid. Iteration 1 (before UABO), HNS: 1.428, Iteration 1 (after UABO), HNS: 2.132; Iteration 2 (before UABO), HNS: 2.481, Iteration 2 (after UABO), HNS: 2.646.
\end{center}
\vspace{-0.84em} 
\begin{minted}[
  fontsize=\scriptsize,
  bgcolor=lightgraybg,
  escapeinside=@@,          
  frame=single,
  breaklines=true,
  breaksymbolleft={},
  linenos
]{python}
Iteration @\textcolor{black}{1}@(before UABO):
def @\textcolor{black}{compute\_reward}@(root_states: torch.Tensor, dt: float) -> Tuple[torch.Tensor, Dict[str, torch.Tensor]]:
    # Extract the velocity of the humanoid's torso from the root states
    velocity = root_states[:, @\textcolor{black}{7:10}@]
    
    # Compute the forward speed (assuming x-axis is the forward direction)
    forward_speed = velocity[:, @\textcolor{black}{0}@]
    
    # Scale up the speed reward to provide stronger feedback
    @\textcolor{blue}{speed\_reward\_temp = 10}@  # Temperature parameter for speed reward
    speed_reward = torch.exp(speed_reward_temp * forward_speed * dt)
    
    # Optional: Add a reward for sustained running
    @\textcolor{blue}{sustained\_speed\_temp = 0.1}@  # Temperature parameter for sustained speed reward
    sustained_speed_reward = torch.exp(sustained_speed_temp * forward_speed * dt)
    
    # Combine the rewards
    total_reward = speed_reward + sustained_speed_reward
    
    # Return the total reward and the individual reward components
    reward_dict = {
        "speed_reward": @\textcolor{red}{speed\_reward}@,
        "sustained_speed_reward": @\textcolor{red}{sustained\_speed\_reward}@
    }
    
    return total_reward, reward_dict

########################################################################################################

Iteration @\textcolor{black}{1}@(after UABO):
def @\textcolor{black}{compute\_reward}@(root_states: torch.Tensor, dt: float) -> Tuple[torch.Tensor, Dict[str, torch.Tensor]]:
    # Extract the velocity of the humanoid's torso from the root states
    velocity = root_states[:, @\textcolor{black}{7:10}@]
    
    # Compute the forward speed (assuming x-axis is the forward direction)
    forward_speed = velocity[:, @\textcolor{black}{0}@]
    
    # Scale up the speed reward to provide stronger feedback
    @\textcolor{blue}{speed\_reward\_temp = 10.4424}@  # Temperature parameter for speed reward
    speed_reward = torch.exp(speed_reward_temp * forward_speed * dt)
    
    # Optional: Add a reward for sustained running
    @\textcolor{blue}{sustained\_speed\_temp = 0.3606}@  # Temperature parameter for sustained speed reward
    sustained_speed_reward = torch.exp(sustained_speed_temp * forward_speed * dt)
    
    # Combine the rewards
    total_reward = speed_reward + sustained_speed_reward
    
    # Return the total reward and the individual reward components
    reward_dict = {
        "speed_reward": @\textcolor{red}{speed\_reward}@,
        "sustained_speed_reward": @\textcolor{red}{sustained\_speed\_reward}@
    }
    
    return total_reward, reward_dict

########################################################################################################

Iteration @\textcolor{black}{2}@(before UABO):
def @\textcolor{black}{compute\_reward}@(root_states: torch.Tensor, dt: float) -> Tuple[torch.Tensor, Dict[str, torch.Tensor]]:
    # Extract the velocity of the humanoid's torso from the root states
    velocity = root_states[:, @\textcolor{black}{7:10}@]
    torso_height = root_states[:, @\textcolor{black}{2}@]  # Torso height (z-axis)
    
    # Compute the forward speed (assuming x-axis is the forward direction)
    forward_speed = velocity[:, 0]
    
    # Apply exponential transformation to the speed reward with a temperature parameter
    @\textcolor{blue}{speed\_temp = 1}@  # Temperature parameter for speed reward
    speed_reward = torch.exp(speed_temp * forward_speed) * dt
    
    # Penalty for falling (torso height below a threshold)
    fall_threshold = @\textcolor{black}{0.8}@  # Minimum torso height to avoid falling
    fall_penalty = @\textcolor{black}{-10.0}@ * (torso_height < fall_threshold).float() * dt
    
    # Combine the rewards
    total_reward = speed_reward + fall_penalty
    
    # Return the total reward and the individual reward components
    reward_dict = {
        "speed_reward": @\textcolor{red}{speed\_reward}@
    }
    
    return total_reward, reward_dict

########################################################################################################

Iteration @\textcolor{black}{2}@(after UABO):
def @\textcolor{black}{compute\_reward}@(root_states: torch.Tensor, dt: float) -> Tuple[torch.Tensor, Dict[str, torch.Tensor]]:
    # Extract the velocity of the humanoid's torso from the root states
    velocity = root_states[:, @\textcolor{black}{7:10}@]
    torso_height = root_states[:, @\textcolor{black}{2}@]  # Torso height (z-axis)
    
    # Compute the forward speed (assuming x-axis is the forward direction)
    forward_speed = velocity[:, @\textcolor{black}{0}@]
    
    # Apply exponential transformation to the speed reward with a temperature parameter
    @\textcolor{blue}{speed\_temp = 0.7111}@  # Temperature parameter for speed reward
    speed_reward = torch.exp(speed_temp * forward_speed) * dt
    
    # Penalty for falling (torso height below a threshold)
    fall_threshold = @\textcolor{black}{0.8}@  # Minimum torso height to avoid falling
    fall_penalty = @\textcolor{black}{-10.0}@ * (torso_height < fall_threshold).float() * dt
    
    # Combine the rewards
    total_reward = speed_reward + fall_penalty
    
    # Return the total reward and the individual reward components
    reward_dict = {
        "speed_reward": @\textcolor{red}{speed\_reward}@
    }
    
    return total_reward, reward_dict
\end{minted}

\subsection{Case Study 2: the Redundancy in Reward Function Samples} 
\label{app:important_for_uncertainty}

\textcolor{black}{
This case study analyzes two representative examples from Eureka within a single iteration, visually demonstrating a major source of computational redundancy. 
Our analysis reveals that Eureka generates multiple semantically equivalent but syntactically varied reward functions within a single iteration, all sharing identical reward intensities.}

For instance, in Example 3, although the two reward functions (Sample $\#9$ and $\#15$) from iteration 1 exhibit different textual expressions (``\textit{forward\_velocity\_reward}'' vs. ``\textit{velocity\_reward}''), their underlying reward objectives and logic are fundamentally identical. This observation suggests that the apparent diversity among Eureka-generated samples may be primarily lexical rather than semantic. Effective filtering of such pseudo-diversity is therefore essential to eliminate redundant and computationally inefficient evaluations. Example 4 is a similar example.

\begin{center}
\textcolor{black}{Example 3: The reward function codes of the task Ant after Iteration 1. The scores of Sample $\#9$ and Sample  $\#15$ are both 2.012.}
\end{center}
\vspace{-0.85em} 
\begin{minted}[
  fontsize=\scriptsize,
  bgcolor=lightgraybg,
  escapeinside=@@,           
  frame=single,
  breaklines=true,
  breaksymbolleft={},
  linenos
]{python}
Iteration @\textcolor{black}{1}@ (sample #9):
def @\textcolor{black}{compute\_reward}@(root_states: torch.Tensor, actions: torch.Tensor) -> Tuple[torch.Tensor, Dict[str, torch.Tensor]]:

    # Define weight parameters
    @\textcolor{blue}{forward\_velocity\_temp: float = 1.0}@
    @\textcolor{blue}{action\_penalty\_temp: float = 0.01}@
    
    # Extract forward velocity (x-axis velocity in the world frame)
    forward_velocity = root_states[:, @\textcolor{black}{7}@]  # Velocity along the x-axis
    
    # Reward for forward velocity
    forward_velocity_reward = forward_velocity * forward_velocity_temp
    
    # Penalty for large actions to encourage energy efficiency
    action_penalty = -torch.sum(torch.square(actions), dim=@\textcolor{black}{-1}@) * action_penalty_temp
    
    # Total reward
    reward = forward_velocity_reward + action_penalty
    
    # Individual reward components
    reward_dict = {
        "forward_velocity_reward": @\textcolor{red}{forward\_velocity\_reward}@,
        "action_penalty": @\textcolor{red}{action\_penalty}@,
    }
    
    return reward, reward_dict

########################################################################################################

Iteration @\textcolor{black}{1}@ (sample #15):
def @\textcolor{black}{compute\_reward}@(root_states: torch.Tensor, actions: torch.Tensor) -> Tuple[torch.Tensor, Dict[str, torch.Tensor]]:

    # Define weight parameters
    @\textcolor{blue}{velocity\_temp: float = 1.0}@
    @\textcolor{blue}{action\_penalty\_temp: float = 0.01}@
    
    # Extract the forward velocity (x-axis velocity)
    forward_velocity = root_states[:, @\textcolor{black}{7}@]  # x-axis velocity is at index 7
    
    # Reward for moving forward fast
    velocity_reward = forward_velocity * velocity_temp
    
    # Penalty for large actions to encourage energy efficiency
    action_penalty = -torch.sum(torch.square(actions), dim=@\textcolor{black}{-1}@) * action_penalty_temp
    
    # Total reward
    total_reward = velocity_reward + action_penalty
    
    # Individual reward components for debugging and analysis
    reward_components = {
        "velocity_reward": @\textcolor{red}{velocity\_reward}@,
        "action_penalty": @\textcolor{red}{action\_penalty}@,
    }
    
    return total_reward, reward_components
\end{minted}

\begin{center}
\textcolor{black}{Example 4: The reward function codes of the task Ant after Iteration 1. The scores of Sample $\#3$ and Sample $\#14$ are both 0.059.} 
\end{center}
\vspace{-0.85em} 
\begin{minted}[
  fontsize=\scriptsize,
  bgcolor=lightgraybg,
  escapeinside=@@,             
  frame=single,
  breaklines=true,
  breaksymbolleft={},
  linenos
]{python}
Iteration @\textcolor{black}{1}@ (sample #3):
def @\textcolor{black}{compute\_reward}@(object_rot: torch.Tensor, goal_rot: torch.Tensor, object_angvel: torch.Tensor) -> Tuple[torch.Tensor, Dict[str, torch.Tensor]]:
    # Temperature parameters for reward components
    @\textcolor{blue}{orientation\_temp: float = 1.0}@
    @\textcolor{blue}{angular\_vel\_temp: float = 0.1}@

    # Compute the difference in orientation using quaternion distance
    quat_diff = quat_mul(object_rot, quat_conjugate(goal_rot))
    orientation_error = torch.norm(quat_diff[:, @\textcolor{black}{1:4}@], dim=@\textcolor{black}{1}@)  # Ignore the scalar part for distance
    orientation_reward = torch.exp(-orientation_temp * orientation_error)

    # Penalize excessive angular velocity
    angular_vel_penalty = torch.exp(-angular_vel_temp * angular_vel_magnitude)

    # Combine rewards
    total_reward = orientation_reward * angular_vel_penalty

    # Return the total reward and individual components for debugging
    reward_components = {
        "orientation_reward": @\textcolor{red}{orientation\_reward}@,
        "angular_vel_penalty": @\textcolor{red}{angular\_vel\_penalty}@
    }
    return total_reward, reward_components

########################################################################################################

Iteration @\textcolor{black}{1}@ (sample #14):
def @\textcolor{black}{compute\_reward}@(object_rot: torch.Tensor, goal_rot: torch.Tensor, object_angvel: torch.Tensor) -> Tuple[torch.Tensor, Dict[str, torch.Tensor]]:
    # Temperature parameters for reward components
    @\textcolor{blue}{orientation\_temp = 1.0}@
    @\textcolor{blue}{angular\_velocity\_temp = 0.1}@

    # Compute the difference in orientation using quaternion distance
    quat_diff = quat_mul(object_rot, quat_conjugate(goal_rot))
    orientation_error = torch.norm(quat_diff[:, @\textcolor{black}{1:4}@], dim=@\textcolor{black}{1}@)  # Ignore the scalar part for distance
    orientation_reward = torch.exp(-orientation_temp * orientation_error)

    # Penalize excessive angular velocity
    angular_velocity_magnitude = torch.norm(object_angvel, dim=@\textcolor{black}{1}@)
    angular_velocity_penalty = torch.exp(-angular_velocity_temp * angular_velocity_magnitude)

    # Combine rewards
    total_reward = orientation_reward * angular_velocity_penalty

    # Return the total reward and individual components for debugging
    reward_components = {
        "orientation_reward": @\textcolor{red}{orientation\_reward}@,
        "angular_velocity_penalty": @\textcolor{red}{angular\_velocity\_penalty}@
    }
    return total_reward, reward_components
\end{minted}

\section{Detailed Results}
\label{app:detailed_desults}

\subsection{Evaluation on Efficiency}

Table~\ref{tab:sota_all_vertical_nohuman} presents the comprehensive evaluation results across all tasks in the three benchmarks. The comparative analysis demonstrates that while achieving comparable SR or HNS to Eureka, \method requires fewer simulation training episodes and LLM invocations in 92\% of the experimental tasks, indicating superior sample efficiency and computational economy. 

\begin{table}[htbp]
  \caption{\method vs. SOTA with efficiency. \method performed best in 92\% of tasks in terms of NOE and NLC (bolded parts in the table).}
  \label{tab:sota_all_vertical_nohuman}
  \centering
  \scriptsize
  \setlength{\tabcolsep}{3pt}
  \begin{tabular}{llccccccccc}
    \toprule
    Benchmark & Environment 
    & \multicolumn{3}{c}{Text2Reward}
    & \multicolumn{3}{c}{Eureka} 
    & \multicolumn{3}{c}{\method} \\
    \cmidrule(lr){3-5} \cmidrule(lr){6-8} \cmidrule(lr){9-11}
    & & HNS & NOE$\downarrow$ & NLC$\downarrow$
      & HNS & NOE$\downarrow$ & NLC$\downarrow$
      & HNS & NOE$\downarrow$ & NLC$\downarrow$ \\
    \midrule
    \multirow{9}{*}{Isaac} 
    & Ant             & 1.543 & 112 & 7 & 1.527   & 112 & 7 & 1.556    & \textbf{48} & \textbf{3} \\
    & Cartpole        & 1 & 16 & 1 & 1     & 16 & 1 & 1      & \textbf{15} & \textbf{1} \\
    & BallBalance     & 1 & 16 & 1 & 1     & 16 & 1 & 1      & \textbf{16} & \textbf{1} \\
    & Quadcopter      & 1.678 & 82 & 6 & 1.667  & 70 & 5 & 1.818  & \textbf{41} & \textbf{2} \\
    & FrankaCabinet   & 16.95 & 95 & 7 & 17    & 97 & 7 & 17.130     & \textbf{57} & \textbf{4} \\
    & Humanoid        & 2.305 & 16 & 1 & 2.306   & 16 & 1 & 2.646     & 52 & 3 \\
    & Anymal          & 1.095 & 87 & 6 & 1.113  & 91 & 6 & 1.2    & \textbf{47} & \textbf{2} \\
    & AllegroHand     & 2.176 & 121 & 8 & 2.162 & 95 & 6 & 2.182  & \textbf{40} & \textbf{4} \\
    & ShadowHand      & 1.805 & 111 & 8 & 1.786   & 105 & 7 & 1.817   & \textbf{33} & \textbf{2} \\
    \midrule
    &  & SR  &NOE$\downarrow$ &NLC$\downarrow$ & SR &NOE$\downarrow$ &NLC$\downarrow$ & SR &NOE$\downarrow$ &NLC$\downarrow$ \\
    \midrule
    \multirow{20}{*}{Dexterity}
    & BlockStack        & 0.67 & 112 & 7 & 0.67   & 112 & 7 & 0.68   & \textbf{53} & \textbf{1} \\
    & HandKettle            & 0.89 & 85 & 6 & 0.89   & 72 & 5 & 0.89    & \textbf{78} & \textbf{5} \\
    & HandDoorCloseOutward & 0.96 & 83 & 6 & 0.9   & 72 & 5 & 0.97   & \textbf{37} & \textbf{2} \\
    & DoorCloseInward   & 1 & 71 & 5 & 1.0    & 78 & 5 & 1.0     & \textbf{34} & \textbf{2} \\
    & SwingCup          & 0.84 & 93 & 7 & 0.84   & 87 & 6 & 0.84    & \textbf{51} & \textbf{3} \\
    & Switch            & 0 & 58 & 5 & 0.0    & 58 & 5 & 0.02    & 76 & \textbf{5} \\
    & TwoCatchUnderarm  & 0 & 62 & 5 & 0.0    & 62 & 5 & 0.0     & \textbf{62} & \textbf{4} \\
    & CatchUnderarm     & 0.72 & 95 & 7 & 0.73   & 89 & 6 & 0.73    & \textbf{67} & \textbf{4} \\
    & CatchAbreast      & 0.66 & 88 & 6 & 0.66   & 83 & 6 & 0.67    & \textbf{54} & \textbf{4} \\
    & DoorOpenInward    & 0.04 & 73 & 5 & 0.04   & 69 & 5 & 0.06    & \textbf{67} & \textbf{4} \\
    & PushBlock         & 0.14 & 97 & 7 & 0.14   & 92 & 6 & 0.15    & \textbf{49} & \textbf{3} \\
    & BottleCap         & 0.88 & 110 & 8 & 0.88   & 96 & 7 & 0.89    & \textbf{67} & \textbf{4} \\
    & ReOrientation     & 0.32 & 75 & 6 & 0.31   & 66 & 5 & 0.33    & \textbf{58} & \textbf{4} \\
    & CatchOver2Underarm& 0.91 & 77 & 6 & 0.9    & 74 & 5 & 0.93    & \textbf{57} & \textbf{3} \\
    & LiftUnderarm      &0.89  & 88 & 6 & 0.89   & 86 & 6 & 0.89    & \textbf{78} & \textbf{4} \\
    & Over              & 0.92 & 82 & 6 & 0.92   & 71 & 5 & 0.92    & \textbf{41} & \textbf{3} \\
    & Pen               & 0.85 & 95 & 7 & 0.85   & 97 & 7 & 0.85    & \textbf{78} & \textbf{4} \\
    & DoorOpenOutward   & 1 & 76 & 5 & 1.0    & 75 & 5 & 1.0     & \textbf{46} & \textbf{3} \\
    & Scissors          & 1 & 76 & 5 & 1.0    & 77 & 5 & 1     & \textbf{44} & \textbf{3} \\
    & GraspAndPlace     & 0.75 & 93 & 6 & 0.75   & 85 & 6 & 0.77    & \textbf{59} & \textbf{4} \\
    \midrule
    &  & SR  &NOE$\downarrow$ &NLC$\downarrow$ & SR &NOE$\downarrow$ &NLC$\downarrow$ & SR &NOE$\downarrow$ &NLC$\downarrow$ \\
    \midrule
    \multirow{6}{*}{ManiSkill2}
    & LiftCube     & 0.906 & 112 & 7 & 0.905   & 96 & 6 & 0.906   & \textbf{15} & \textbf{1} \\
    & PickCube     & 0.879 & 128 & 8 & 0.884   & 112 & 7 & 0.885  & \textbf{20} & \textbf{1} \\
    & TurnFaucet   & 0.799 & 96 & 6 & 0.800   & 96 & 6 & 0.801   & \textbf{34} & \textbf{2} \\
    & OpenCabinetDoor   & 0.865 & 96 & 6 & 0.861   & 96 & 6 & 0.866   & \textbf{31} & \textbf{2} \\
    & OpenCabinetDrawer & 0.633 & 112 & 7 & 0.632    & 96 & 6 & 0.638    & \textbf{43} & \textbf{2} \\
    & PushChair         & 0.657 & 96 & 6 & 0.654   & 96 & 6 & 0.657   & \textbf{51} & \textbf{2} \\
    \bottomrule
  \end{tabular}
\end{table}

\subsection{Evaluation on the Performance of the Reward Function}

Table~\ref{tab:sota_all_vertical} presents a comprehensive performance comparison of different methods across all tasks in three benchmarks. The results demonstrate that \method consistently outperforms the baseline approaches while maintaining comparable or reduced requirements for both simulation training episodes and LLM invocations. Notably, \method achieves superior performance to human-designed reward functions in 89\% of the experimental tasks, highlighting the significant potential of automated reward design methodologies. 

\begin{table}[htbp]
  \scriptsize
  \caption{ \textcolor{black}{Task-wise comparison of \method with other methods. \method outperforms the compared methods on 89\% of the tasks.} } ‌
  \label{tab:sota_all_vertical}
  \centering
  \setlength{\tabcolsep}{3pt}
  \begin{tabular}{llcccccccccccccc}
    \toprule
    Benchmark & Environment 
    & Sparse & Human 
    & \multicolumn{3}{c}{Text2Reward}
    & \multicolumn{3}{c}{Eureka}  
    & \multicolumn{3}{c}{\method} \\
    \cmidrule(lr){3-3} \cmidrule(lr){4-4} 
    \cmidrule(lr){5-7} \cmidrule(lr){8-10} \cmidrule(lr){11-13}
    & & HNS$\uparrow$ & HNS$\uparrow$ 
    & HNS$\uparrow$ & NOE & NLC 
    & HNS$\uparrow$ & NOE & NLC 
    & HNS$\uparrow$ & NOE & NLC \\
    \midrule
    \multirow{9}{*}{Isaac} 
    & Ant           & 0 & 1     & 0.772  & 48 & 3   & 0.828  & 48 & 3   & \textbf{1.556}   & 48 & 3 \\
    & Cartpole      & 0 & 1     & 1      & 15 & 1   & 1      & 15 & 1   & \textbf{1}       & 15 & 1 \\
    & BallBalance   & 0 & 1     & 1      & 16 & 1   & 1      & 16 & 1   & \textbf{1}       & 16 & 1 \\
    & Quadcopter    & 0 & 1     & 1.041   & 41 & 3   & 1.25   & 41 & 3   & \textbf{1.818}   & 41 & 2 \\
    & FrankaCabinet & 0 & 1     & 5.4    & 57 & 4   & 4.8    & 57 & 4   & \textbf{17.130}  & 57 & 4 \\
    & Humanoid      & 0 & 1     & 2.217  & 52 & 4   & 2.306  & 52 & 4   & \textbf{2.646}   & 52 & 3 \\
    & Anymal        & 0 & 1     & 0.317  & 47 & 3   & 0.545  & 47 & 3   & \textbf{1.2}     & 47 & 2 \\
    & AllegroHand   & 0 & 1     & 1.196  & 40 & 3   & 1.594  & 40 & 3   & \textbf{2.182}   & 40 & 4 \\
    & ShadowHand    & 0 & 1     & 1.034  & 33 & 3   & 1.115  & 33 & 3   & \textbf{1.817}   & 33 & 2 \\
    \midrule
    & & SR$\uparrow$ & SR$\uparrow$ 
    & SR$\uparrow$ & NOE & NLC 
    & SR$\uparrow$ & NOE & NLC 
    & SR$\uparrow$ & NOE & NLC \\
    \midrule
    \multirow{20}{*}{Dexterity}
    & BlockStack        & 0 & 0.69  & 0.11   & 53 & 4 & 0.12   & 53 & 4 & 0.679  & 53 & 1 \\
    & Kettle            & 0 & 0.02  & 0.89   & 78 & 5 & 0.89   & 78 & 5 & \textbf{0.89}    & 72 & 5 \\
    & DoorCloseOutward  & 0.15 & 0.06  & 0.57   & 37 & 3 & 0.64   & 37 & 3 & \textbf{0.968}   & 37 & 2 \\
    & DoorCloseInward   & 0 & 1     & 0.74   & 34 & 3 & 0.83   & 34 & 3 & \textbf{1}       & 34 & 2 \\
    & SwingCup          & 0 & 0     & 0.62   & 51 & 4 & 0.53   & 51 & 4 & \textbf{0.84}    & 51 & 3 \\
    & Switch            & 0 & 0     & 0   & 76 & 5 & 0.01   & 76 & 5 & \textbf{0.02}    & 76 & 5 \\
    & TwoCatchUnderarm  & 0 & 0     & 0      & 62 & 5 & 0      & 62 & 5 & \textbf{0}       & 62 & 4 \\
    & CatchUnderarm     & 0 & 0.51  & 0.58   & 67 & 5 & 0.63   & 67 & 5 & \textbf{0.73}    & 67 & 4 \\
    & CatchAbreast      & 0 & 0.37  & 0.27   & 54 & 5 & 0.34   & 54 & 5 & \textbf{0.67}    & 54 & 4 \\
    & DoorOpenInward    & 0 & 0.03  & 0      & 67 & 5 & 0      & 67 & 5 & \textbf{0.06}    & 67 & 4 \\
    & PushBlock         & 0 & 0.01  & 0.05   & 49 & 4 & 0.05   & 49 & 4 & \textbf{0.15}    & 49 & 3 \\
    & BottleCap         & 0.91 & 0.91  & 0.21   & 67 & 5 & 0.25   & 67 & 5 & 0.89    & 67 & 4 \\
    & ReOrientation     & 0.01 & 0.02  & 0.25   & 58 & 4 & 0.28   & 58 & 4 & \textbf{0.33}    & 58 & 4 \\
    & CatchOver2Underarm& 0 & 0.87  & 0.81   & 57 & 4 & 0.81   & 57 & 4 & \textbf{0.93}    & 57 & 3 \\
    & LiftUnderarm      & 0 & 0.37  & 0.83   & 78 & 6 & 0.85   & 78 & 6 & \textbf{0.89}    & 78 & 4 \\
    & Over              & 0 & 0.9   & 0.54   & 41 & 4 & 0.61   & 41 & 4 & \textbf{0.92}    & 41 & 3 \\
    & Pen               & 0.01 & 0.74  & 0.67   & 78 & 6 & 0.63   & 78 & 6 & \textbf{0.85}    & 78 & 4 \\
    & DoorOpenOutward   & 0.02 & 0.85  & 0.76   & 46 & 3 & 0.87   & 46 & 3 & \textbf{1}       & 46 & 3 \\
    & Scissors          & 0.99 & 0.96  & 0.73   & 44 & 3 & 0.69   & 44 & 3 & \textbf{1}     & 44 & 3 \\
    & GraspAndPlace     & 0 & 0.87  & 0.41   & 59 & 4 & 0.43   & 59 & 4 & 0.77    & 59 & 4 \\
    \midrule
    & & SR$\uparrow$ & SR$\uparrow$ 
    & SR$\uparrow$ & NOE & NLC 
    & SR$\uparrow$ & NOE & NLC 
    & SR$\uparrow$ & NOE & NLC \\
    \midrule
    \multirow{6}{*}{ManiSkill2}
    & LiftCube          & 0.143 & 0.543  & 0.531  & 15 & 1 & 0.356  & 15 & 1 & \textbf{0.906}   & 15 & 1 \\
    & PickCube          & 0.131 & 0.479  & 0.497  & 20 & 2 & 0.434  & 20 & 2 & \textbf{0.885}   & 20 & 1 \\
    & TurnFaucet        & 0 & 0.598  & 0.631  & 34 & 3 & 0.516  & 34 & 3 & \textbf{0.801}   & 34 & 2 \\
    & OpenCabinetDoor   & 0.028 & 0.651  & 0.713  & 31 & 2 & 0.575  & 31 & 2 & \textbf{0.866}   & 31 & 2 \\
    & OpenCabinetDrawer & 0 & 0.37   & 0.519  & 43 & 3 & 0.478  & 43 & 3 & \textbf{0.64}    & 43 & 2 \\
    & PushChair         & 0 & 0.334  & 0.432  & 51 & 4 & 0.336  & 51 & 4 & \textbf{0.657}   & 51 & 2 \\
    \bottomrule
  \end{tabular}
\end{table}

\section{Proofs}
\label{app:proofs}

\subsection{Determination of the Kernel Function}

\begin{theorem}
    The kernel function \eqref{kernel-new} satisfied the properties of symmetry and positive semi-definiteness.
\end{theorem}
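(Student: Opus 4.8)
The plan is to reduce the modified kernel $\tilde{k}$ of \eqref{kernel-new} to the standard isotropic Mat\'ern kernel of \eqref{matern2.5} precomposed with a fixed linear map, so that the classical positive semi-definiteness of the Mat\'ern family transfers directly. I would dispose of symmetry first: the anisotropic distance $d_{\text{u}}$ in \eqref{dist-new} is built entirely from squared coordinate differences $(x_k - x_k')^2$, which are invariant under exchanging $p$ and $p'$, so $d_{\text{u}}(p,p') = d_{\text{u}}(p',p)$; since $\tilde{k}$ depends on its arguments only through $d_{\text{u}}$, symmetry follows immediately.

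For positive semi-definiteness, the key observation is that $d_{\text{u}}$ is an ordinary Euclidean distance expressed in rescaled coordinates. Setting $A = \operatorname{diag}\!\big(1/U(r_{i,1}), \dots, 1/U(r_{i,m})\big)$, which is well-defined whenever all component uncertainties are strictly positive, we have $d_{\text{u}}(p,p') = \lVert A p - A p' \rVert_2$. Substituting into \eqref{kernel-new}, $\tilde{k}(p,p')$ becomes exactly the standard Mat\'ern profile $f_\nu$ with length scale set to $U(R_i)$ evaluated at the Euclidean distance between the transformed points $Ap$ and $Ap'$; that is, $\tilde{k}(p,p') = k_0(Ap, Ap')$ for the standard Mat\'ern kernel $k_0$. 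I would then invoke the classical result that the isotropic Mat\'ern kernel with positive length scale is positive semi-definite on $\R^m$ (provable via Bochner's theorem, its spectral density being the everywhere-positive multivariate Student-$t$ density). The proof concludes with the transformation step: for any finite points $p_1,\dots,p_N$ and coefficients $c_1,\dots,c_N$,
\begin{equation*}
\sum_{s,t} c_s c_t\, \tilde{k}(p_s,p_t) = \sum_{s,t} c_s c_t\, k_0(Ap_s, Ap_t) \ge 0,
\end{equation*}
since the right-hand side is the Mat\'ern Gram form evaluated at $\{Ap_s\}_{s=1}^N$, which is nonnegative by the positive semi-definiteness of $k_0$. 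Note that this transfer holds for \emph{any} map $A$, so invertibility is not needed here; only well-definedness of $d_{\text{u}}$ is.

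The step I expect to demand the most care is not the algebra but the well-definedness boundary: the whole reduction rests on $U(r_{i,k}) > 0$ for every dimension $k$ and on $U(R_i) > 0$, so I would state these as standing assumptions and flag the degenerate case of a vanishing component uncertainty separately (e.g.\ by imposing a small floor), since there the coordinate rescaling $1/U(r_{i,k})$ is undefined and the argument collapses. Leaning on the positive semi-definiteness of the base Mat\'ern kernel as a cited fact keeps the argument compact; if a self-contained proof were required, the fallback would be to compute the Fourier transform of $f_\nu$ directly and verify its non-negativity, but the linear-transformation route is cleaner and is the one I would present.
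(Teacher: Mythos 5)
Your proposal is correct and follows essentially the same route as the paper's own proof: symmetry is immediate from the form of $d_{\text{u}}$, and positive semi-definiteness is obtained by recognizing $d_{\text{u}}$ as the Euclidean distance after a diagonal coordinate rescaling, so that the Gram matrix of $\tilde{k}$ coincides with a standard Mat\'ern Gram matrix on the transformed points. Your additional attention to the standing assumption $U(r_{i,k})>0$ (and the observation that invertibility of the rescaling is not needed) is a small but worthwhile refinement that the paper's proof leaves implicit.
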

\begin{proof}
    The property of symmetry is obvious. Now we prove the positive semi-definiteness based on the properties of Matern kernel \eqref{matern2.5}. For any given finite set of sample points $\Tilde{p}^{(1)},\Tilde{p}^{(2)},\cdots,\Tilde{p}^{(n)}$, we denote the corresponding kernel matrix as 
    \begin{equation}
        \Tilde{K}_{ij} = \Tilde{k}(\Tilde{p}^{(i)},\Tilde{p}^{(j)}).
    \end{equation}
    By performing coordinate scaling transformation on the sample points, we obtain new sample points 
    \begin{equation}
        p^{(i)} = (\Tilde{p}^{(i)}_1/l_1, \cdots, \Tilde{p}^{(i)}_d/l_d),\: i = 1,2,\cdots,n.
    \end{equation}
    And the Matern kernel matrix is 
    \begin{equation}\label{mtr-martern}
        K_{ij} = k(p^{(i)},p^{(j)})
    \end{equation}
    Since Matern kernel \eqref{matern2.5} is positive semi-definite, the kernel matrix \eqref{mtr-martern} constructed from the transformed points with Matern kernel is positive semi-definite. Furthermore, the kernel matrix of \eqref{kernel-new} is essentially equivalent to the Martern kernel matrix \eqref{mtr-martern} computed on the transformed sample points, i.e.
    \begin{align}
        & r_{\text{new}}(\Tilde{p}^{(i)},\Tilde{p}^{(j)}) = r(p^{(i)},p^{(j)})\\
        &\Tilde{K}_{ij} 
        = \Tilde{k}(\Tilde{p}^{(i)},\Tilde{p}^{(j)})
        = f_{\nu}(r_{\text{new}}) = f_{\nu}(r) = K_{ij}.
    \end{align}
    Therefore, $\Tilde{K}$ is positive semi-definite.
\end{proof}
In the newly defined weighted distance \eqref{dist-new}, a larger $l_i$ in directions with more rapid variations can increase the possibility of exploration, while a smaller $l_i$ in directions with smoother variations will reduce exploration and emphasize the exploitation of information from previously sampled points.

\subsection{Convergence analysis of the Uncertainty-accelerated Expected Improvement (uEI)}

The basic definitions and theorems have been defined to analyze the convergence rate of Bayesian optimization~\cite{bull2011convergence}. 
Here, we briefly restate some of the key definitions required. 
Let $\mathcal{X} \subset \mathbb{R}^d$ be a compact set with non-empty interior. For a function $f : \mathcal{X} \to \mathbb{R}$ to be minimized, $K_\theta$ is the correlation kernel for function $f$ prior distribution $\pi$ with length-scales $\theta$. $\mathcal{H}_\theta(\mathcal{X})$ is the reproducing-kernel Hilbert space of $K_\theta$ on $\mathcal{X}$. Let $\mathbb{P}^u_f$ and $\mathbb{E}^u_f$ denote the probability and expectation operators when minimizing the fixed function $f$ using strategy $u$. The loss suffered over the ball $B_R$ in
$\mathcal{H}_\theta(\mathcal{X})$ after $n$ steps by a strategy $u$ is defined as,
\begin{equation}
L_n(u, \mathcal{H}_\theta(\mathcal{X}), R) := \sup_{\substack{f \in \mathcal{H}_\theta(\mathcal{X}) \\ \|f\|_{\mathcal{H}_\theta(\mathcal{X})} \leq R}} \mathbb{E}^u_f \left[ f(x^*_n) - \min f \right]
\end{equation}
where $x^*_n$ is the estimated minimum of $f$. \par
It is proved that the strategy expected improvement converges at least at rate $n^{-(\nu \wedge 1)/d}$, up to logarithmic factors, where $\nu$ is the parameter in Matern kernel~\cite{bull2011convergence}.
\begin{theorem}
    Assume that the function $f$ depends only on $m$ input variables, $m<d$, and remains constant along the other $d-m$ directions. Under such an assumption, with an appropriate choice of weighted parameters, the Uncertainty-accelerated Expected Improvement converges at least at rate $n^{-(\nu \wedge 1)/m}$, up to logarithmic factors, where $\nu$ is the parameter in Matern kernel.
\end{theorem}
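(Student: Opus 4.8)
The plan is to reduce the $d$-dimensional optimization to an effectively $m$-dimensional one and then invoke the convergence rate of \cite{bull2011convergence} with the ambient dimension $d$ replaced by the active dimension $m$. The assumption that $f$ depends on only $m$ coordinates means we may write $f(x)=g(x_1,\dots,x_m)$ for some $g:\mathbb{R}^m\to\mathbb{R}$, constant along the remaining $d-m$ directions. The key is that the anisotropic distance in \eqref{dist-new} lets us assign the per-coordinate length scales so that the inactive directions are treated as maximally smooth, collapsing the surrogate onto the $m$ active coordinates.

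First I would make this reduction precise at the level of the kernel. The uncertainty scores in \eqref{dist-new} play the role of per-coordinate length scales; choosing them (this is the ``appropriate choice of weighted parameters'') so that the $d-m$ inactive length scales grow without bound, the weighted distance tends to $\big(\sum_{i=1}^m (x_i-x_i')^2/\ell_i^2\big)^{1/2}$ and becomes insensitive to the inactive coordinates. Consequently the kernel in \eqref{kernel-new} converges to an $m$-dimensional Matern kernel acting only on the active block. I would then establish an RKHS isometry: the limiting degenerate kernel on $\mathcal{X}\subset\mathbb{R}^d$ has reproducing-kernel Hilbert space consisting exactly of functions $x\mapsto g(x_1,\dots,x_m)$ with $g\in\mathcal{H}^{(m)}_\theta$, and $\|f\|_{\mathcal{H}_\theta(\mathcal{X})}=\|g\|_{\mathcal{H}^{(m)}_\theta}$. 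Since the target $f$ has exactly this form, it lies in the corresponding $m$-dimensional ball of the same radius $R$, so the loss $L_n$ is governed by the $m$-dimensional geometry.

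Next I would verify that the uEI iterates realize this reduction at the level of the algorithm. Once the kernel is degenerate, the posterior mean and variance depend on the data only through the active coordinates, so the acquisition maximizer equals the incumbent $\theta^\star$ in the inactive directions; this is reinforced by the penalty $w(\theta)=\exp(-\sum_j U(r_j)(\theta_j-\theta_j^\star)^2)$, which under the same choice of weights assigns large penalties to displacement along inactive directions and negligible penalties along active ones. I would show that on the active block $w(\theta)$ is bounded below by a positive constant over the relevant search region, so that $\mathrm{uEI}=\mathrm{EI}\cdot w$ is sandwiched between constant multiples of ordinary EI there. This sandwiching lets me import the exploration guarantee underlying Bull's analysis on the $m$ active coordinates, after which the rate theorem of \cite{bull2011convergence} in dimension $m$ gives $L_n=O\big(n^{-(\nu\wedge 1)/m}(\log n)^{\beta}\big)$ for a suitable exponent $\beta$.

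The main obstacle is the acquisition step rather than the kernel reduction. Bull's rate hinges on EI exploring densely enough that the posterior variance at the queried points decays at the metric-entropy rate of the RKHS ball, and multiplying EI by $w(\theta)$ distorts this balance, potentially starving exploration in the active directions. The delicate point is to choose the weights so that $w$ suppresses only the inactive directions---forcing the effective covering number to scale as $\epsilon^{-m}$ rather than $\epsilon^{-d}$---while staying uniformly bounded away from zero on the active block, so that the near-optimality of uEI relative to EI, and hence the per-step decrease estimates driving Bull's recursion, survive up to constants. Tracking the logarithmic factors and separating the cases $\nu\wedge 1=\nu$ and $\nu\wedge 1=1$ through this comparison are the remaining technical details.
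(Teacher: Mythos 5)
Your proposal is correct and follows essentially the same route as the paper: choose the per-coordinate weights so that the inactive directions are frozen (the paper sets the penalty weights to $0$ on the $m$ active coordinates and $\infty$ on the rest, a special case of your ``bounded below on the active block, suppressing only the inactive directions''), reducing the effective sampling dimension from $d$ to $m$, and then invoke the rate of \cite{bull2011convergence} with $m$ in place of $d$. Your write-up is in fact more careful than the paper's, which does not spell out the RKHS correspondence or the comparison between $\mathrm{uEI}$ and plain $\mathrm{EI}$ on the active subspace that you correctly identify as the delicate step.
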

\begin{proof}
    From the proof of expected improvement convergence rate~\cite{bull2011convergence}, we observe that the parameter $d$ in convergence rate estimation is actually the dimensionality of the sampling space. And the conclusion holds based on the condition that $\left\{x_n \right\}$ is a quasi-uniform sequence in a region of interest~\cite{Narcowich2003}. Without loss of generality, let us assume that the function $f$ depends on dimensions $i_1$ to $i_m$, and is invariant with respect to dimensions $i_{m+1}$ to $i_d$. For the uEI strategy, let 
    \begin{equation}
        \lambda_j = \begin{cases}
            0,\:&j=1,\cdots,m\\
            \infty,\: &j=m+1,\cdots,d
        \end{cases}
    \end{equation}
    Consequently, any exploration in the directions of dimensions $i_{m+1}$ to $i_d$ will be discouraged. The effective dimensionality of the sampling space decreases from $d$ to $m$, which leads to an improved convergence rate of at least $n^{-(\nu \wedge 1)/m}$.
\end{proof}
Although our theorem has focused on the limiting case in which $f$ is entirely independent of certain directions, it illustrates how applying weighted constraints allows for dimension-specific treatment within the sampling space, thus enhancing the efficiency of the algorithm.

\section{Additional Analysis}
\label{app:additional_analysis}

\subsection{Uncertainty and Reward Shaping}
\label{app:reward_shaping}

\textcolor{black}{To validate the role of high-uncertainty reward components, we conducted ablation studies by removing these components from the reward function. Figure~\ref{fig:multiI} presents comparative cases between the original reward functions ($R$) and its ablated counterparts ($R$ w.o. $r_{u\uparrow}$). Our analysis reveals two key findings: (1) the removal of high-uncertainty components leads to significant performance degradation, with respective decreases of 19\%, 83\%, and 51\% in HNS/SR metrics; and (2) reward functions retaining these components demonstrate accelerated discovery of critical states during early RL training phases, effectively reducing inefficient exploration. These results collectively demonstrate the crucial function of high-uncertainty components in both final performance and training efficiency. These results suggest that high-uncertainty reward components contribute positively to reward shaping and play an essential role in guiding effective policy learning. } 

\begin{figure}[htbp]
  \centering
  \captionsetup[subfigure]{aboveskip=0pt, belowskip=-6pt}
  
  \begin{subfigure}[b]{0.31\textwidth}
    \includegraphics[width=\linewidth]{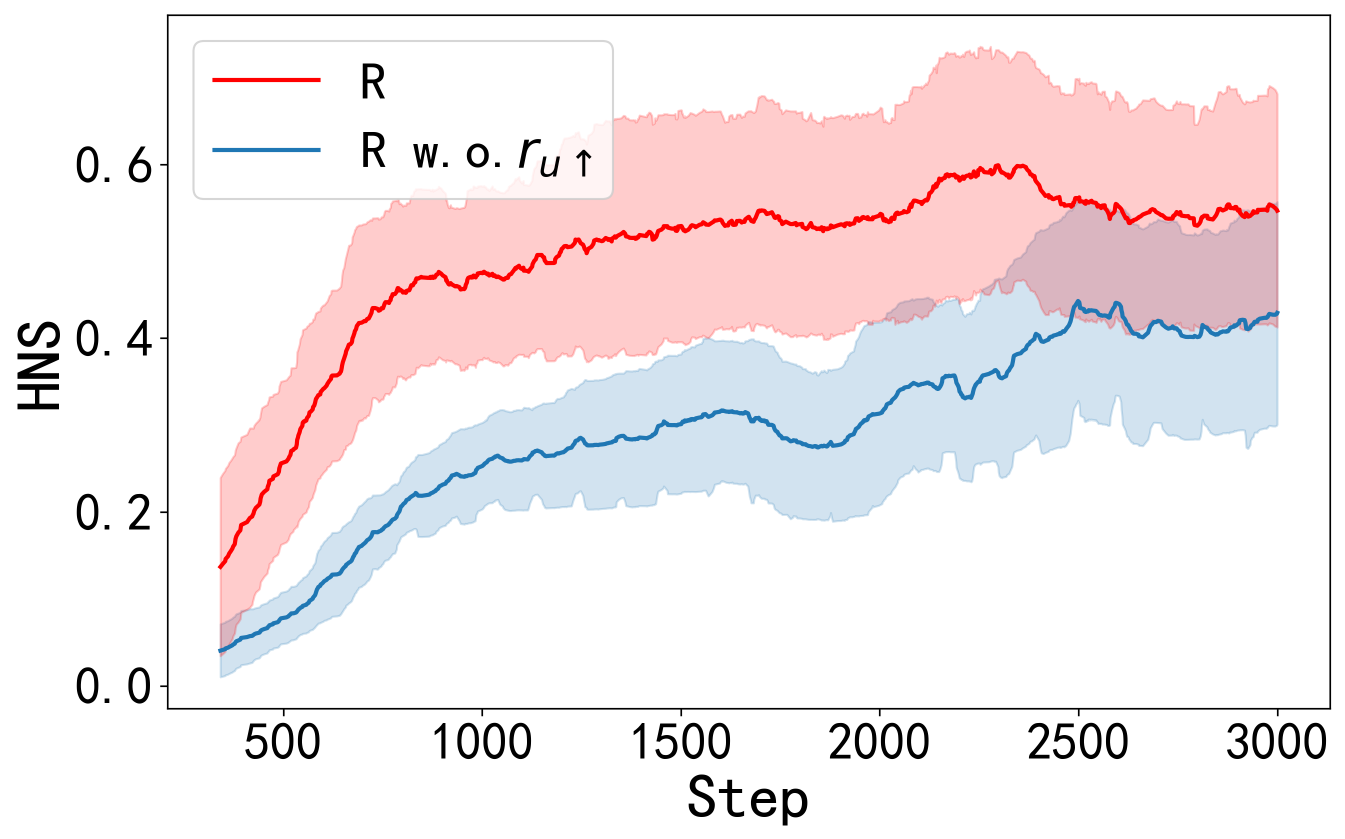}
    \caption{Anymal}
    \label{fig:subI1}
  \end{subfigure}
  \hspace{0.01\textwidth}
  \begin{subfigure}[b]{0.31\textwidth}
    \includegraphics[width=\linewidth]{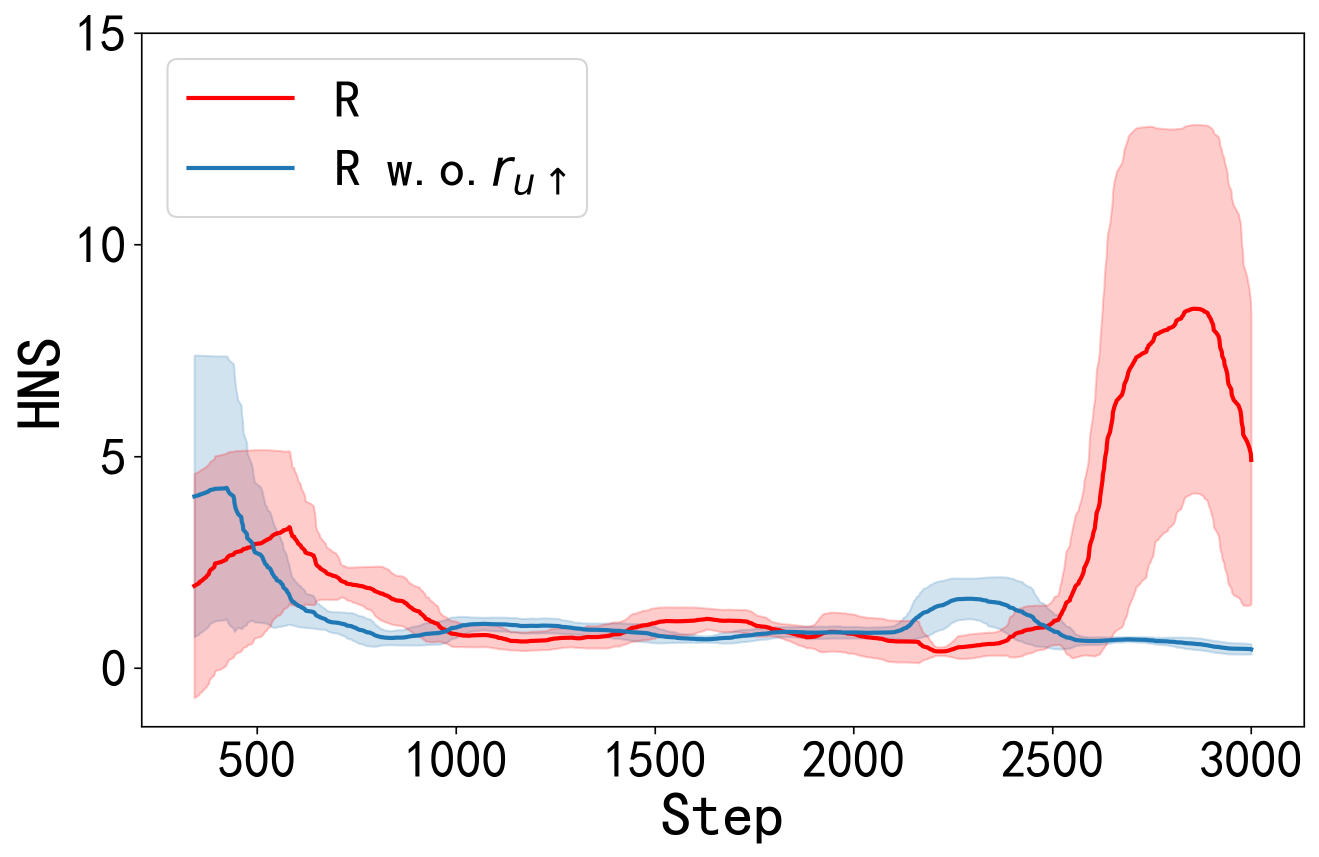}
    \caption{FrankaCabinet}
    \label{fig:subI2}
  \end{subfigure}
  \hspace{0.01\textwidth}
  \begin{subfigure}[b]{0.31\textwidth}
    \includegraphics[width=\linewidth]{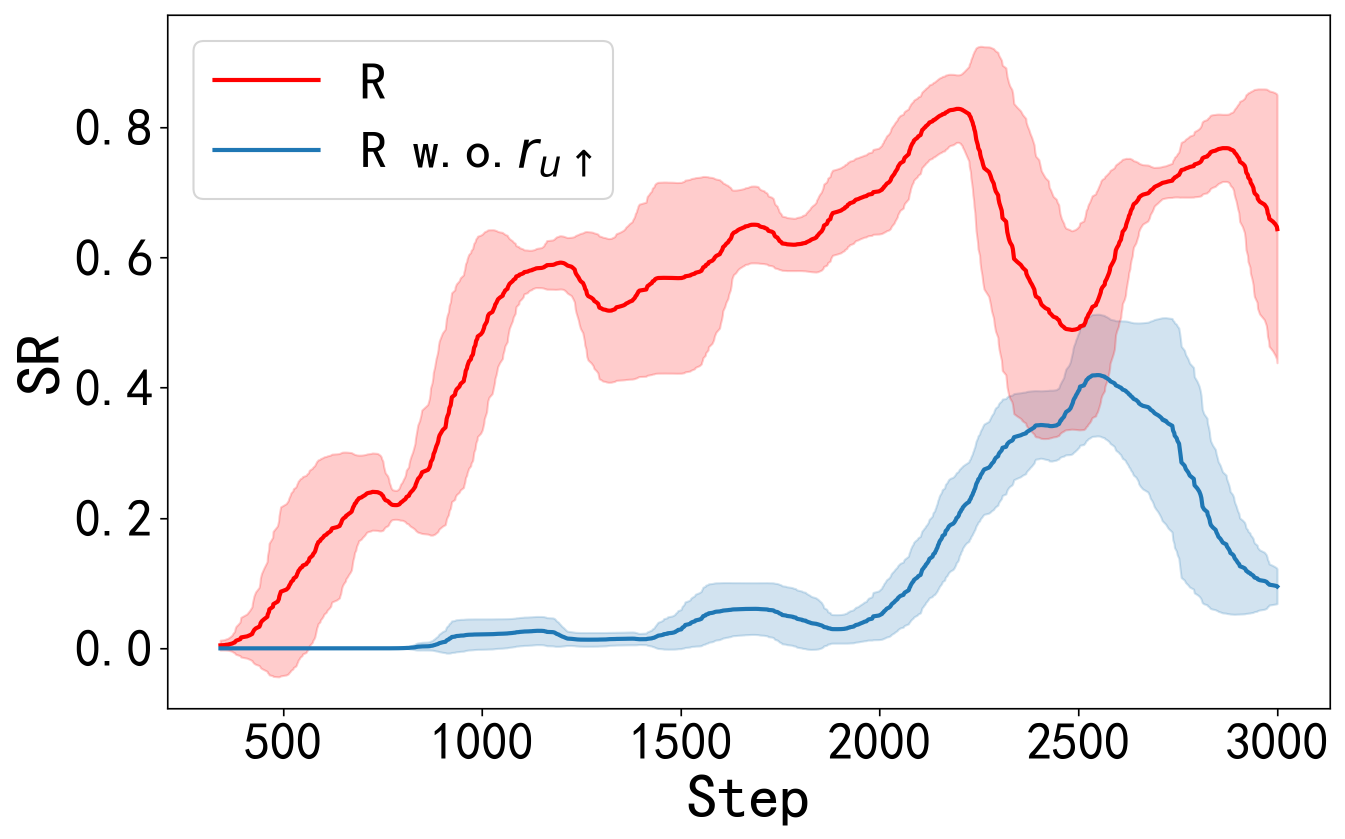}
    \caption{LiftUnderarm}
    \label{fig:subI3}
  \end{subfigure}
  \caption{\textcolor{black}{The comparison between $R$ and $R$ w.o.$r_{u\uparrow}$ suggests that the high-uncertainty reward components ($r_{u\uparrow}$) contributes to reward shaping during the policy learning.} }  
  \label{fig:multiI}
\end{figure}

\subsection{LLM Alternatives}
\label{app:llm}

\textbf{\method with Qwen2.5}. In Fig.~\ref{fig:urdp_vs_qwen}, we compare the performance of \method with DeepSeek-v3-241226 (the results reported in the paper) and \method with Qwen2.5 (qwen-max-0919)~\cite{qwen2025qwen25technicalreport}. These results demonstrate the consistency of the effect of \method on different LLMs and eliminate concerns that the differences in the capabilities of LLMs themselves may affect the results.  

\begin{figure}[htbp]
  \centering
  \begin{subfigure}[b]{\linewidth}
    \centering
    \includegraphics[width=\linewidth]{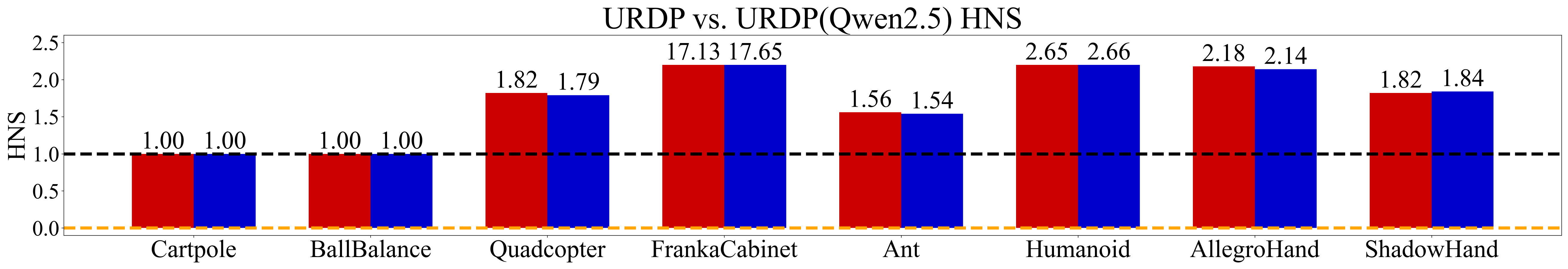}
  \end{subfigure}

  \vspace{-3pt} 
  
  \begin{subfigure}[b]{0.5\linewidth}
    \centering
    \includegraphics[width=\linewidth]{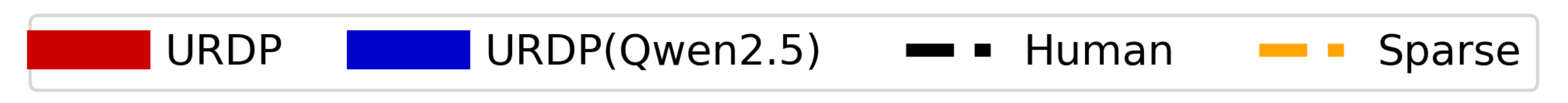}
  \end{subfigure}
  
  \caption{\textcolor{black}{\method demonstrates consistent performance across different LLMs.}  }
  \label{fig:urdp_vs_qwen}
\end{figure}

\section{Limitation and Discussion}

\textcolor{black}{
In this work, we investigate efficient automated reward design methodologies based on large language models (LLMs). However, constrained by inherent limitations of LLMs in spatial reasoning capabilities, our approach, like other comparable methods, faces challenges in addressing scenario-specific constraints during reward formulation. A representative case emerges in ``grasping'' tasks where environmental obstacles may restrict robotic manipulation paths, constraints that should ideally be reflected in reward design. While providing detailed environmental descriptions in prompts may partially mitigate this issue, a more fundamental solution would involve integrating video-language models (VLMs) into the reward design framework. VLMs demonstrate superior spatial perception capabilities that could enrich the understanding of RL task objectives, environmental constraints, and reward composition. Nevertheless, incorporating VLMs introduces new challenges regarding computational scalability during reward design and tuning processes. We therefore identify this as a critical yet underexplored research direction worthy of systematic investigation.}

\textcolor{black}{
To maintain simplicity in presenting our work, we employ the base capabilities of large language models without sophisticated inference-time enhancement techniques (e.g., chain-of-thought, test-time training). However, advanced reasoning techniques have demonstrated significant improvements in handling complex logical tasks, as evidenced in code generation and mathematical reasoning domains. We posit these methods would similarly enhance reward function code design. Ultimately, substantial exploration potential remains for large language model techniques in automated reward design.}

\section{Author contributions}
Miao Xin and Xiaolu Zhou contributed to the conception and design of the approach. Yang Yang and Bosong Ding implemented the code and performed the experiments. Miao Xin organized the entire research program and wrote the paper. All authors read and approved the submitted version.

\end{document}